\documentclass{article}

\usepackage{amsmath,amsfonts,bm}

\def\eqref#1{equation~\ref{#1}}

\def\1{\bm{1}}

\def\vg{{\bm{g}}}

\def\vu{{\bm{u}}}

\def\vx{{\bm{x}}}
\def\vy{{\bm{y}}}
\def\vz{{\bm{z}}}

\def\mA{{\bm{A}}}
\def\mB{{\bm{B}}}

\def\mZ{{\bm{Z}}}

\DeclareMathAlphabet{\mathsfit}{\encodingdefault}{\sfdefault}{m}{sl}
\SetMathAlphabet{\mathsfit}{bold}{\encodingdefault}{\sfdefault}{bx}{n}

\def\gF{{\mathcal{F}}}

\def\gL{{\mathcal{L}}}

\def\gN{{\mathcal{N}}}

\def\gS{{\mathcal{S}}}

\def\sR{{\mathbb{R}}}

\def\sZ{{\mathbb{Z}}}

\newcommand{\E}{\mathbb{E}}

\newcommand{\R}{\mathbb{R}}

\DeclareMathOperator*{\argmin}{arg\,min}

\usepackage{microtype}
\usepackage{graphicx}
\usepackage{subcaption}
\usepackage{booktabs} 
\usepackage{hyperref}

\usepackage{booktabs} 
\usepackage{caption}
\usepackage{amsmath,mathtools}

\usepackage{titletoc}

\usepackage{tabularx}

\usepackage{multirow}

\usepackage{pbox}

\usepackage{arydshln}
\usepackage{xcolor}
\definecolor{darkgreen}{RGB}{0,176,79}

\usepackage{pifont}
\newcommand{\cmark}{\ding{51}} 
\newcommand{\xmark}{\ding{55}} 

\usepackage{makecell}
\newcolumntype{Y}{>{\centering\arraybackslash}X} 

\usepackage{placeins} 

\newcommand{\qedbox}{\hfill\ensuremath{\square}}

\usepackage{amsmath}
\usepackage{amssymb}
\usepackage{mathtools}
\usepackage{amsthm}
\usepackage{enumitem}
\usepackage{arydshln}

\usepackage[capitalize,noabbrev]{cleveref}

\theoremstyle{plain}
\newtheorem{theorem}{Theorem}          
\newtheorem{proposition}{Proposition}  
\newtheorem{lemma}{Lemma}              
\newtheorem{corollary}{Corollary}      

\theoremstyle{definition}
\newtheorem{assumption}{Assumption}    

\theoremstyle{remark}
\newtheorem{remark}{Remark}            

\usepackage[accepted]{icml2026}

\usepackage{amsmath}
\usepackage{amssymb}
\usepackage{mathtools}
\usepackage{amsthm}

\usepackage[capitalize,noabbrev]{cleveref}
\icmltitlerunning{Zeroth-Order Non-Log-Concave Sampling with Variance Reduction and Applications to Inverse Problems}

\begin{document}

\twocolumn[
  \icmltitle{Zeroth-Order Non-Log-Concave Sampling with Variance Reduction and Applications to Inverse Problems}



  \icmlsetsymbol{equal}{*}

  \begin{icmlauthorlist}
    \icmlauthor{M. Berk Sahin}{ece}
    \icmlauthor{Behzad Sharif}{ece,bme}
    \icmlauthor{Abolfazl Hashemi}{ece}
  \end{icmlauthorlist}

  \icmlaffiliation{ece}{Elmore School of Electrical and Computer Engineering, Purdue University, West Lafayette, USA}
  \icmlaffiliation{bme}{Weldon School of Biomedical Engineering, Purdue University, West Lafayette, USA}

  \icmlcorrespondingauthor{M. Berk Sahin}{sahinm@purdue.edu}

  \icmlkeywords{Machine Learning, ICML}

  \vskip 0.3in
]



\printAffiliationsAndNotice{}  

\begin{abstract}
Sampling from high-dimensional, non-log-concave distributions with unnormalized densities remains a fundamental challenge in machine learning, particularly in black-box settings where gradient information is inaccessible or computationally prohibitive. While Langevin dynamics provides a principled framework for sampling when gradients are accessible, its extension to the black-box settings suffers from high variance and lacks non-asymptotic convergence guarantees for non-log-concave sampling. To address these limitations, we propose a variance-reduced zeroth-order Langevin sampling method. Our method employs a gradient estimator that substantially reduces the variance of the classical batched zeroth-order estimator and eliminates the unfavorable dimensional dependence of the batch size required for accurate estimation, enabling practical and stable sampling. We establish the first non-asymptotic convergence guarantees for zeroth-order non-log-concave sampling in terms of $\varepsilon$-relative Fisher information, and, under a Poincar\'e inequality assumption, squared total variation distance. We further propose \textsc{ZO-APMC}, a posterior sampling algorithm for black-box inverse problems with pre-trained score-based generative priors, establishing the first non-asymptotic convergence guarantees for such methods. We validate our theory through synthetic experiments and demonstrate strong empirical performance on practical linear and nonlinear inverse problems.
\end{abstract}

\section{Introduction}\label{sec:introduction}

We study the problem of sampling from a distribution $\pi\propto \mathrm{exp}(-f)$ with potential function $f:\sR^d \rightarrow \sR$ when we only have access to zeroth-order (ZO) evaluations of the potential function. This problem is of fundamental importance when gradients are unavailable or prohibitively expensive, and has been investigated by several recent works~\citep{liuone, roy2022stochastic, he2024zeroth}. In the special case where $f$ is strongly log-concave and has Lipschitz continuous gradient, this problem is well understood; for instance, ~\citet{roy2022stochastic} establish non-asymptotic complexity bounds for \textit{Langevin Monte Carlo (LMC)} sampling. In contrast, to the best of our knowledge, the non-log-concave setting remains largely unexplored. Our \textbf{first main contribution} is to take an initial step toward a theory of non-log-concave ZO sampling by proposing a novel ZO estimator that enables sampling from non-log-concave distributions. The main challenge is that standard ZO estimators based on finite-difference evaluations along random Gaussian directions exhibit high variance. Controlling this variance typically requires batch size scaling as \( \mathcal{O}(d) \) with the dimension $d$ of the sampling variable, leading to substantial function evaluations and memory costs in high-dimensional settings. To mitigate this issue, we propose a novel variance-reduced ZO estimator that uses only \( \mathcal{O}(1) \) number of function evaluations per iteration, making the batch size independent of the ambient dimension and substantially reducing the associated memory costs. Our theoretical analysis builds on the LMC framework with gradient access developed by~\citet{balasubramanian2022towards}, and is based on a sampling analogue of \emph{stationary point analysis}, a technique that has shown to be highly effective in non-convex optimization~\citep{nesterov2018lectures}.

Beyond the foundational sampling setting, we further extend our method and theoretical analysis to posterior sampling for solving ill-posed inverse problems using score-based generative model (SGM) priors in black-box settings, where privileged information about the forward model such as its derivative, pseudo-inverse~\citep{song2023pseudoinverse}, or its parametrization~\citep{chung2023parallel} is unavailable or computationally prohibitive. Such scenarios arise in a wide range of applications: the forward operator may be defined through large PDE-based simulators whose derivatives or pseudo-inverses are typically inaccessible or undefined \citep{evensen1996assimilation, oliver2008inverse, iglesias2013ensemble}; simulators may rely on legacy code that cannot be adapted to modern auto-differentiation frameworks~\citep{harbaugh2000us}; the forward model may be a proprietary system (closed-source), as in commercial MRI scanners~\citep{karakuzu2025rethinking}; the underlying physics may involve discontinuities~\citep{moes1999finite, tan2018extended, lopez2022training}; or the forward model may be implemented as a rule-based expert system~\citep{rotshtein2012inverse, huang2024symbolic, gong2025interpretability}.

In such ill-posed inverse problem settings, the choice of a flexible and expressive prior is crucial. SGMs have recently emerged as powerful \textit{plug-and-play} priors due to their ability to model high-dimensional non-log-concave data distributions, while remaining applicable across a wide range of inverse problems without re-training. They have demonstrated a strong empirical performance across a wide range of applications, including image restoration \citep{wang2023zero, rout2023solving}, medical imaging \citep{song2022solving, sun2024provable}, and music generation \citep{rout2024rb}, where access to gradients of the forward operator is typically assumed. More recently, SGMs have been adopted to develop black-box posterior sampling methods for inverse problems where access to the likelihood score is unavailable, showing promising empirical performance~\citep{tang2024solving, huang2024symbolic, zheng2024ensemble}. However, these approaches rely on heuristic approximations and currently lack rigorous convergence guarantees to the target posterior under standard probabilistic discrepancy measures such as Fisher information (FI) or total variation (TV) distance. In fact, rigorous guarantees remain rare even for posterior sampling methods with access to the likelihood score; when available, they typically rely on restrictive assumptions such as linear forward operators, which are often violated in practice~\citep{daras2024surveydiffusionmodelsinverse}. A more detailed discussion of both gradient-based and black-box posterior sampling methods is provided in Appendix~\ref{appendix:related-work}, along with a complementary comparison to the proposed approach in Table~\ref{tab_supp:comparison}.

The \textbf{second main contribution} of this work is the development of a theoretically grounded \textit{plug-and-play Monte Carlo (PMC)} method for solving black-box inverse problems using only forward model evaluations and a pre-trained SGM prior. We position this as an important step toward posterior sampling in black-box settings that offers an algorithm with formal convergence guarantees and a solid foundation for future advances.  We encounter two key challenges in designing a practical ZO posterior sampling algorithm: standard LMC often converge slowly, and ZO estimates require batch sizes that scale with the problem dimension, leading to prohibitive computational and memory costs in high-dimensional settings. To address these challenges, we combine annealed LMC with our variance-reduced ZO estimator, enabling accurate forward model gradient approximation using a practical number of function evaluations at each iteration. We incorporate the effect of annealing schedule and SGM estimation error on convergence in our theoretical results for posterior sampling. 

Specifically, the key contributions of this work are the following:

\begin{itemize} 
    \item We establish the first non-asymptotic complexity guarantees for ZO sampling, achieving an $\varepsilon$-relative FI error after $\mathcal{O}(1/\varepsilon^{4})$ iterations. Under a Poincar\'e inequality assumption on the target distribution, this rate also yields $\varepsilon$-accuracy in squared TV distance. Furthermore, with decaying parameters, we show weak convergence to the target distribution.
    \item We propose a novel variance-reduced ZO gradient estimator that achieves the convergence guarantees above using only $\mathcal{O}(1)$ function evaluations per iteration, eliminating the $\mathcal{O}(d)$ batch-size scaling of standard ZO estimator.
    \item We propose a variance-reduced ZO annealed PMC algorithm (\textsc{ZO-APMC}) for black-box inverse problems, based on annealed LMC and a pre-trained SGM prior, and establish non-asymptotic and weak convergence guarantees. 
    \item We verify our theoretical findings with numerical and statistical experiments, and further demonstrate that ZO-APMC consistently outperforms existing black-box posterior sampling methods in MRI reconstruction and black hole imaging, while delivering competitive performance on the Navier--Stokes inverse problem. 
\end{itemize}

\section{Preliminaries}\label{sec:background}

\subsection{Zeroth-Order Sampling}
Traditionally, the Langevin diffusion is defined as the solution to the stochastic differential equation
\begin{equation}\label{eq:langevin-diffusion}
    d\vx_t = -\nabla f(\vx_t)dt + \sqrt{2}d\mB_t,
\end{equation}has $\pi\propto \mathrm{exp}(-f)$ as its unique stationary distribution and converges to it as $t\rightarrow \infty$ under mild conditions. Here, $(\mB_t)_{t\geq 0}$ denotes a standard $d$-dimensional Brownian motion. Discretizing this stochastic process with step size $\gamma >0$ yields the standard Langevin Monte Carlo (LMC) algorithm. 
\begin{equation}\label{eq:lmc-formulation}
    \vx_{(k+1)\gamma}\coloneqq \vx_{k\gamma} - \gamma \nabla f(\vx_{k\gamma}) + \sqrt{2}(\mB_{(k+1)\gamma} - \mB_{k\gamma}), 
\end{equation}In this work, we assume black-box access to the potential function $f$; therefore, the gradient $\nabla f(\vx_{k\gamma})$ cannot be computed. Instead, we consider its ZO estimation~\citep{nesterov2017random}, defined as
\begin{equation}\label{def:zo-estimate}
    \widetilde{\nabla} f_\mu(\vx, \vu)
    \coloneqq \frac{f(\vx + \mu \vu) - f(\vx)}{\mu}\,\vu,
\end{equation}where $\vu\!\!\sim\!\!\gN(0,I)$ and $\mu\!>\!0$ is the smoothing parameter. We note that the ZO estimator is biased, since $\nabla f_\mu(\vx) \coloneqq \E_\vu[\widetilde{\nabla}f_\mu(\vx, \vu)] \neq \nabla f(\vx)$. However, this bias vanishes as $\mu\rightarrow 0$; see Lemma~\ref{lem:zeroth-order} in Appendix~\ref{appendix:lemmas}. Replacing $\nabla f(\vx_{k\gamma})$ in~(\ref{eq:lmc-formulation}) with its ZO estimate $(1/b)\sum_{i=1}^b \widetilde{\nabla}f_\mu(\vx_{k\gamma}, \vu_k^i)$ yields a naive ZO-LMC algorithm, where $b$ is the batch size. 
\citet{roy2022stochastic} analyze this approach for strongly log-concave target distributions, an assumption often violated in practice, and establish Wasserstein-2 convergence guarantees. However, their analysis requires the batch size to scale as $\mathcal{O}(d)$, resulting in prohibitively large memory requirements in high-dimensional settings. In this work, we instead provide an analysis for non-log-concave target distributions and use $\mathcal{O}(1)$ function evaluations per iteration. This property is ensured by the proposed variance-reduced ZO estimator $\vg_k$, defined in Section~\ref{sec:methods}. 

\begin{figure}[t]
  \includegraphics[width=0.54\textwidth]{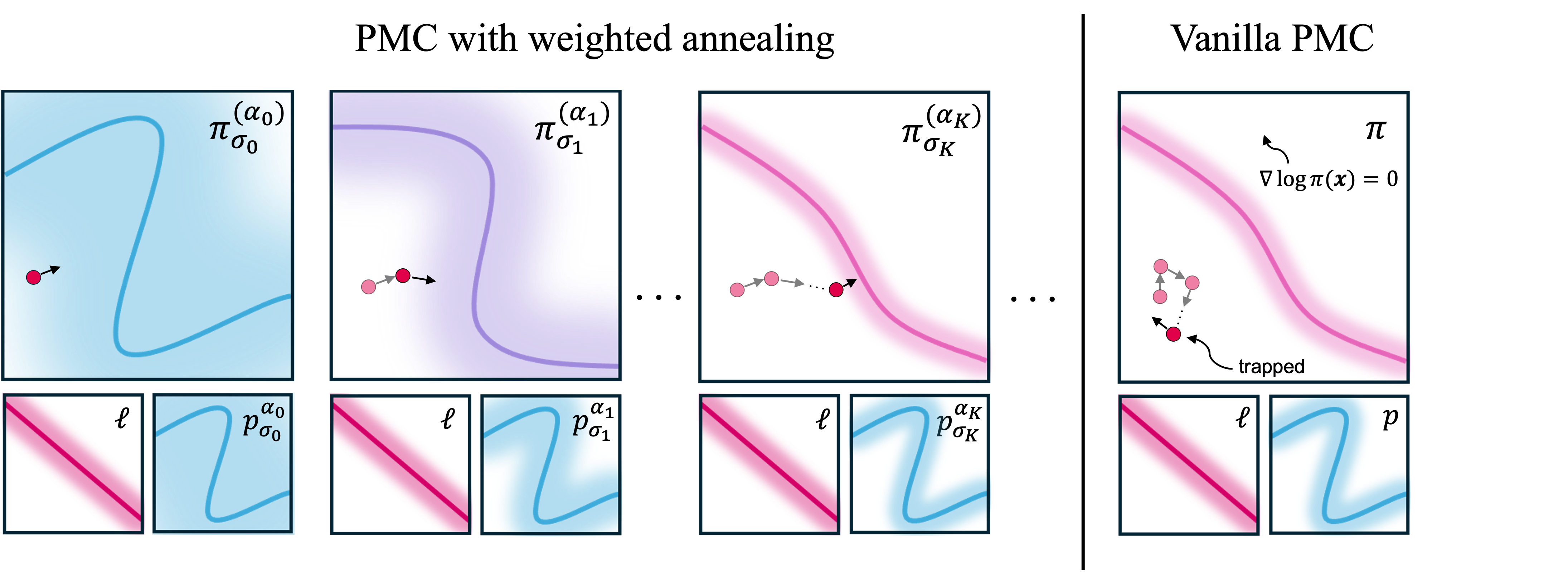}
  \caption{Illustration of weighted annealing in PMC through weighted posteriors $\bigl(\pi_{\sigma_k}^{(\alpha_k)}\bigr)_{k=0}^{N-1}$. Solid lines and shaded regions denote the distribution mean and density, respectively, while unshaded regions correspond to $\nabla\log\pi(\vx)=0$. By gradually reducing the prior smoothing parameter $\sigma_k$ and its weight $\alpha_k$ relative to the likelihood $\ell$, weighted annealing enables PMC to escape plateaus in $\nabla \log \pi(\vx)$.}
  \label{fig:annealed_langevin}
  \vspace{-10pt}
\end{figure}

\subsection{Black-Box Inverse Problems}\label{sec:preliminary-inverse-problem}
For the posterior sampling part of our work, we consider a general black-box inverse problem setting modeled as
\begin{equation}\label{eq:inverse_problem}
    \vy = \mA(\vx) + \xi, 
    \quad \vx\in\sR^d,\; \quad\vy,\xi\in\sR^m.
\end{equation} where we assume only black-box access to the forward model $\mA(\cdot)$. In this setting, gradient information is unavailable, and $\mA$ can only be queried through input--output evaluations. 
The operator $\mA\!:\!\sR^d \to \sR^m$ models the response of the imaging system, where $m\!\ll\!d$, and $\xi \in \sR^m$ represents measurement noise. The objective is to recover the unknown signal $\vx$ from the noisy measurements $\vy$. In many practical settings, the mapping $\vx\!\rightarrow\!\vy$ is many-to-one, making the reconstruction task an ill-posed inverse problem, since $\vx$ cannot be uniquely recovered from $\vy$. In Bayesian framework, one could introduce $p(\vx)\propto \mathrm{exp}(-h)$ as the \emph{prior}, and samples from the \emph{posterior} $\pi(\vx|\vy)$, which is formally established with the Bayes' rule: $\pi(\vx|\vy)\!\propto\!\ell(\vy|\vx)p(\vx)$, where $\ell(\vy|\vx)\propto \mathrm{exp}(-f)$ is the \emph{likelihood} distribution induced by~(\ref{eq:inverse_problem}). Thus, by applying Bayes’ rule to the LMC iterates in~(\ref{eq:lmc-formulation}) and replacing $\nabla f(\vx_{k\gamma})$ with its naive ZO estimate, we readily obtain the following ZO posterior sampling LMC algorithm.
\begin{align}\label{eq:posterior_samp}
    \vx_{(k+1)\gamma} \coloneqq \vx_{k\gamma} &- \gamma \biggl(\frac{1}{b}\sum_{i=1}^b \widetilde{\nabla}f_\mu(\vx_{k\gamma}, \vu_k^i) + \nabla h(\vx_{k\gamma})\biggr) \\ \notag
    & + \sqrt{2}(\mB_{(k+1)\gamma} - \mB_{k\gamma}),
\end{align}where $f$ and $h$ denote potential functions of the likelihood and the prior, respectively. While $\widetilde{\nabla} f_\mu(\vx_{k\gamma}, \vu_k^i)$ can be computed via forward model function evaluations under the assumption of Gaussian measurement noise in~(\ref{eq:inverse_problem}), the prior score $-\nabla h(\vx_{k\gamma})$ is unknown. 

SGMs have been proposed as powerful models for high-dimensional non-log-concave priors. At their core, they learn the perturbed score function $-\nabla h_{\sigma}(\vx) \coloneqq \nabla \log p_{\sigma}(\vx)$, where $p_\sigma(\vx)\coloneqq \int_{\sR^d}p(\vz)\gN(\vx| \vz, \sigma^2 I)\,d\vz$ with a small perturbation $\sigma >0$. This score is learned by a deep neural network using \textit{score matching objective}~\citep{hyvarinen2005estimation, vincent2011connection} 
and can be estimated via \textit{Tweedie's} formula~\citep{Robbins1956AnEB, miyasawa1961empirical, efron2011tweedie}. We denote the SGM estimator by $\gS_{\theta}(\vx,\sigma) \approx -\nabla h_\sigma(\vx)$, where $\gS_{\theta}$ is conditioned on the noise level $\sigma$ and parametrized by $\theta$. In practice, LMC algorithms suffer from slow convergence and mode collapse when sampling from high-dimensional multimodal distributions. To alleviate this, SGMs are trained across multiple noise scales $(\sigma_k)_{k=0}^{N-1}$ where larger noise levels produce smoother approximations of the data distributions, making score estimation and sampling well-posed. Replacing $-\nabla h(\vx_{k\gamma})$ in~(\ref{eq:posterior_samp}) by SGM prior, we obtain a naive ZO annealed LMC for posterior sampling. 
\begin{align}
    \vx_{(k+1)\gamma} \coloneqq~\ & \vx_{k\gamma} - \gamma \biggl(\frac{1}{b}\sum_{i=1}^b \widetilde{\nabla}f_\mu(\vx_{k\gamma},\vu_k^i) \label{def:naive-zo-apmc}\\ \notag 
    & - \alpha_k \gS_{\theta}(\vx_{k\gamma}, \sigma_k)\biggr) + \sqrt{2}(\mB_{(k+1)\gamma} - \mB_{k\gamma}),
\end{align}where $(\alpha_k)_{k=0}^{N-1}$ denotes a weighted annealing schedule. This corresponds to sampling from a sequence of weighted posterior distributions $\pi_{\sigma_k}^{(\alpha_k)}(\vx|\vy) \propto \ell(\vy|\vx)p_{\sigma_k}^{\alpha_k}(\vx)$. Initially, the posterior is dominated by the smoothed prior $p_{\sigma_k}$, and its less concentrated density facilitates escape from low-probability noisy regions of $\nabla h$. As the iterations progress, the likelihood contributes more strongly while the smoothed prior $p_{\sigma_k}$ approaches the true prior $p$, guiding the iterates toward the target posterior. An illustration of this process is provided in Fig.~\ref{fig:annealed_langevin}. In practice, both the noise scales $(\sigma_k)_{k=0}^{N-1}$ and annealing weights $(\alpha_k)_{k=0}^{N-1}$ decrease over the iterations until reaching prescribed minimum values, after which they remain fixed~\citep{song2019generative, song2020improved, sun2024provable}. The precise definitions of these schedules are provided in~(\ref{def:annealing-schedule-sigma-alpha}). 

\subsection{Sampling Analogue of Stationary Point Analysis}

Before introducing our proposed variance-reduced ZO sampling method with its convergence guarantees, we explain why convergence in relative FI, defined as $\mathrm{FI}(\nu\|\pi) \coloneqq \int_{\mathbb{R}^n} \nu(x)\|\nabla \log \nu(x) - \nabla \log \pi(x)\|_2^2\, dx$, serves as the sampling analogue of stationary point analysis in non-convex optimization. Consider the minimization of the \textit{Kullback–Leibler (KL)} divergence over the Wasserstein space of probability distributions.
\begin{equation}\label{eq:min_kl}
    \hat{\nu} = \argmin_{\nu}\mathrm{KL}(\nu\|\pi),
\end{equation}where $\mathrm{KL}(\nu\|\pi) \coloneqq \int_{\sR^d}\nu(\vx)\log\frac{\nu(\vx)}{\pi(\vx)}d\vx$, and $\nu$ and $\pi$ denote the estimate and target distributions, respectively. Similar to the gradient concept in Euclidean space, the Wasserstein gradient of $\mathrm{KL}(\cdot\|\pi)$ at $\nu$ is $\nabla\log(\nu/\pi)$~\citep{ambrosio2008gradient} and its expected square norm gives us $\mathrm{FI}(\nu\|\pi)$.
If $\nu_t$ evolves under Langevin diffusion in~(\ref{eq:langevin-diffusion}), then 
\(\frac{d}{dt}\mathrm{KL}(\nu_t\|\pi)=-\mathrm{FI}(\nu_t\|\pi)\) \citep{ambrosio2008gradient, villani2009ot}, showing that Langevin diffusion is a gradient flow in probability space. From an optimization perspective, $\mathrm{FI}(\nu_t\|\pi)$ plays a role analogous to the squared gradient norm in non-convex optimization. Unlike optimization, however, the condition $\mathrm{FI}(\nu\|\pi) = 0$ implies $\nu=\pi$, making FI a natural discrepancy measure for quantifying convergence to the target distribution. Accordingly, following~\citet{balasubramanian2022towards}, we characterize convergence by requiring the output distribution $\nu$ to satisfy $\mathrm{FI}(\nu\|\pi)\leq \varepsilon$, analogous to reaching an $\varepsilon$-stationary point in optimization. 

\section{Methods}\label{sec:methods}

In this section, we propose the variance-reduced ZO gradient estimator and present theoretical guarantees for sampling from non-log-concave distributions. We then extend our analysis to posterior sampling with SGM priors, accounting for both the bias due to the annealing schedules and the SGM estimation error. 

\subsection{Variance-Reduced Zeroth-Order Sampling}
As mentioned previously, the naive ZO estimate requires batch size of $\mathcal{O}(d)$ to make an accurate estimate of a gradient, which is prohibitive to calculate at each iteration. Using small batch sizes $b \ll d$, while computationally attractive, leads to noisy estimates. Thus, inspired by~\citet{li2021page}, we combine the large- and small-batch estimators to define the following variance-reduced ZO estimator:
\begin{equation}\label{def:proposed-vr-zo-estimate}
\vg_{k} :=
\begin{cases}
\displaystyle
\frac{1}{b}\sum_{i=1}^b \widetilde{\nabla} f_\mu(\vx_{k\gamma}, \vu_k^i),
& \hspace{-4em}\text{w.p. } p, \\[1ex]
\displaystyle
\begin{aligned}[t]
\vg_{k-1}
&+ \frac{1}{b'}\sum_{i=1}^{b'}
\Big(
\widetilde{\nabla} f_\mu(\vx_{k\gamma}, \vu_k^i) \\
&\qquad\qquad\quad
- \widetilde{\nabla} f_\mu(\vx_{(k-1)\gamma}, \vu_k^i)
\Big),
\end{aligned}
& \hspace{-4em}\text{w.p. } 1-p,
\end{cases}
\end{equation}where $k\!\geq\!1$ is the iteration index, and $b,b'\!\!\geq\!\!1$ denote batch sizes, with $b$ a large batch size computed with probability $p\in(0,1]$ and $b'\ll b$ a much smaller one. For the initial step $k\!\!=\!\!0$, we use the batch estimate $\vg_0\coloneqq \frac{1}{b}\sum_{i=1}^b\widetilde{\nabla}f_\mu(\vx_0, \vu_0^i)$. The key motivation behind this construction is to mitigate the high computational cost of ZO estimation, which would otherwise require a batch size on the order of $\mathcal{O}(d)$ at every iteration. Instead, a large batch estimator is computed only intermittently, with probability $p<1$. With the remaining probability $1-p$, the estimation is updated recursively using a small batch of fresh function evaluations together with the previous estimate. More specifically, the update approximates the change in the gradient between consecutive iterates, exploiting the strong correlation between $\vx_{(k-1)\gamma}$ and $\vx_{k\gamma}$. Under a suitable regularity condition on $f$, this gradient variation can be accurately estimated using only a small batch size $b'\ll b$, thereby substantially reducing the per-iteration computational cost. We formalize this condition in the following assumption. 

\begin{assumption}\label{ass:lipschitzness} The potential function $f:\sR^d\rightarrow \sR$ is $L_1$-Lipschitz continuous: $\| f(\vx_1) - f(\vx_2)\|_2 \leq L_1\|\vx_1 - \vx_2\|_2$, for all $\vx_1, \vx_2 \in \sR^d$ and for some $L_1> 0$.  
\end{assumption} Assumption~\ref{ass:lipschitzness} is restrictive since it does not hold globally even for simple distributions such as Gaussians. However, it is satisfied by differentiable potentials on compact domains, which commonly arise in practice through normalization or gradient clipping. In the optimization literature, variance-reduction techniques achieving $\mathcal{O}(1)$ per-iteration cost typically rely on Lipschitz continuity condition on stochastic gradients to control their variance~\citep{cutkosky2019momentum,li2021page,liu2025serena}. Since gradients are unavailable in our setting and must instead be approximated using ZO estimators, we impose Lipschitz continuity on the potential function $f$ to obtain analogous variance-control properties. 

Using the proposed estimator, we obtain the following variance-reduced ZO-LMC algorithm
\begin{equation}\label{eq:proposed-zo-sampling}
    \vx_{(k+1)\gamma} \coloneqq \vx_{k\gamma} - \gamma \vg_k + \sqrt{2}(\mB_{(k+1)\gamma} - \mB_{k\gamma}).
\end{equation}We state our main results for the following continuous time interpolation of LMC, defined for $t \in [k\gamma, (k+1)\gamma]$:
\begin{equation}\label{eq:proposed-zo-sampling-interpolation}
    \vx_{t} \coloneqq \vx_{k\gamma} - (t - k\gamma) \vg_k+\sqrt{2}(\mB_t - \mB_{k\gamma}),
\end{equation}and we write $\nu_t$ for the law of $\vx_t$. Before presenting our main results, we first provide intuition for the proposed variance-reduction mechanism through the following bound on the estimation error.

\begin{samepage}
\begin{proposition}\label{prop:error_prop}
Suppose Assumption~\ref{ass:lipschitzness} holds, and let $(\vx_{k\gamma})_{k\geq 0}$ be generated by~(\ref{eq:proposed-zo-sampling}), where $\vg_k$ is given by~(\ref{def:proposed-vr-zo-estimate}). Define $e_k^2 := \E[\|\vg_k - \nabla f_\mu(\vx_{k\gamma})\|_2^2]$. Then, for any $\gamma>0$,
\begin{equation}
    e^2_{k+1}\leq \frac{p\sigma_{k+1}^2}{b} + (1-p)e_{k}^2 + \frac{4(1-p)dL_{1}^2\Delta_k}{\mu^2b'},
\end{equation}where $\sigma^2_{k+1}\coloneqq \E[\|\widetilde{\nabla} f_\mu (\vx_{(k+1)\gamma},\vu_{k+1}^i)\|^2]$ and $\Delta_k\coloneqq \E[\|\vx_{(k+1)\gamma} - \vx_{k\gamma}\|^2]$.
\end{proposition}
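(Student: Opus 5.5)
We follow the PAGE-style recursion of \citet{li2021page}, conditioning on the coin flip that selects the branch of (\ref{def:proposed-vr-zo-estimate}) at step $k+1$. Let $\gF_{k+1}$ be the $\sigma$-algebra generated by everything up to and including $\vx_{(k+1)\gamma}$ (so $\vx_{k\gamma}$, $\vx_{(k+1)\gamma}$ and $\vg_k$ are measurable), but not the fresh directions $\vu_{k+1}^i$ or the coin at step $k+1$. By the tower property,
\begin{align*}
e_{k+1}^2 = p\,\E\Bigl[\Bigl\|\tfrac{1}{b}\textstyle\sum_{i=1}^b \widetilde{\nabla} f_\mu(\vx_{(k+1)\gamma},\vu_{k+1}^i) - \nabla f_\mu(\vx_{(k+1)\gamma})\Bigr\|^2\Bigr] + (1-p)\,\E\bigl[\|\vg_k + D_{k+1} - \nabla f_\mu(\vx_{(k+1)\gamma})\|^2\bigr],
\end{align*}
where $D_{k+1}$ denotes the small-batch difference term.

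\textbf{Large-batch branch.} Conditionally on $\gF_{k+1}$, the $\widetilde{\nabla} f_\mu(\vx_{(k+1)\gamma},\vu_{k+1}^i)$ are i.i.d.\ with mean $\nabla f_\mu(\vx_{(k+1)\gamma})$. The conditional variance of their average is therefore at most $\frac1b\E\|\widetilde{\nabla} f_\mu\|^2$, and taking expectations gives the term $p\sigma_{k+1}^2/b$.

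\textbf{Recursive branch.} Write
\begin{equation*}
\vg_k + D_{k+1} - \nabla f_\mu(\vx_{(k+1)\gamma}) = \bigl(\vg_k - \nabla f_\mu(\vx_{k\gamma})\bigr) + \bigl(D_{k+1} - [\nabla f_\mu(\vx_{(k+1)\gamma}) - \nabla f_\mu(\vx_{k\gamma})]\bigr).
\end{equation*}
The first bracket is $\gF_{k+1}$-measurable. The second has conditional mean zero, because $\E_\vu[\widetilde{\nabla} f_\mu(\vx,\vu)] = \nabla f_\mu(\vx)$ and the same fresh $\vu_{k+1}^i$ are used at both points. Hence the cross term vanishes and the first bracket contributes $(1-p)e_k^2$. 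For the second bracket, independence of the $b'$ samples bounds its conditional second moment by
\begin{equation*}
\frac{1}{b'}\,\E_\vu\bigl\|\widetilde{\nabla} f_\mu(\vx_{(k+1)\gamma},\vu) - \widetilde{\nabla} f_\mu(\vx_{k\gamma},\vu)\bigr\|^2 ,
\end{equation*}
since the variance is at most the second moment.

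\textbf{Main step: bounding this second moment using only Lipschitzness of $f$.} The difference equals $\frac{1}{\mu}\bigl[(f(\vx'+\mu\vu)-f(\vx+\mu\vu)) - (f(\vx')-f(\vx))\bigr]\vu$ with $\vx=\vx_{k\gamma}$ and $\vx'=\vx_{(k+1)\gamma}$. We apply $(a-b)^2\le 2a^2+2b^2$ and Assumption~\ref{ass:lipschitzness} to each pair. Each squared function difference is at most $L_1^2\|\vx'-\vx\|^2$, so the bracket squared is at most $4L_1^2\|\vx'-\vx\|^2$, and this bound does not depend on $\vu$. Multiplying by $\|\vu\|^2/\mu^2$ and using $\E\|\vu\|^2 = d$ gives $4dL_1^2\|\vx'-\vx\|^2/\mu^2$.

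Care is needed because $\vu$ is independent of the $\gF_{k+1}$-measurable displacement. That independence is what allows $\E\|\vu\|^2=d$ to factor out; taking the outer expectation then produces $\Delta_k$.

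Summing the two branches yields the stated inequality, valid for any $\gamma>0$ since no step-size condition was used. The only delicate point is the filtration bookkeeping: $\vx_{(k+1)\gamma}$ depends on $\vg_k$ and on the Brownian increment but not on $\vu_{k+1}^i$, which is what justifies both the vanishing cross term and the factorization above.
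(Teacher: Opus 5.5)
Your proposal is correct and follows the paper's proof essentially step for step: you condition on the coin flip, bound the large-batch variance by $\sigma_{k+1}^2/b$, add and subtract $\nabla f_\mu(\vx_{k\gamma})$ so that the cross term vanishes by conditional independence of the fresh directions, bound the small-batch variance by its second moment, and apply Assumption~\ref{ass:lipschitzness} with $(a-b)^2\le 2a^2+2b^2$ and $\E\|\vu\|^2=d$. One small remark: the cross term vanishes from unbiasedness at each point alone, whereas reusing the same $\vu_{k+1}^i$ at both points is what makes the second-moment bound $4L_1^2\|\vx_{(k+1)\gamma}-\vx_{k\gamma}\|^2$ possible.
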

\end{samepage}Proposition~\ref{prop:error_prop} (proof provided in Appendix~\ref{sec:proof_prop1}) gives an intuition for the variance-reduction mechanism. When $p = 1$, the estimation error $e_{k+1}^2$ is controlled solely by the variance term $\sigma_{k+1}^2/b$, which can only be reduced by increasing the batch size $b$. In contrast, when $p < 1$, the variance term can be reduced by $p$ while keeping $b$ fixed, at the cost of introducing an error-propagation term $(1-p)e_k^2$ together with an additional term reflecting how similar consecutive iterates remain. Consequently, smaller values of $p$ reduce the variance term but increase error propagation, leading to a trade-off between variance reduction and convergence speed. Unlike stochastic optimization~\citep{li2021page}, the additional error term depends on the discretization error of the Langevin diffusion and is amplified by the ZO smoothing parameter $\mu$, resulting in an additional trade-off between ZO estimation bias and discretization error. As shown in our main results, these trade-offs can be controlled through suitable choices of the parameters $\mu$, $\gamma$, and $p$. Furthermore, controlling the discretization error requires the following assumption, which is standard in LMC analysis. 

\begin{assumption}\label{ass:smoothness}
    The gradient of $f$ is $L_2$-Lipschitz continuous: $\|\nabla f(\vx_1) - \nabla f(\vx_2)\|_2 \leq L_2 \|\vx_1 - \vx_2\|_2$, for all $\vx_1, \vx_2\in\sR^d$ and for some $L_2 > 0$. 
\end{assumption}We are now ready to state our first main result. 

\begin{theorem}\label{thm:zo-sampling-fi}
    Let $\pi\propto \exp(-f)$ be the target distribution, where the potential $f$ satisfies Assumptions~\ref{ass:lipschitzness} and~\ref{ass:smoothness} with Lipschitz constants $L_1$ and $L_2$, respectively. Define $L_m\coloneqq \max\{L_1,L_2\}$. Let $N\!\geq\!1$ denote the total number of iterations, and let $(\nu_t)_{t\geq 0}$ denote the law of the continuous-time interpolation~(\ref{eq:proposed-zo-sampling-interpolation}) generated by step size $\gamma = L_m^{-1}N^{-3/4}d^{-7/4}$, probability $p=L_mN^{-1/4}d^{-1/4}$, batch size $b = \lceil p^{-1}\rceil$, and smoothing parameter of ZO estimates $\mu = L_m^{-1/2}N^{-1/8}d^{-5/8}$. Then, the time-averaged law $\Bar{\nu}_{N\gamma}\coloneqq \frac{1}{N\gamma}\int_0^{N\gamma}\nu_t\,dt$ satisfies $\mathrm{FI}(\Bar{\nu}_{N\gamma}\|\pi) \leq \varepsilon$ after $\mathcal{O} (d^7 L_m^4 / \varepsilon^4)$ iterations with $\mathcal{O} (1)$ function evaluations per iteration. 
\end{theorem}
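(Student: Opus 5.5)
We follow the interpolation argument of \citet{balasubramanian2022towards} and replace the exact-gradient error with the variance-reduced ZO error controlled by Proposition~\ref{prop:error_prop}.

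\textbf{Step 1: KL decay on one interval.} For $t\in[k\gamma,(k+1)\gamma]$, the interpolation~(\ref{eq:proposed-zo-sampling-interpolation}) has a Fokker--Planck equation whose drift is $-\E[\vg_k\mid \vx_t]$. The standard computation gives
\begin{equation*}
\tfrac{d}{dt}\mathrm{KL}(\nu_t\|\pi)\le -\tfrac34\mathrm{FI}(\nu_t\|\pi)+\E\|\nabla f(\vx_t)-\vg_k\|^2 .
\end{equation*}
We split the error term into three pieces:
\begin{equation*}
\|\nabla f(\vx_t)-\vg_k\|^2\le 3\|\nabla f(\vx_t)-\nabla f(\vx_{k\gamma})\|^2+3\|\nabla f(\vx_{k\gamma})-\nabla f_\mu(\vx_{k\gamma})\|^2+3\|\nabla f_\mu(\vx_{k\gamma})-\vg_k\|^2 .
\end{equation*}
The first piece is at most $3L_2^2\|\vx_t-\vx_{k\gamma}\|^2$ by Assumption~\ref{ass:smoothness}. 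Its expectation is bounded by $\lesssim L_2^2\big((t-k\gamma)^2\E\|\vg_k\|^2+d(t-k\gamma)\big)$. The second piece is the smoothing bias, which is $\lesssim \mu^2L_2^2 d^3$ by Lemma~\ref{lem:zeroth-order}. The third piece is $e_k^2$.

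\textbf{Step 2: Integrate and telescope.} Integrating over $[0,N\gamma]$, using $\mathrm{KL}\ge 0$ and convexity of $\mathrm{FI}(\cdot\|\pi)$, we get
\begin{equation*}
\mathrm{FI}(\bar\nu_{N\gamma}\|\pi)\lesssim \frac{\mathrm{KL}(\nu_0\|\pi)}{N\gamma}+L_2^2 d\gamma+L_2^2\gamma^2\frac1N\sum_k\E\|\vg_k\|^2+\mu^2L_2^2d^3+\frac1N\sum_k e_k^2 .
\end{equation*}

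\textbf{Step 3: Control the moments.} Under Assumption~\ref{ass:lipschitzness}, the difference quotient satisfies $|f(\vx+\mu\vu)-f(\vx)|/\mu\le L_1\|\vu\|$. Hence $\sigma_{k+1}^2\le L_1^2\E\|\vu\|^4\lesssim L_1^2d^2$. We also have $\E\|\vg_k\|^2\le 2e_k^2+2\|\nabla f_\mu\|^2\le 2e_k^2+2L_1^2$, and
\begin{equation*}
\Delta_k\le 2\gamma^2\E\|\vg_k\|^2+4\gamma d\le 4\gamma^2e_k^2+4\gamma^2L_1^2+4\gamma d .
\end{equation*}
Plugging these into Proposition~\ref{prop:error_prop} gives a linear recursion
\begin{equation*}
e_{k+1}^2\le (1-p+c\gamma^2 dL_1^2/(\mu^2 b'))e_k^2+\frac{pL_1^2d^2}{b}+\frac{cdL_1^2(\gamma d+\gamma^2L_1^2)}{\mu^2b'} .
\end{equation*}
Choose the parameters so that $c\gamma^2dL_1^2/\mu^2\le p/2$; this holds with the stated choices and $b'=1$. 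The contraction factor is then at most $1-p/2$. Summing the geometric recursion and using $e_0^2\le L_1^2d^2/b$ gives
\begin{equation*}
\frac1N\sum_k e_k^2\lesssim \frac{L_1^2d^2}{pNb}+\frac{L_1^2d^2}{b}+\frac{d^2L_1^2\gamma}{\mu^2p} .
\end{equation*}

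\textbf{Step 4: Balance the parameters.} Substitute $\gamma=L_m^{-1}N^{-3/4}d^{-7/4}$, $p=L_mN^{-1/4}d^{-1/4}$, $b=\lceil p^{-1}\rceil$, and $\mu^2=L_m^{-1}N^{-1/4}d^{-5/4}$, and check that each term is $O(\mathrm{poly}(L_m)\, d^{7/4}N^{-1/4})$. For example, $d^2L^2\gamma/(\mu^2p)=L\,N^{-1/4}d^{-1/4}\cdot d^{\cdot}$ after simplification. Also $\mathrm{KL}(\nu_0\|\pi)/(N\gamma)\sim L\,d^{7/4}N^{-1/4}$, treating $\mathrm{KL}(\nu_0\|\pi)$ as a constant. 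Setting $d^{7/4}L_mN^{-1/4}\le\varepsilon$ gives $N=O(d^7L_m^4/\varepsilon^4)$. The expected number of function evaluations per iteration is $pb+(1-p)2b'=O(1)$.

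\textbf{Main obstacle.} The hardest part is the coupled recursion: $\Delta_k$ feeds back through $\gamma^2e_k^2/\mu^2$ in Proposition~\ref{prop:error_prop}, and the term $d\gamma/\mu^2$ from Brownian increments must be dominated by $p$. The exponents of the parameter choices exist precisely to make this hold. I would verify the contraction condition first, and then tune the exponents so that all terms in the final bound scale equally as $d^{7/4}N^{-1/4}$.
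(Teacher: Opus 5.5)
Your argument is correct, but it closes the coupling between the estimator error and the Langevin dynamics differently from the paper.

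\textbf{How the paper closes it.} The paper never solves the $e_k^2$ recursion. It rearranges Proposition~\ref{prop:error_prop} into $e_k^2\le \frac{d(d+2)L_1^2}{b}+\frac{1-p}{p}\frac{4dL_1^2}{\mu^2b'}\Delta_k-\frac1p(e_{k+1}^2-e_k^2)$ and inserts this into the KL inequality. It then bounds $\Delta_k$ through $\E\|\nabla f(\vx_t)\|^2$, converts that into $\mathrm{FI}(\nu_t\|\pi)+2dL_2$ via Lemma~\ref{lemma:sampling_final}, and absorbs it into $-\frac34\mathrm{FI}$. Finally it telescopes the PAGE-type potential $\gL_k=\mathrm{KL}(\nu_{k\gamma}\|\pi)+\frac{\gamma}{p}\bigl(3+13\gamma^2L_m^2\phi(\mu)\bigr)e_k^2$.

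\textbf{How you close it.} You use $\|\nabla f_\mu\|\le L_1$, which comes free from Assumption~\ref{ass:lipschitzness} (needed by Proposition~\ref{prop:error_prop} anyway), to make the $e_k^2$ recursion self-contained. You show it contracts at rate $1-p/2$ and sum the geometric series before touching the KL inequality. This is more elementary: no Fisher-information feedback, no Lemma~\ref{lemma:sampling_final}, no Lyapunov function.

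Your contraction condition $\gamma^2dL_1^2/(\mu^2b')\lesssim p$ is the paper's restriction $\gamma\le 1/(L_m\sqrt{52\phi(\mu)})$ in another form, since $\phi(\mu)\approx 4d/(p\mu^2b')$. The four surviving terms, $\mathrm{KL}(\nu_0\|\pi)/(N\gamma)$, $L_1^2d^2/b$, $\mu^2L_2^2d^3$ and $\gamma L_1^2d^2/(p\mu^2b')$, coincide with the paper's bound. Your decoupling is also the device the paper itself uses in the proof of Theorem~\ref{thm:zo-sampling-id} to show $\sup_n e_n^2<\infty$. The potential-function route gives one telescoping template with explicit constants, which the paper reuses for time-varying parameters. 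Yours is shorter and makes the geometric forgetting of $e_0^2$ explicit.

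\textbf{Bookkeeping in Step 4.}
\begin{itemize}
\item The missing exponent in your garbled line is $d^2$. Since $\gamma/(p\mu^2)=L_m^{-1}N^{-1/4}d^{-1/4}$, that term equals $L_1^2L_m^{-1}d^{7/4}N^{-1/4}$.
\item The contraction condition and $p\le 1$ hold only once $Nd\gtrsim1$ and $N\ge L_m^4/d$. You should state both.
\item More substantively, you have only shown each term is $\mathrm{poly}(L_m)\,d^{7/4}N^{-1/4}$. The batch term is $L_1^2d^2/b\le L_1^2d^2p=L_1^2L_md^{7/4}N^{-1/4}$. So the complexity you actually obtain is $O(L_1^8L_m^4d^7/\varepsilon^4)$, which reduces to $O(d^7L_m^4/\varepsilon^4)$ only if $L_1\lesssim1$.
\end{itemize}
The paper's proof hides the same factor: it drops $L_1^2$ from the $d(d+2)/b$ term when passing to the inequality after bounding $\Delta_k$. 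So this is not specific to your route, but you should not assert the $L_m^4$ power without addressing it.
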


\begin{remark}
    In practice, sampling from $\bar{\nu}_{N\gamma}$ can be carried out as follows. First, draw $t \in [0, N\gamma]$ uniformly at random and determine the largest integer $k$ such that $k\gamma \leq t$. Then, perform a linear interpolation over the interval $[k\gamma, t]$ by using~(\ref{eq:proposed-zo-sampling-interpolation}) to obtain $\vx_t$. The resulting sample follows the distribution $\bar{\nu}_{N\gamma}$. 
\end{remark}

Theorem 1 (the full statement and the proof are provided in Appendix~\ref{appendix:zo-sampling-fi}) shows that the proposed variance-reduced ZO-LMC converges with only $\mathcal{O}(1)$ function evaluations per iteration, substantially reducing the memory and per-iteration computational costs compared to naive ZO-LMC in high-dimensional settings. The high polynomial dependence on $d$ in number of iterations arises from the interaction between the ZO approximation error and the Langevin discretization error, which is captured by the last term in the upper bound of Proposition~\ref{prop:error_prop} and reflects the intrinsic difficulty of sampling from non-log-concave distributions using ZO information. In practice, however, our experiments show that accurate samples can be obtained using significantly fewer iterations. 

We next present a stronger convergence guarantee under the following Poincar\'e inequality assumption. 

\begin{assumption}\label{ass:poincare-inequality}
    For every smooth, compactly supported function \(\phi:\mathbb{R}^{d}\!\to\!\mathbb{R}\), 
    the target distribution $\pi\propto \mathrm{exp}(-f)$ satisfies the Poincar\'e inequality:
    \(\operatorname{Var}_{\pi}(\phi)\leq C_{\mathrm{PI}}\E_\pi[\|\nabla \phi\|_2^2]\) for some $C_{\mathrm{PI}}>0$.
\end{assumption}
Assumption 3 enforces concentration by requiring sufficiently growth of the potential at infinity, thereby preventing heavy tails, and is analogous to Polyak-Łojasiewicz inequality in optimization. This assumption holds for a broad class of distributions, including log-concave distributions and certain non-log-concave cases such as Gaussian convolutions of bounded-support distributions~\citep{chewi2024analysis}. Combining the Poincar\'e inequality with Theorem~\ref{thm:zo-sampling-fi}, we obtain the following convergence guarantee in squared TV distance.

\begin{corollary}\label{corr:zo-sampling-tv}
    Let $\pi\propto\exp(-f)$ be the target distribution, where the potential $f$ satisfies Assumptions~\ref{ass:lipschitzness},~\ref{ass:smoothness} with Lipschitz constants $L_1$, $L_2$, respectively, and Assumption~\ref{ass:poincare-inequality} with constant $C_{\mathrm{PI}}$. Define $L_m\coloneqq \max\{L_1, L_2\}$. Let $N\geq 1$ denote the total number of iterations, and let $(\nu_t)_{t\geq 0}$ denote the law of the continuous-time interpolation~(\ref{eq:proposed-zo-sampling-interpolation}) generated by the same step size $\gamma$, probability $p$, batch size $b$, and smoothing parameter of ZO estimates $\mu$ as in Theorem~\ref{thm:zo-sampling-fi}. Then, the time-averaged law $\Bar{\nu}_{N\gamma}\coloneqq \frac{1}{N\gamma}\int_0^{N\gamma}\nu_t\,dt$ satisfies $\|\Bar{\nu}_{N\gamma} - \pi\|^2_{\mathrm{TV}} \leq \varepsilon$ after $\mathcal{O} (d^7 L_m^4C_{\mathrm{PI}}^4 / \varepsilon^4)$ iterations with $\mathcal{O} (1)$ function evaluations per iteration. 
\end{corollary}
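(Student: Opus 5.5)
The plan is to combine the Fisher information bound of Theorem~\ref{thm:zo-sampling-fi} with a standard functional inequality: under a Poincar\'e inequality, total variation is controlled by relative Fisher information. The parameters $\gamma,p,b,\mu$ are the same as in Theorem~\ref{thm:zo-sampling-fi}, so that theorem applies directly. It gives $\mathrm{FI}(\bar\nu_{N\gamma}\|\pi)\le \varepsilon'$ after $\mathcal{O}(d^7L_m^4/\varepsilon'^4)$ iterations with $\mathcal{O}(1)$ function evaluations per iteration. It therefore suffices to prove an inequality of the form
\[
\|\nu-\pi\|_{\mathrm{TV}}^2 \le C\, C_{\mathrm{PI}}\,\mathrm{FI}(\nu\|\pi),
\]
with $C$ an absolute constant, and then take $\varepsilon' = \varepsilon/(C C_{\mathrm{PI}})$. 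This substitution yields the claimed $\mathcal{O}(d^7L_m^4C_{\mathrm{PI}}^4/\varepsilon^4)$ iteration count.

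To prove the inequality, set $\phi=\sqrt{\nu/\pi}$ and apply Assumption~\ref{ass:poincare-inequality} to $\phi$, extended from smooth compactly supported functions by a density/truncation argument. We may assume $\mathrm{FI}(\nu\|\pi)<\infty$, since otherwise there is nothing to prove. Then $\E_\pi[\phi^2]=1$, and
\[
\E_\pi[\|\nabla\phi\|^2]=\tfrac14\int \nu\,\|\nabla\log(\nu/\pi)\|^2 = \tfrac14\mathrm{FI}(\nu\|\pi).
\]
The Poincar\'e inequality therefore gives $1-(\E_\pi\phi)^2 \le \tfrac{C_{\mathrm{PI}}}{4}\mathrm{FI}(\nu\|\pi)$. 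The quantity $(\E_\pi\phi)^2=\big(\int\sqrt{\nu\pi}\big)^2$ is the squared Bhattacharyya coefficient, so $1-(\E_\pi\phi)^2$ relates to the Hellinger distance $H^2(\nu,\pi)=1-\int\sqrt{\nu\pi}$ via
\[
1-(\E_\pi\phi)^2=(1-\textstyle\int\sqrt{\nu\pi})(1+\int\sqrt{\nu\pi})\ge H^2(\nu,\pi).
\]
By Le Cam's inequality, $\|\nu-\pi\|_{\mathrm{TV}}^2\le 2H^2(\nu,\pi)$. Combining the two gives $\|\nu-\pi\|_{\mathrm{TV}}^2\le \tfrac{C_{\mathrm{PI}}}{2}\mathrm{FI}(\nu\|\pi)$, so $C=1/2$ in this normalization. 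Alternatively, the conclusion follows from $\|\nu-\pi\|_{\mathrm{TV}}\le \E_\pi|\phi^2-1| \le \|\phi-1\|_{L^2(\pi)}\|\phi+1\|_{L^2(\pi)}$, bounding $\|\phi-1\|^2_{L^2(\pi)}=2(1-\E_\pi\phi)$ as above.

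The main technical point is applying the inequality to $\bar\nu_{N\gamma}$. This needs $\bar\nu_{N\gamma}$ to have a density with enough regularity that $\phi\in H^1(\pi)$ and that the Poincar\'e inequality extends to it. I would handle this by noting that each $\nu_t$ for $t>0$ is a Gaussian convolution and hence smooth and positive. I would then approximate $\phi$ by smooth compactly supported cutoffs $\phi\chi_R$ and pass to the limit $R\to\infty$ using monotone/dominated convergence. Finiteness of $\mathrm{FI}$, which Theorem~\ref{thm:zo-sampling-fi} provides, controls the gradient terms in this limit. Since the time-averaged law is itself a probability density and FI is applied to it directly in Theorem~\ref{thm:zo-sampling-fi}, no convexity argument is needed. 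The final step is simply to plug in the rescaled accuracy.
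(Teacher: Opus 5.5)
Your proof is correct and takes the same route as the paper. Both take the Fisher information guarantee of Theorem~\ref{thm:zo-sampling-fi} for $\bar\nu_{N\gamma}$, convert it to squared TV through the Poincar\'e inequality at the cost of a factor proportional to $C_{\mathrm{PI}}$, and then solve for $N$. The only difference is that the paper cites Lemma~\ref{lemma:tv_lemma} for $\|\nu-\pi\|_{\mathrm{TV}}^2\le 4C_{\mathrm{PI}}\,\mathrm{FI}(\nu\|\pi)$, whereas you derive this inequality yourself by applying the Poincar\'e inequality to $\sqrt{\nu/\pi}$ and then using the Hellinger/Le Cam bound. That derivation is valid and gives the sharper constant $C_{\mathrm{PI}}/2$, but the constant does not affect the $\mathcal{O}(d^7L_m^4C_{\mathrm{PI}}^4/\varepsilon^4)$ rate.
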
 
The complete statement and the proof are provided in Appendix~\ref{appendix:zo-sampling-tv}. Building on Theorem~\ref{thm:zo-sampling-fi} and the fact that $\mathrm{FI}(\nu\|\pi) = 0$ implies $\nu=\pi$, we obtain the following result with its formal statement and proof provided in Appendix~\ref{appendix:zo-sampling-id}.

\begin{theorem}\label{thm:zo-sampling-id}
    Let $\pi\propto \mathrm{exp}(-f)$ be the target distribution, where the potential $f$ satisfies Assumptions~\ref{ass:lipschitzness} and~\ref{ass:smoothness} with Lipschitz constants $L_1$ and $L_2$, respectively. Define $L_m\coloneqq \max\{L_1, L_2\}$. Let $(\nu_t)_{t\geq 0}$ denote the law of continuous-time interpolation~(\ref{eq:proposed-zo-sampling-interpolation}) generated by time-varying step size $\gamma_k = \frac{C_\gamma}{k^{3/2}}$, probability $p_k = \frac{1}{2k^{1/2}}$, batch size $b_k = \lceil p_k^{-1}\rceil$, and smoothing parameter of ZO estimates $\mu_k = \frac{C_\mu}{k^{1/8}}$ for $k\geq 1$, where $C_\gamma, C_\mu\!>\!0$ are constants. Then, the time-averaged law $\Bar{\nu}_{\tau_n}\coloneqq \frac{1}{\tau_n}\int_0^{\tau_n} \nu_t\,dt$, where $\tau_n\coloneqq\sum_{k=1}^n \gamma_k$, converges weakly to $\pi$.
\end{theorem}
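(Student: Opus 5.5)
The plan is to redo the proof of Theorem~\ref{thm:zo-sampling-fi} with time-varying parameters $(\gamma_k,p_k,b_k,\mu_k)$, obtain a bound on $\mathrm{FI}(\bar\nu_{\tau_n}\|\pi)$, and then pass from vanishing relative Fisher information to weak convergence. Before the details, one warning: a central step, the vanishing of the initial-KL term, does not follow from the stated schedule, because $\tau_n$ stays bounded.

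\textbf{Step 1 (one-step KL inequality).} On each interval $[\tau_{k-1},\tau_k]$, with $\tau_k=\sum_{j\le k}\gamma_j$, I would follow the interpolation argument of \citet{balasubramanian2022towards} used for Theorem~\ref{thm:zo-sampling-fi}. Differentiating $\mathrm{KL}(\nu_t\|\pi)$ along~(\ref{eq:proposed-zo-sampling-interpolation}) should give
\begin{equation*}
\frac{d}{dt}\mathrm{KL}(\nu_t\|\pi)\le -\tfrac34\mathrm{FI}(\nu_t\|\pi)+\E\|\nabla f(\vx_t)-\vg_k\|_2^2 .
\end{equation*}
I would then split the error as
\begin{equation*}
\nabla f(\vx_t)-\vg_k=\bigl(\nabla f(\vx_t)-\nabla f(\vx_{\tau_{k-1}})\bigr)+\bigl(\nabla f-\nabla f_\mu\bigr)(\vx_{\tau_{k-1}})+\bigl(\nabla f_\mu(\vx_{\tau_{k-1}})-\vg_k\bigr).
\end{equation*}
The three pieces are controlled as follows.
\begin{itemize}
\item The first piece is bounded by Assumption~\ref{ass:smoothness}, giving a term of order $L_2^2(\gamma_k^2L_1^2+\gamma_k d)$.
\item The second piece is bounded by Lemma~\ref{lem:zeroth-order}, giving a term of order $\mu_k^2L_2^2d^3$.
\item The third piece is $e_k^2$.
\end{itemize}
Integrating over each interval and summing over $k$ should yield
\begin{equation*}
\mathrm{FI}(\bar\nu_{\tau_n}\|\pi)\lesssim \frac{1}{\tau_n}\Bigl(\mathrm{KL}(\nu_0\|\pi)+\sum_{k\le n}\gamma_k\bigl[L^2\gamma_k d+\mu_k^2L^2d^3+e_k^2\bigr]\Bigr).
\end{equation*}
Convexity of $\mathrm{FI}(\cdot\|\pi)$ in its first argument justifies passing to the time average $\bar\nu_{\tau_n}$.

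\textbf{Step 2 (variance recursion).} I would unroll Proposition~\ref{prop:error_prop} with $k$-dependent parameters:
\begin{equation*}
e_{k+1}^2\le p_k\sigma_{k+1}^2/b_k+(1-p_k)e_k^2+4dL_1^2\Delta_k/(\mu_k^2b').
\end{equation*}
The inputs are $\sigma^2\lesssim dL_1^2$ (by Assumption~\ref{ass:lipschitzness}), $b_k=\lceil p_k^{-1}\rceil$, and $\Delta_k\lesssim\gamma_k^2L_1^2d+\gamma_k d$. With $p_k=\tfrac12k^{-1/2}$, $\mu_k=C_\mu k^{-1/8}$ and $\gamma_k=C_\gamma k^{-3/2}$, the forcing terms are of order $dL_1^2p_k^2$ and $d^2L_1^2k^{-5/4}$. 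A discrete Gr\"onwall/Chung-type lemma should then give $\gamma_ke_k^2$ summable. The other terms $\gamma_k^2 d$ and $\gamma_k\mu_k^2d^3$ are clearly summable, so the numerator in Step 1 stays bounded.

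\textbf{Step 3 (main obstacle).} Weak convergence would follow from $\mathrm{FI}(\bar\nu_{\tau_n}\|\pi)\to0$. The argument would use tightness of $(\bar\nu_{\tau_n})$, which comes from bounded second moments propagated through the recursion. It would also use lower semicontinuity of $\mathrm{FI}(\cdot\|\pi)$ under weak convergence. Any limit point $\nu^\star$ would then satisfy $\mathrm{FI}(\nu^\star\|\pi)=0$, hence $\nu^\star=\pi$.

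The difficulty is that $\gamma_k\propto k^{-3/2}$ is summable, so $\tau_n\uparrow\tau_\infty<\infty$. The factor $1/\tau_n$ therefore does not vanish, and in particular the $\mathrm{KL}(\nu_0\|\pi)/\tau_n$ term does not vanish. This is the step I expect to fail. With finite total time, $\bar\nu_{\tau_n}$ converges to the fixed mixture $\tau_\infty^{-1}\int_0^{\tau_\infty}\nu_t\,dt$, which equals $\pi$ only when $\nu_0=\pi$.

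What I would try first is to make the numerator vanish instead. This would mean showing that $\mathrm{KL}(\nu_0\|\pi)$ can be absorbed, for instance by starting the schedule at $\nu_0=\pi$ or by letting $C_\gamma$ scale appropriately. Failing that, I would check whether the intended $\tau_n$ diverges under the full formal statement in Appendix~\ref{appendix:zo-sampling-id}. If it does not, the claim as worded requires an extra hypothesis to go through.
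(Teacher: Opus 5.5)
Your diagnosis is correct, and the flaw is in the paper's own proof, not in your plan. The paper follows essentially your Steps 1--3. The one difference is that it controls the estimator error through the Lyapunov term $c_ke_k^2$ and Abel summation with $c_k$ decreasing, rather than through a Chung-type lemma. It reaches
\[
\mathrm{FI}(\bar\nu_{\tau_n}\|\pi)\le\frac{2}{\tau_n}\Bigl(\mathrm{KL}(\nu_0\|\pi)+A_1S_1+A_2S_2+A_3S_3+c_1e_0^2\Bigr),
\]
and then concludes by ``noting that $\tau_n\to\infty$''. That is false: $\tau_n=C_\gamma\sum_{k\le n}k^{-3/2}\uparrow C_\gamma\zeta(3/2)<\infty$. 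The finiteness of $S_1,S_2,S_3$ comes from exactly the fast decay of $\gamma_k$ that makes the horizon finite. So the right-hand side tends to a strictly positive constant, and $\mathrm{FI}(\bar\nu_{\tau_n}\|\pi)\to0$ does not follow. The tightness and lower-semicontinuity steps are fine; only this step breaks.

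Two of your side remarks need correcting.

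First, your proposed remedies cannot work. Besides $\mathrm{KL}(\nu_0\|\pi)$, the numerator contains the accumulated discretization and smoothing bias $A_1S_1+A_2S_2>0$. Since $A_1,A_2\propto C_\gamma$, exactly like $\tau_\infty$, the ratio $(A_1S_1+A_2S_2)/\tau_\infty$ is a positive constant independent of $C_\gamma$ and $\nu_0$. Nor does $\nu_0=\pi$ make the mixture equal $\pi$ in general, because the biased discretized chain does not leave $\pi$ invariant.

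Second, the statement is not merely unproved but false as written. Note that $\bar\nu_{\tau_n}\to\bar\nu_{\tau_\infty}\coloneqq\tau_\infty^{-1}\int_0^{\tau_\infty}\nu_t\,dt$, even in total variation. Take $f(\vx)=\sqrt{1+\|\vx\|^2}$, which satisfies Assumptions~\ref{ass:lipschitzness} and~\ref{ass:smoothness} with $L_1=L_2=1$, and take $\nu_0=\mathcal{N}(\vz,I)$ with $\|\vz\|=R$.
\begin{itemize}
\item The paper's own recursion bounds $e_k^2$, and hence $\E\|\vg_k\|^2\le 2e_k^2+2L_1^2$, uniformly in $k$ and $R$.
\item Therefore $\E\|\vx_t-\vz\|=O(1)$ uniformly in $t\le\tau_\infty$ and $R$.
\item By Markov's inequality, $\bar\nu_{\tau_\infty}$ puts mass at least $1/2$ on a ball of $R$-independent radius around $\vz$.
\item For large $R$, $\pi$ puts almost no mass on that ball, so $\bar\nu_{\tau_\infty}\neq\pi$.
\end{itemize}

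The natural repair is a slower step-size decay. With $p_k,\mu_k$ as given, the accumulated terms are $\sum_k\gamma_kp_k$, $\sum_k\gamma_k\mu_k^2$ and $\sum_k\gamma_k^2\phi_k(\mu_k)$, where $\phi_k(\mu_k)\lesssim k^{3/4}$. In your Step 2 the last one appears as $\sum_k\gamma_ke_k^2$, with $e_k^2\lesssim k^{3/4-a}$ when $\gamma_k\propto k^{-a}$. All three sums are finite for $a>7/8$, while $\tau_n\to\infty$ requires $a\le 1$.

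For example, take $\gamma_k=C_\gamma/k$ with $C_\gamma$ small. This respects the step-size constraint $\gamma_k\lesssim k^{-3/8}$, keeps $c_k\asymp k^{-1/2}$ decreasing and $e_k^2$ bounded, and gives $\tau_n\asymp\log n$. Hence $\mathrm{FI}(\bar\nu_{\tau_n}\|\pi)=O(1/\log n)$.

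In that infinite-horizon version, get tightness from the uniform-in-time KL bound, as the paper does, since sublevel sets of $\mathrm{KL}(\cdot\|\pi)$ are tight. Do not rely on a second-moment recursion. The assumptions allow heavy-tailed targets such as $f(\vx)=\tfrac{d+1}{2}\log(1+\|\vx\|^2)$, whose second moment is infinite.
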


\subsection{Zeroth-Order Posterior Sampling with SGM Prior}

In this section, we propose ZO-APMC algorithm for solving black-box inverse problems with an SGM prior and extend the theoretical analysis of the previous section to this setting by incorporating the bias due to the annealing schedules and the SGM estimation error. We replace the naive ZO gradient estimator in~(\ref{def:naive-zo-apmc}) with the proposed variance reduction mechanism in~(\ref{def:proposed-vr-zo-estimate}) yielding the following ZO-APMC iterates:
\begin{align}
    \vx_{(k+1)\gamma} \coloneqq \vx_{k\gamma} & - \gamma(\vg_k - \alpha_k \gS_{\theta}(\vx_{k\gamma}, \sigma_k)) \label{eq:zo-posterior-sampling-iterates} \\
    & \quad\quad\quad\quad\quad\quad + \sqrt{2}(\mB_{(k+1)\gamma}-\mB_{k\gamma}), \notag
\end{align}where $(\alpha_k)_{k=0}^{N-1}$ and $(\sigma_k)_{k=0}^{N-1}$ are annealing and noise schedules, respectively. Following~\citet{song2020improved} and~\citet{sun2024provable}, we define, for all $k\geq 0$, 
\begin{equation}\label{def:annealing-schedule-sigma-alpha}
\alpha_k \coloneqq \max\{\alpha_0 \rho_1^k, 1\} \!\!\quad\!\! \text{and} \!\!\quad\!\! \sigma_k \coloneqq \max\{\sigma_0 \rho_2^k, \sigma_{\text{min}}\},
\end{equation}
where $\rho_1, \rho_2\!\in\!(0,1)$ denote decay rates, $\sigma_0\geq \sigma_{\text{min}}$ and $\alpha_0\geq 1$ are initial values, and $\sigma_{\text{min}}\!>\!0$ is the minimum noise level. These parameters are selected using the principled techniques of~\citet{song2020improved} such that there exist indices $K_\alpha, K_\sigma < N-1$ satisfying $\alpha_k =1, \forall k\geq K_\alpha$ and $\sigma_k =\sigma_{\text{min}}, \forall k\geq K_\sigma$. Our analysis relies on these properties together with a continuous-time interpolation of the ZO-APMC iterates incorporating the annealing and noise schedules. For \(t\in[k\gamma,(k+1)\gamma]\), the interpolation is defined by
\begin{align}
    \vx_{t} \coloneqq \vx_{k\gamma} - (t-k\gamma)(\vg_k &- \alpha_k \gS_{\theta}(\vx_{k\gamma}, \sigma_k)) \label{eq:proposed-zo-posterior-sampling-interpolation} \\ 
    &  \quad\quad\quad\quad + \sqrt{2}(\mB_t - \mB_{k\gamma}).\notag 
\end{align}Here, \(\gS_\theta(\vx_{k\gamma}, \sigma_k)\) estimates the perturbed score \(-\nabla h_{\sigma_k}(\vx_{k\gamma})\), rather than the true score \(-\nabla h(\vx_{k\gamma})\). Following~\citet{sun2024provable}, we therefore impose the following assumption to quantify the discrepancy between the perturbed and true scores.

\begin{assumption}\label{ass:perturb_prior}
    Let $h_{\sigma_k}$ be the potential function of the perturbed prior, defined as $p_{\sigma_k}(\vx)\propto\mathrm{exp}(-h_{\sigma_k}(\vx))$, $p_{\sigma_k}(\vx)\coloneqq \int_{\sR^d}p(\vz)\gN(\vx| \vz, \sigma_k^2 I)\,d\vz$, where $p(\vx)$ denotes the unperturbed prior. We assume that for any $\sigma_k>0$ and $\vx\in\sR^d$, $\|\nabla h_{\sigma_k}(\vx) - \nabla h (\vx)\|_2\leq C_1\sigma_k$ for some constant $C_1>0$. 
\end{assumption}Under Assumption~\ref{ass:perturb_prior}, it follows that $\nabla h_{\sigma_k}\to\nabla h$ as $\sigma_k\to0$. For special cases, such as Gaussian priors, the discrepancy between the perturbed and true scores can be characterized analytically. However, deriving a closed-form bound for this discrepancy is generally intractable for arbitrary prior distributions.

To account for the SGM estimation error, $\gS_{\theta}(\vx_{k\gamma},\sigma_k)\approx -\nabla h_{\sigma_k}(\vx_{k\gamma})$, we require an additional assumption. Since SGMs are trained using the denoising score matching objective~\citep{song2021score,ho2020denoising}, their optimal score network satisfies $\gS_{\theta}(\vx_{k\gamma},\sigma_k) = -\nabla h_{\sigma_k}(\vx_{k\gamma})$ with probability 1. Following this characterization and prior work~\citep{sun2024provable}, we impose the following assumption.
\begin{assumption}\label{ass:score_net}
For any $\sigma_k>0$ and all $\vx\in\mathbb{R}^d$, the score network satisfies 
$\|\gS_{\theta}(\vx,\sigma_k)+\nabla h_{\sigma_k}(\vx)\|_2\le\varepsilon_{\sigma_k}<\infty$ 
and 
$\|\gS_\theta(\vx,\sigma_k)\|_2\le C_2\sigma_k^{-1}$.
\end{assumption} Unlike~\citet{sun2024provable}, we relax the uniform norm bound on the score network to the noise-scale-dependent bound $\|\gS_\theta(\vx,\sigma_k)\|_2\le C_2\sigma_k^{-1}$. This assumption is consistent with empirical observations in seminal score-based generative modeling works~\citep{song2019generative,song2021score,song2020improved}, where the score magnitude is observed to scale as $\|\gS_\theta(\vx,\sigma_k)\|_2\propto\sigma_k^{-1}$. Additionally, in contrast to prior work \citep{yang2022convergence,lee2022convergence,sun2024provable}, our analysis does not require the score network to be Lipschitz continuous, which is often violated in practice. 

We now present our main result for the ZO-APMC posterior sampling algorithm~(\ref{eq:zo-posterior-sampling-iterates}) with SGM prior. 

\begin{theorem}\label{thm:zo-posterior-sampling-fi}
    Let $\pi\propto \ell(\vy|\vx) p(\vx)$ be the posterior with the likelihood $\ell(\vy|\vx)\propto \mathrm{exp}(-f(\vx))$ and the prior $p(\vx)\propto \mathrm{exp}(-h(\vx))$. Suppose the likelihood potential $f$ satisfies Assumptions~\ref{ass:lipschitzness},~\ref{ass:smoothness} with Lipschitz constants $L_{f_1}$, $L_{f_2}$, respectively, the prior potential $h$ satisfies Assumption~\ref{ass:smoothness} with Lipschitz constant $L_h$ and Assumption~\ref{ass:perturb_prior}, and SGM satisfies Assumption~\ref{ass:score_net} with decreasing error $\varepsilon_{\sigma_k} = \mathcal{O}(k^{-1/2})$ for $\sigma_k>0$, and $k\geq 1$. Define $L_m\coloneqq \max\{L_{f_2} + L_h, L_{f_1}\}$. Let $(\nu_t)_{t\geq 0}$ denote the law of interpolation in~(\ref{eq:proposed-zo-posterior-sampling-interpolation}) generated by the step size $\gamma$, probability $p$, batch size $b$, smoothing parameter of ZO estimates $\mu$ stated in Theorem~\ref{thm:zo-sampling-fi} with the annealing and noise schedules defined in~(\ref{def:annealing-schedule-sigma-alpha}). Then, the time-averaged law $\Bar{\nu}_{N\gamma}\coloneqq \frac{1}{N\gamma}\int_0^{N\gamma}\nu_t\,dt$ satisfies $\mathrm{FI}(\Bar{\nu}_{N\gamma}\|\pi) \leq \varepsilon$ after $\mathcal{O} (d^7 L_m^4 / \varepsilon^4)$ iterations with $\mathcal{O} (1)$ function evaluations per iteration. 
\end{theorem}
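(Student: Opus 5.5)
We follow the interpolation argument of \citet{balasubramanian2022towards}, as for Theorem~\ref{thm:zo-sampling-fi}, and treat the annealing and score errors as extra bias terms in the drift. Write the posterior potential as $F \coloneqq f + h$, so $\pi\propto \exp(-F)$ and $\nabla F$ is $(L_{f_2}+L_h)$-Lipschitz, hence $L_m$-Lipschitz. On each interval $[k\gamma,(k+1)\gamma]$ the interpolation~(\ref{eq:proposed-zo-posterior-sampling-interpolation}) has drift $\vz_k \coloneqq \vg_k - \alpha_k \gS_\theta(\vx_{k\gamma},\sigma_k)$, which is frozen at $\vx_{k\gamma}$.

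\textbf{Step 1: one-interval KL dissipation.} The standard conditional Fokker--Planck computation gives
\begin{equation*}
\frac{d}{dt}\mathrm{KL}(\nu_t\|\pi) \le -\tfrac{3}{4}\mathrm{FI}(\nu_t\|\pi) + \E\bigl[\|\nabla F(\vx_t) - \vz_k\|_2^2\bigr].
\end{equation*}
Integrating over $[0,N\gamma]$, dividing by $N\gamma$, and using convexity of $\mathrm{FI}(\cdot\|\pi)$ yields
\begin{equation*}
\mathrm{FI}(\bar\nu_{N\gamma}\|\pi) \lesssim \frac{\mathrm{KL}(\nu_0\|\pi)}{N\gamma} + \frac{1}{N\gamma}\sum_{k}\int_{k\gamma}^{(k+1)\gamma}\E\bigl[\|\nabla F(\vx_t)-\vz_k\|^2\bigr]\,dt.
\end{equation*}
We split the integrand into five pieces:
\begin{equation*}
\begin{aligned}
\nabla F(\vx_t)-\vz_k ={}& \bigl(\nabla F(\vx_t)-\nabla F(\vx_{k\gamma})\bigr) + \bigl(\nabla f(\vx_{k\gamma})-\nabla f_\mu(\vx_{k\gamma})\bigr) + \bigl(\nabla f_\mu(\vx_{k\gamma})-\vg_k\bigr)\\
&+ \bigl(\nabla h(\vx_{k\gamma})-\nabla h_{\sigma_k}(\vx_{k\gamma})\bigr) + \bigl(\nabla h_{\sigma_k}(\vx_{k\gamma})+\alpha_k\gS_\theta(\vx_{k\gamma},\sigma_k)\bigr).
\end{aligned}
\end{equation*}
The first piece is discretization error, bounded by $L_m^2\E\|\vx_t-\vx_{k\gamma}\|^2$. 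The second is ZO bias, at most $\mu^2L_{f_2}^2(d+3)^3$ by Lemma~\ref{lem:zeroth-order}. The third is the estimator error $e_k^2$. The fourth is at most $C_1^2\sigma_k^2$ by Assumption~\ref{ass:perturb_prior}. For the fifth, we write it as $(\nabla h_{\sigma_k}+\gS_\theta) + (\alpha_k-1)\gS_\theta$, which is bounded by $2\varepsilon_{\sigma_k}^2 + 2(\alpha_k-1)^2C_2^2\sigma_k^{-2}$ using Assumption~\ref{ass:score_net}.

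\textbf{Step 2: estimator error and moments.} Unroll Proposition~\ref{prop:error_prop}, which is unchanged since it only concerns $f$ and the iterate increments. This gives
\begin{equation*}
\sum_k e_k^2 \le \frac{1}{p}\Bigl(\frac{p}{b}\sum_k \sigma_k^2 + \frac{4dL_{f_1}^2}{\mu^2 b'}\sum_k\Delta_k\Bigr) + \frac{e_0^2}{p}.
\end{equation*}
Here $\sigma_k^2\lesssim L_{f_1}^2 d$ by Lipschitzness of $f$, since $\|\widetilde\nabla f_\mu\|\le L_{f_1}\|\vu\|^2$ implies $\E\le L_{f_1}^2(d+2)^2$. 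We then bound $\Delta_k \lesssim \gamma^2\E\|\vz_k\|^2 + \gamma d$. The drift now also contains $\alpha_k\|\gS_\theta\|\le \alpha_0 C_2\sigma_{\min}^{-1}$, a bounded constant, so $\E\|\vz_k\|^2\lesssim L_{f_1}^2d^2 + e_k^2 + \text{const}$. The $e_k^2$ contribution enters multiplied by $\gamma^2 d/(\mu^2 p)$, which is small under the parameter choices and can therefore be absorbed on the left. The discretization piece is handled the same way, via $\E\|\vx_t-\vx_{k\gamma}\|^2\le \Delta_k$-type bounds. With $\gamma,p,b,\mu$ as in Theorem~\ref{thm:zo-sampling-fi}, all of these contributions reduce to exactly the terms already controlled in that theorem. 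This is where we reuse its computation, with $L_m$ redefined.

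\textbf{Step 3: annealing/score bias, the main obstacle.} We must show that $\frac{1}{N}\sum_{k<N}\bigl(C_1^2\sigma_k^2 + \varepsilon_{\sigma_k}^2 + (\alpha_k-1)^2C_2^2\sigma_k^{-2}\bigr)$ is $o(1)$ at the required rate. The $(\alpha_k-1)$ term vanishes for $k\ge K_\alpha$, so it contributes $O(K_\alpha\,\alpha_0^2\sigma_{\min}^{-2}/N)$, which decays like $1/N$. With $\varepsilon_{\sigma_k}=\mathcal{O}(k^{-1/2})$, the average of $\varepsilon_{\sigma_k}^2$ is $\mathcal{O}(\log N/N)$. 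The delicate term is $C_1^2\sigma_k^2$, which only decays to $C_1^2\sigma_{\min}^2$ rather than to zero. I would treat $\sigma_{\min}$ as chosen small enough, for instance $\sigma_{\min}^2\lesssim \varepsilon$, with the geometric phase $k<K_\sigma$ contributing $O(\sigma_0^2/((1-\rho_2^2)N))$. Since $N\gamma = N^{1/4}/(L_m d^{7/4})$, the dominant rate remains $\mathcal{O}(d^{7/4}L_m N^{-1/4})$ from the KL and ZO terms. All bias contributions are of lower order, giving $\mathrm{FI}\le\varepsilon$ once $N=\mathcal{O}(d^7L_m^4/\varepsilon^4)$. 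The evaluations per iteration are $pb+(1-p)2b' = \mathcal{O}(1)$ in expectation.

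The step I expect to be hardest is Step 2's moment control without Lipschitzness of $\gS_\theta$. Here we rely on the $\sigma_{\min}^{-1}$ norm bound to make the prior drift's contribution to $\Delta_k$ a constant.
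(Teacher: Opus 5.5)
Your argument is correct and follows the paper's skeleton. It uses Lemma~\ref{lemma:sampling_first} on the interpolation and the same add-and-subtract decomposition, with the prior/score bias bounded by $C_1\sigma_k+\varepsilon_{\sigma_k}+(\alpha_k-1)C_2\sigma_k^{-1}$. It invokes Proposition~\ref{prop:error_prop} for $e_k^2$; summing it, as you do, is equivalent to the paper's telescoping Lyapunov function $\gL_k$. It keeps the parameters of Theorem~\ref{thm:zo-sampling-fi} and averages the schedule biases the same way, with $\sigma_{\min}$ shrunk to the accuracy level. The paper sets $\sigma_{\min}=\mathcal{O}(N^{-1/8})$ and, like you, must impose this beyond what the theorem statement says.

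The genuine difference is how you control the drift second moment inside $\Delta_k$. The paper re-expands $\vg_k-\alpha_k\gS_\theta$ around $\nabla\log\pi(\vx_t)$. It then uses Lemma~\ref{lemma:sampling_final} to replace $\E\|\nabla\log\pi(\vx_t)\|^2$ by $\mathrm{FI}(\nu_t\|\pi)+2dL_\pi$. Finally it absorbs the resulting $\gamma^2L_m^2\phi(\mu)\,\mathrm{FI}$ term into $-\tfrac34\mathrm{FI}$ via $\gamma\le 1/(L_m\sqrt{85\phi(\mu)})$.

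You instead bound the drift directly, using $\|\nabla f_\mu\|\le L_{f_1}$ and $\alpha_k\|\gS_\theta\|\le\alpha_0C_2\sigma_{\min}^{-1}$. As a result no Fisher information appears on the right-hand side, Lemma~\ref{lemma:sampling_final} is unnecessary, and the route is more elementary. The price is a term $\gamma^2\alpha_0^2C_2^2\sigma_{\min}^{-2}$ in every $\Delta_k$, even after annealing has ended. In the paper's route the per-step bias there is only $C_1\sigma_{\min}+\varepsilon_{\sigma_k}$.

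Contrary to your remark, this term is not a bounded constant once you set $\sigma_{\min}^2\lesssim\varepsilon$. You should record that $\alpha_0^2C_2^2\sigma_{\min}^{-2}$ reaches the final bound multiplied by $\mathcal{O}(L_m^2\phi(\mu)\gamma^2)=\mathcal{O}(1/(b'Nd))$. Its contribution is therefore $\mathcal{O}(\alpha_0^2C_2^2/(b'Nd\,\sigma_{\min}^2))$, which is of lower order.

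One slip: the ZO second moment is $\lesssim L_{f_1}^2d(d+2)$, not $L_{f_1}^2 d$. Since $d(d+2)L_{f_1}^2/b$ is one of the dominant terms, this order matters; the $(d+2)^2$ bound you write next is the right one to use.
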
The full theorem statement and the proof are provided in Appendix~\ref{appendix:zo-posterior-sampling-fi}. Intuitively, the theorem guarantees that the sample distribution produced by ZO-APMC increasingly captures the posterior score information that governs directions toward high-probability, measurement-consistent reconstructions in inverse problems. This is achieved using only forward model evaluations without any gradient computation in~(\ref{eq:inverse_problem}). Moreover, ZO-APMC achieves $\varepsilon$-accuracy with only $\mathcal{O}(1)$ function evaluations per iteration, compared to the $\mathcal{O}(d)$ batch sizes typically required by standard batched ZO estimator. This substantially reduces the per-iteration computational and memory costs, enabling scalable posterior sampling for black-box inverse problems. Although the iteration complexity exhibits a high-order polynomial dependence on the dimension, our empirical results on high-dimensional tasks such as MRI reconstruction show that ZO-APMC converges in practice with substantially fewer iterations, matching the iteration count of its gradient-based counterpart \textsc{APMC}~\citep{sun2024provable}. 

Leveraging this result and assuming that the target posterior $\pi$ satisfies a Poincar\'e inequality, we obtain a stronger convergence guarantee. 
\begin{corollary}\label{corr:zo-posterior-sampling-tv}
    Let $\pi\propto \ell(\vy|\vx) p(\vx)$ be the posterior with the likelihood $\ell(\vy|\vx)\propto \mathrm{exp}(-f(\vx))$ and the prior $p(\vx)\propto \mathrm{exp}(-h(\vx))$. Suppose the likelihood potential $f$ satisfies Assumptions~\ref{ass:lipschitzness},~\ref{ass:smoothness} with Lipschitz constants $L_{f_1}$, $L_{f_2}$, respectively, the target posterior $\pi$ satisfies Assumption~\ref{ass:poincare-inequality} with constant $C_{\mathrm{PI}}\!>\!0$, the prior potential $h$ satisfies Assumption~\ref{ass:smoothness} with Lipschitz constant $L_h$ and Assumption~\ref{ass:perturb_prior}, and SGM satisfies Assumption~\ref{ass:score_net} with decreasing error $\varepsilon_{\sigma_k} = \mathcal{O}(k^{-1/2})$ for $\sigma_k>0$, and $k\geq 1$. Define $L_m\coloneqq \max\{L_{f_2} + L_h, L_{f_1}\}$. Let $(\nu_t)_{t\geq 0}$ denote the law of interpolation~(\ref{eq:proposed-zo-posterior-sampling-interpolation}) generated by the step size $\gamma$, probability $p$, batch size $b$, smoothing parameter of ZO estimates $\mu$ stated in Theorem~\ref{thm:zo-sampling-fi} with the annealing schedule defined in~(\ref{def:annealing-schedule-sigma-alpha}). Then, the time-averaged law $\Bar{\nu}_{N\gamma}\coloneqq \frac{1}{N\gamma}\int_0^{N\gamma}\nu_t\,dt$ satisfies $\|\Bar{\nu}_{N\gamma} - \pi\|^2_{\mathrm{TV}} \leq \varepsilon$ after $\mathcal{O} (d^7 L_m^4C_{\mathrm{PI}}^4 / \varepsilon^4)$ iterations with $\mathcal{O} (1)$ function evaluations per iteration. 
\end{corollary}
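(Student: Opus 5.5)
The plan is to reduce the claim to Theorem~\ref{thm:zo-posterior-sampling-fi} in the same way that Corollary~\ref{corr:zo-sampling-tv} is obtained from Theorem~\ref{thm:zo-sampling-fi}. The bridge between relative Fisher information and total variation is the standard chain
\begin{equation*}
\|\bar\nu_{N\gamma}-\pi\|_{\mathrm{TV}}^2 \le \chi^2(\bar\nu_{N\gamma}\|\pi)\ \ (\text{or } \mathrm{KL}/2), \qquad \text{combined with the Poincar\'e inequality.}
\end{equation*}
Concretely, we would first apply Assumption~\ref{ass:poincare-inequality} to the test function $\phi=\sqrt{\nu/\pi}$, using a truncation/approximation argument to pass from compactly supported smooth functions to this $\phi$. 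This gives
\begin{equation*}
1-\Bigl(\int\sqrt{\nu\pi}\Bigr)^2=\operatorname{Var}_\pi(\phi)\le C_{\mathrm{PI}}\E_\pi\|\nabla\phi\|_2^2=\tfrac{C_{\mathrm{PI}}}{4}\mathrm{FI}(\nu\|\pi).
\end{equation*}
The left-hand side is $1-(1-H^2)^2\ge H^2$, where $H^2=1-\int\sqrt{\nu\pi}$ is the squared Hellinger distance. Combining this with $\|\nu-\pi\|_{\mathrm{TV}}^2\le 2H^2$ (for instance, via Cauchy--Schwarz applied to $|\nu-\pi|=|\sqrt\nu-\sqrt\pi|(\sqrt\nu+\sqrt\pi)$) yields the key inequality
\begin{equation*}
\|\nu-\pi\|_{\mathrm{TV}}^2\le \tfrac{C_{\mathrm{PI}}}{2}\,\mathrm{FI}(\nu\|\pi).
\end{equation*}
This inequality should already be contained in the proof of Corollary~\ref{corr:zo-sampling-tv}, so we would simply cite it there.

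Second, apply this inequality with $\nu=\bar\nu_{N\gamma}$. Under the stated hypotheses on $f$, $h$, the score network and the schedules, Theorem~\ref{thm:zo-posterior-sampling-fi} (in its full, non-asymptotic form from the appendix) gives an explicit bound $\mathrm{FI}(\bar\nu_{N\gamma}\|\pi)\le B(N)$. With the parameter choices of Theorem~\ref{thm:zo-sampling-fi}, this bound has the form $B(N)=\mathcal{O}(d^{7/4}L_m N^{-1/4})$ plus lower-order terms. These include the annealing bias, which is a finite sum over $k<\max\{K_\alpha,K_\sigma\}$ divided by $N\gamma$; the $C_1\sigma_{\min}$-type prior perturbation terms; and the SGM error, controlled by $\varepsilon_{\sigma_k}=\mathcal{O}(k^{-1/2})$, whose average over $k$ decays like $N^{-1/2}$. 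We then obtain $\|\bar\nu_{N\gamma}-\pi\|_{\mathrm{TV}}^2\le \tfrac{C_{\mathrm{PI}}}{2}B(N)$.

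Third, to get the iteration count, we require $\tfrac{C_{\mathrm{PI}}}{2}B(N)\le\varepsilon$, that is, $\mathrm{FI}\le 2\varepsilon/C_{\mathrm{PI}}$. Substituting $\varepsilon/C_{\mathrm{PI}}$ for $\varepsilon$ in the complexity $\mathcal{O}(d^7L_m^4/\varepsilon^4)$ of Theorem~\ref{thm:zo-posterior-sampling-fi} gives $N=\mathcal{O}(d^7L_m^4C_{\mathrm{PI}}^4/\varepsilon^4)$. The per-iteration cost is unchanged: with $b=\lceil p^{-1}\rceil$, the expected number of evaluations is $pb+(1-p)2b'=\mathcal{O}(1)$.

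The main obstacle I anticipate is the first step's rigor. The Poincar\'e inequality is assumed only for smooth, compactly supported $\phi$, while $\sqrt{\bar\nu/\pi}$ is neither, and $\bar\nu_{N\gamma}$ is a mixture of interpolated laws. I would handle this with a cutoff $\phi_R=\chi_R\sqrt{\bar\nu/\pi}$ and mollification. Finiteness of $\mathrm{FI}$, guaranteed by the theorem, controls $\E_\pi\|\nabla\phi\|^2$, and Fatou's lemma or monotone convergence passes to the limit $R\to\infty$. Smoothness and positivity of $\bar\nu_{N\gamma}$ follow from the Gaussian noise in the interpolation for $t>0$.
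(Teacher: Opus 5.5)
Your proposal is correct and follows the paper's route. Like the paper, you take the non-asymptotic FI bound of Theorem~\ref{thm:zo-posterior-sampling-fi} for $\bar\nu_{N\gamma}$, convert it to squared TV via the Poincar\'e inequality, and replace $\varepsilon$ by $\varepsilon/C_{\mathrm{PI}}$ in the iteration count. The only difference is that you derive the FI-to-TV step yourself through $\phi=\sqrt{\nu/\pi}$ and the Hellinger distance, obtaining the sharper constant $C_{\mathrm{PI}}/2$, whereas the paper cites Lemma~\ref{lemma:tv_lemma} with constant $4C_{\mathrm{PI}}$ and so needs no truncation argument. One caveat: you list the $C_1\sigma_{\min}$ terms as lower order, but this holds only if $\sigma_{\min}$ shrinks with $N$ (the paper's proof of Theorem~\ref{thm:zo-posterior-sampling-fi} sets $\sigma_{\min}=\mathcal{O}(N^{-1/8})$), since for fixed $\sigma_{\min}$ the average $\frac{1}{N}\sum_k C_1^2\sigma_k^2$ leaves a floor of order $C_1^2\sigma_{\min}^2$.
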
Corollary~\ref{corr:zo-posterior-sampling-tv} (full statement and proof are in Appendix~\ref{appendix:zo-posterior-sampling-tv}) intuitively implies that, for posterior distributions without flat valleys, widely separated modes, or heavy tails, the reconstructions produced by ZO-APMC are statistically indistinguishable from samples drawn from the true posterior, up to an $\varepsilon$ error, after $\mathcal{O}(d^7L_m^4C_{\mathrm{PI}}^4/\varepsilon^4)$ iterations using $\mathcal{O}(1)$ memory on average.
\begin{remark}
    The proposed estimator in~(\ref{def:proposed-vr-zo-estimate}) is randomized, yielding an expected per-iteration cost of $\mathcal{O}(1)$. In applications requiring deterministic bounds on memory usage or per-iteration complexity, our estimator can be replaced with a ZO variant of the STORM estimator~\citep{cutkosky2019momentum}. By choosing $\beta\!=\!1-p$, where $\beta\in[0,1)$ denotes the STORM momentum parameter, one obtains the same recursive error bound as in Proposition~\ref{prop:error_prop}. Consequently, the convergence guarantees established in this work continue to hold under this alternative estimator.
\end{remark}

Finally, we establish weak convergence of the generated samples to the target posterior, showing that they asymptotically recover the desired posterior law.

\begin{theorem}\label{thm:zo-posterior-sampling-id}
    Let $\pi\propto \ell(\vy|\vx) p(\vx)$ be the posterior with the likelihood $\ell(\vy|\vx)\propto \mathrm{exp}(-f(\vx))$ and the prior $p(\vx)\propto \mathrm{exp}(-h(\vx))$. Suppose the likelihood potential $f$ satisfies Assumptions~\ref{ass:lipschitzness},~\ref{ass:smoothness} with Lipschitz constants $L_{f_1},L_{f_2}$, respectively, the prior potential $h$ satisfies Assumption~\ref{ass:smoothness} with Lipschitz constant $L_h$, and Assumption~\ref{ass:perturb_prior}, and SGM satisfies Assumption~\ref{ass:score_net} with decreasing error $\varepsilon_{\sigma_k} = \mathcal{O}(k^{-1/2})$ for $\sigma_k> 0$, and $k\geq1$. Define $L_m\coloneqq \max\{L_{f_2} + L_h, L_{f_1}\}$. Let $(\nu_t)_{t\geq 0}$ denote the law of interpolation~(\ref{eq:proposed-zo-posterior-sampling-interpolation}) generated by time-varying step size $\gamma_k$, probability $p_k$, batch size $b_k$, and smoothing parameter of ZO estimates $\mu_k$ stated in Theorem~\ref{thm:zo-sampling-id} with the annealing schedule defined in~(\ref{def:annealing-schedule-sigma-alpha}). Then, the time-averaged law $\Bar{\nu}_{\tau_n} \coloneqq \frac{1}{\tau_n}\int_0^{\tau_n}\nu_t\,dt$, where $\tau_n\coloneqq \sum_{k=1}^n \gamma_k$ converges weakly to $\pi$.  
\end{theorem}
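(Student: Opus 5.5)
We plan to follow the same route as Theorem~\ref{thm:zo-sampling-id}: first obtain a non-asymptotic FI bound with time-varying parameters, then pass from FI to weak convergence. Write $\pi\propto\exp(-F)$ with $F\coloneqq f+h$. The drift of the interpolation~(\ref{eq:proposed-zo-posterior-sampling-interpolation}) on $[k\gamma_k\text{-block}]$ is $\vg_k-\alpha_k\gS_\theta(\vx_{k\gamma},\sigma_k)$. We decompose its deviation from $\nabla F(\vx_t)$ as follows.
\begin{align*}
\vg_k-\alpha_k\gS_\theta-\nabla F(\vx_t) ={}& (\vg_k-\nabla f_{\mu_k}(\vx_{k}))+(\nabla f_{\mu_k}-\nabla f)(\vx_k)\\
&+(\nabla F(\vx_k)-\nabla F(\vx_t)) -(\alpha_k-1)\gS_\theta\\
&-(\gS_\theta+\nabla h_{\sigma_k})(\vx_k)+(\nabla h_{\sigma_k}-\nabla h)(\vx_k).
\end{align*}
We differentiate $\mathrm{KL}(\nu_t\|\pi)$ along the interpolation, as in~\citet{balasubramanian2022towards}. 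This gives
\[
\tfrac{d}{dt}\mathrm{KL}(\nu_t\|\pi)\le-\tfrac34\mathrm{FI}(\nu_t\|\pi)+\E\|\text{drift error}\|^2 .
\]
Integrating over each step and summing yields
\[
\int_0^{\tau_n}\mathrm{FI}(\nu_t\|\pi)\,dt\lesssim \mathrm{KL}(\nu_0\|\pi)+\sum_k\gamma_k E_k ,
\]
where $E_k$ collects six error terms, which we bound one at a time.
\begin{itemize}
\item The ZO estimation error $e_k^2$ is handled by Proposition~\ref{prop:error_prop} with $p_k$, $b_k$, $\mu_k$. Unrolling the recursion with $p_k=\tfrac12k^{-1/2}$ and $\sigma_k^2\lesssim dL_{f_1}^2$ gives $e_k^2\lesssim d$ times a bounded quantity. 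The $\Delta_k/\mu_k^2$ term is also controlled, since $\Delta_k\lesssim\gamma_k^2\E\|\text{drift}\|^2+d\gamma_k$.
\item The ZO bias term is at most $\mu_k^2L_{f_2}^2d^3$ by Lemma~\ref{lem:zeroth-order}.
\item The discretization term is at most $L_m^2\E\|\vx_t-\vx_k\|^2$.
\item The annealing term $(\alpha_k-1)^2C_2^2\sigma_k^{-2}$ vanishes for $k\ge K_\alpha$.
\item The score error $\varepsilon_{\sigma_k}^2=\mathcal O(1/k)$ follows from Assumption~\ref{ass:score_net}.
\item The perturbation term $C_1^2\sigma_k^2$ follows from Assumption~\ref{ass:perturb_prior}.
\end{itemize}

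Dividing by $\tau_n$ and using convexity of $\mathrm{FI}(\cdot\|\pi)$ gives $\mathrm{FI}(\bar\nu_{\tau_n}\|\pi)\le \tau_n^{-1}\int_0^{\tau_n}\mathrm{FI}(\nu_t\|\pi)\,dt$. With $\gamma_k=C_\gamma k^{-3/2}$, the sum $\tau_n$ converges to a finite limit $\tau_\infty$. The averaged FI therefore does not tend to zero directly, and we must handle this as in the formal version of Theorem~\ref{thm:zo-sampling-id}. The statement fixes the same schedules, and we reuse that theorem's mechanism: the weighted error sum $\sum_k\gamma_kE_k$ is finite, and the contribution of late times is controlled. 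If the formal version of Theorem~\ref{thm:zo-sampling-id} instead concludes from uniform FI bounds plus tightness, we mirror that argument.

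For the tightness route, the standing bound is $\sup_n\mathrm{FI}(\bar\nu_{\tau_n}\|\pi)<\infty$. Combined with a uniform second-moment bound from the drift bounds, this makes $(\bar\nu_{\tau_n})$ tight. Lower semicontinuity of $\mathrm{FI}(\cdot\|\pi)$ under weak convergence then shows every limit point $\nu^\ast$ satisfies $\mathrm{FI}(\nu^\ast\|\pi)=0$. Hence $\nu^\ast=\pi$, and the whole sequence converges weakly to $\pi$.

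The extra terms relative to the unconditioned case are harmless. The annealing term contributes only finitely many nonzero summands, each finite because $\sigma_k\ge\sigma_{\min}$. The score-error term sums to at most $\sum_k\gamma_k/k<\infty$. The perturbation term is constant after $K_\sigma$, at $C_1^2\sigma_{\min}^2$.

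\textbf{Main obstacle.} The main obstacle is this constant residual $C_1^2\sigma_{\min}^2$, together with the fact that the score network is not Lipschitz. This blocks bounding $\E\|\vx_t-\vx_k\|^2$ via smoothness of the full drift. We bypass the Lipschitz issue with the norm bound $\|\gS_\theta\|\le C_2/\sigma_{\min}$, which controls $\Delta_k$ uniformly. For the residual, we would first try to show it enters only with weight $\gamma_k$ relative to $\tau_n$. If it does not vanish, we would read weak convergence as holding up to the $\sigma_{\min}$-bias, or require $\sigma_{\min}\to0$ as in the formal statement in the appendix.
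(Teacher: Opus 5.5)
\textbf{The final step does not go through.} From $\sup_n\mathrm{FI}(\bar\nu_{\tau_n}\|\pi)<\infty$, lower semicontinuity only gives $\mathrm{FI}(\nu^\ast\|\pi)\le\liminf_n\mathrm{FI}(\bar\nu_{\tau_n}\|\pi)<\infty$ for a cluster point $\nu^\ast$. It does not give $\mathrm{FI}(\nu^\ast\|\pi)=0$. To identify $\nu^\ast=\pi$ you need $\mathrm{FI}(\bar\nu_{\tau_n}\|\pi)\to0$. With the only available bound, $\int_0^{\tau_n}\mathrm{FI}(\nu_t\|\pi)\,dt=O(1)$, this requires $\tau_n\to\infty$. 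You yourself observed that this fails, since $\tau_n\uparrow C_\gamma\zeta(3/2)<\infty$. Remarks such as ``the contribution of late times is controlled'' or ``read weak convergence up to the $\sigma_{\min}$-bias'' do not supply this step, so the proposal does not prove the statement.

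\textbf{The paper's proof has the same gap.} It follows your outline: Lemma~\ref{lemma:sampling_first} with Proposition~\ref{prop:error_prop}, finiteness of all $\gamma_k$-weighted bias sums, a uniform KL bound for tightness, then lower semicontinuity. There is one difference. Instead of bounding $e_k^2$ uniformly, it rewrites Proposition~\ref{prop:error_prop} as $e_k^2\le d(d+2)L_{f_1}^2/b+\ldots-\frac{1}{p}(e_{k+1}^2-e_k^2)$ and telescopes against the decreasing weights $c_k$. You would need a device like this in any regime with $\tau_n\to\infty$, because $\sum_k\gamma_k e_k^2$ with merely bounded $e_k^2$ grows like $\tau_n$.

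The paper then concludes from $\mathrm{FI}(\bar\nu_{\tau_n}\|\pi)\lesssim 1/\tau_n$ ``noting that $\tau_n\to\infty$''. That is false for $\gamma_k=C_\gamma k^{-3/2}$. Moreover, the residual you flagged is made finite there only via $\sum_k\gamma_k C_1^2\sigma_k^2\le C_1^2\sigma_0^2\sum_k\gamma_k<\infty$, that is, by the same summability that keeps $\tau_n$ bounded. So the missing step cannot be imported from the paper.

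\textbf{The statement as written appears to be false.} The drift has second moment bounded uniformly in $k$: $e_k^2$ is bounded, $\|\nabla f_\mu\|\le L_{f_1}$, and $\alpha_k\|\gS_\theta\|\le\alpha_0C_2/\sigma_{\min}$. Hence $\E\|\vx_t-\vx_0\|^2$ is bounded on $[0,\tau_\infty)$ by a constant independent of the initialization. Take $\nu_0=\gN(\vz,I)$ with $\|\vz\|$ large. Then every $\bar\nu_{\tau_n}$, and hence every weak limit, keeps mass at least $3/4$ in a fixed-radius ball around $\vz$, which $\pi$ does not.

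If you switch to non-summable steps, a fixed $\sigma_{\min}>0$ instead leaves an $O(C_1^2\sigma_{\min}^2)$ bias in the averaged FI bound. The limiting drift is that of the Langevin diffusion for $\propto\ell\,p_{\sigma_{\min}}$, not for $\pi$.

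A provable version would need $\sum_k\gamma_k=\infty$ and $\sigma_k\to0$ with $\sum_k\gamma_k\sigma_k^2<\infty$. It would also need re-checked summability of the $\gamma_k/b_k$, $\gamma_k\mu_k^2$ and $\gamma_k^2\phi_k(\mu_k)$ terms. Your hedge points in this direction, but that is a different theorem from the one stated.
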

The complete statement and the proof are provided in Appendix~\ref{appendix:zo-posterior-sampling-id}. Next, we validate our theoretical results through statistical and numerical experiments, and demonstrate the performance of ZO-APMC in practical settings.

\begin{figure}[t]
    \centering
    \includegraphics[width=\linewidth]{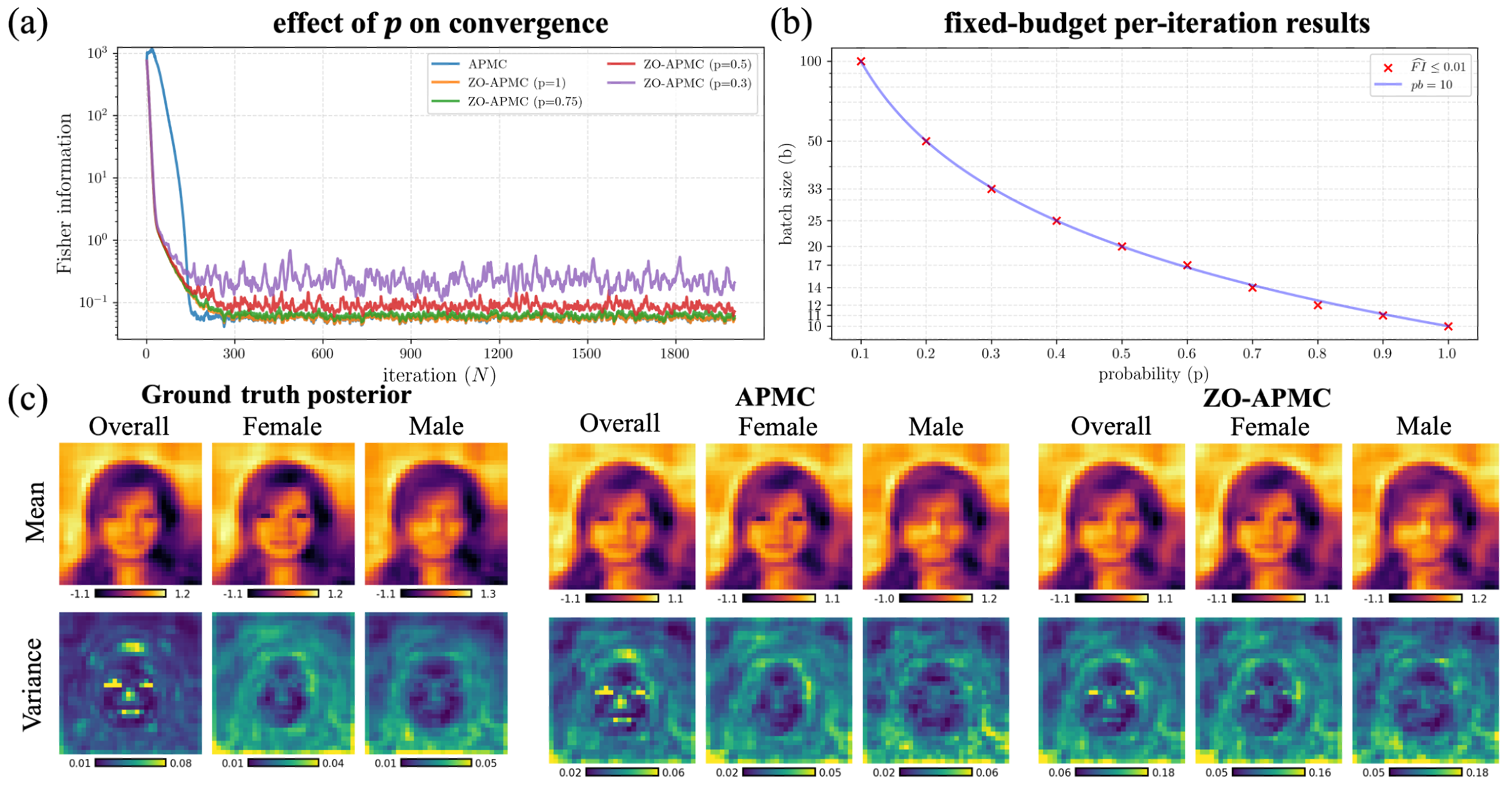}
    \caption{(a) Convergence of ZO-APMC with $b\!=\!10$, $b'\!=\!5$, and $\varepsilon_{k^\ast}\!=\!2.5$ for various $p$, alongside APMC convergence with gradient access. (b) Convergence results for fixed per-iteration cost. Each \textcolor{red}{$\bm{\times}$} indicate parameter settings $(p,b)$ for which the FI drops below $0.01$ after 2000 iterations. (c) Comparison of sample statistics generated by ZO-APMC and APMC with those of the analytical ground truth posterior. }
    \label{fig:toy_exps}
    \vspace{-12pt}
\end{figure}


\begin{figure*}[!t]
  \centering
  \begin{minipage}[t]{0.48\textwidth}
    \captionof{table}{Quantitative comparison with baselines for MRI reconstruction (SD: sample standard deviation). The best values of each metric for black-box and gradient-access settings are highlighted in \textbf{bold} and \underline{underline}, respectively.}
    \label{tab:brain_mri_results}
    \resizebox{\linewidth}{!}{%
      \begin{tabular}{lcccccc}
        \toprule
          & PSNR (dB)$\uparrow$ & SSIM$\uparrow$ & NRMSE$\downarrow$
          & SD$\downarrow$ & MSE$\downarrow$ \\
        \midrule
        PnPDM & 30.81 & 0.946 & 3.76e-2 & 2.16e-2 & 8.46e-4 \\
        DPS & 34.38 & 0.965 & 2.54e-2 & 2.06e-2 & 4.07e-4 \\
        APMC  & \underline{36.55} & \underline{0.973} & \underline{1.99e-2} & \underline{2.0e-2} & \underline{2.55e-4} \\
        \hdashline \addlinespace[2pt]
        Forward-GSG & 27.8 & 0.918 & 5.42e-2 & 3.26e-2 & 19.1e-4 \\
        Central-GSG & 27.78 & 0.917 & 5.43e-2 & 3.27e-2 & 19.2e-4 \\
        SCG & 7.1 & 0.711 &  7.67 & 1.38 & 0.21 \\
        DPG & 32.17 & 0.953 & 5.4e-2 & \textbf{2.69e-2} & 6.5e-4 \\
        EnKG & 31.32 & 0.934 & 5.72e-2 & 2.92e-2 & 6.72e-4 \\
        \hdashline \addlinespace[2pt]
        ZO-APMC (ours) & \textbf{35.29} & \textbf{0.966} & \textbf{2.28e-2} & 2.99e-2 & \textbf{3.29e-4} \\
        \bottomrule
      \end{tabular}}
  \end{minipage}
  \hfill
  \begin{minipage}[t]{0.49\textwidth}
    \vspace{2pt}
    \centering
    \includegraphics[width=\linewidth]{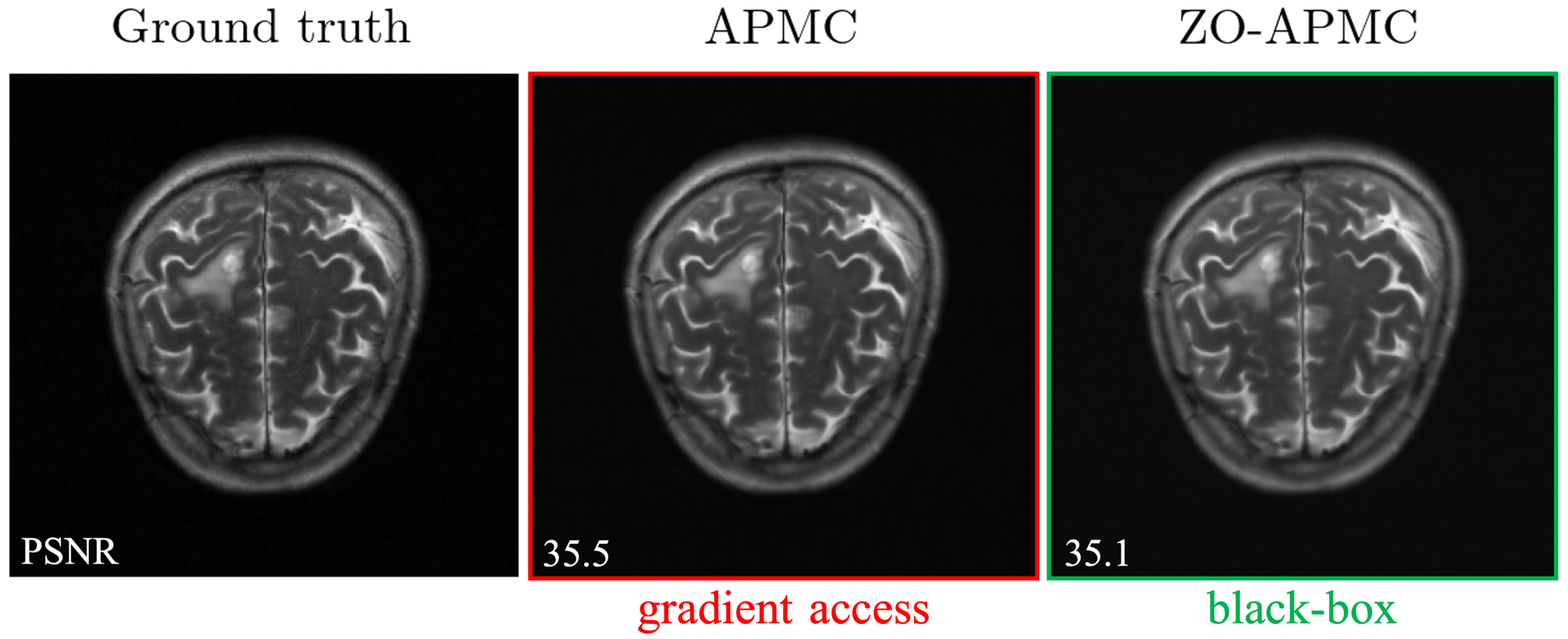}
    \vspace{-20pt}
    \caption{Visualization of averaged reconstructions generated by APMC in the setting with gradient access and ZO-APMC in the black-box setting. Despite relying solely on function evaluations, ZO-APMC achieves reconstruction quality comparable to APMC.}
    \label{fig:brain_mri_rep_case}
  \end{minipage}
  \vspace{-6mm}
\end{figure*}

\begin{figure*}[t]
    \vspace{12pt}
    \centering
    \includegraphics[width=0.9\linewidth]{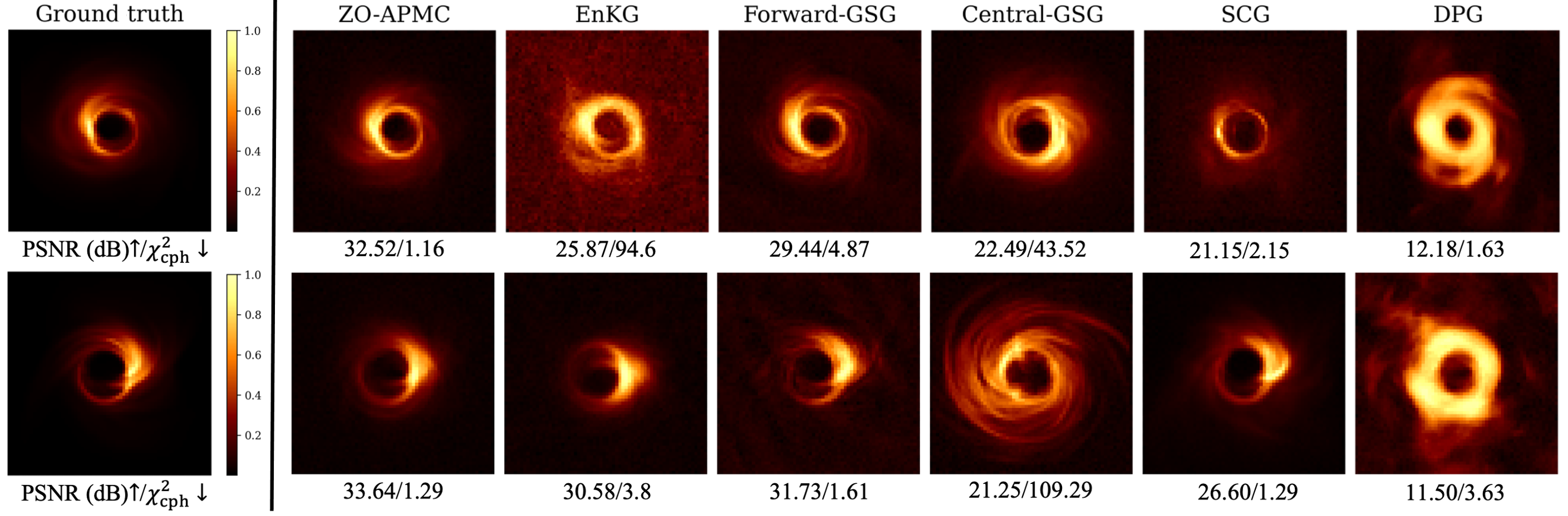}
    \vspace{-5pt}
    \caption{Visualization of mean reconstructions for the black-hole imaging inverse problem. Two representative test cases are shown (top and bottom rows), with ground truths in the left column and black-box method reconstructions in the remaining columns. PSNR (dB) and closure-phase error ($\chi_{\mathrm{cph}}^2$, lower is better) are reported below each reconstruction. }
    \label{fig:blackhole_rep_cases}
    \vspace{-10pt}
\end{figure*}

\begin{figure*}[t]
\centering
\vspace{-5pt}

\begin{minipage}[t]{0.55\textwidth}
  \vspace{6pt}
  \centering
  \captionof{table}{Quantitative comparison with baselines for black-hole imaging (SD: sample standard deviation). Best values for black-box and gradient-access settings are shown in \textbf{bold} and \textit{underline}, respectively.}
  \resizebox{\linewidth}{!}{
  {\small
  \begin{tabular}{lccccc}\toprule
           & PSNR$\uparrow$ & Blurred PSNR$\uparrow$ & $\chi_{\text{cph}}^{2}\downarrow$ & $\chi_{\text{camp}}^{2}\downarrow$ & SD$\downarrow$ \\
           \midrule
PnPDM & \underline{26.48} & 32.31 & \underline{11.48}  & 23.54 & 4.5e-2 \\
DPS & 25.61 & \underline{30.84} & 12.39  & 17.72 & \underline{4.32e-2} \\
APMC & 26.23 & 31.32 & 11.78  & 19.23 & 4.34e-2 \\
\hdashline \addlinespace[2pt]
Forward-GSG & 26.21  & 31.47 &  6.77  & 14.06 & 2.99e-2 \\
Central-GSG & 21.63 & 23.73 & 80.31 & 78.5 & 4.5e-2 \\
SCG         & 22.21 & 25.51 & 23.72 & 14.23 & 1.7e-2 \\
DPG         & 12.33 & 14.02 & 8.17 & 30.44 & \textbf{1.6e-2} \\
EnKG        & 22.86 & 27.69 & 64.37 & 33.44 & 0.925 \\
\hdashline \addlinespace[2pt]
ZO-APMC (Ours)  & \textbf{26.71} & \textbf{32.86} & \textbf{5.42} & \textbf{11.23} & 3.02e-2\\
\bottomrule
\end{tabular}}}
  \label{tab:blackhole_table}
\end{minipage}%
\hfill
\begin{minipage}[t]{0.4\textwidth}
  \vspace{1pt}
  \centering
  \includegraphics[width=\linewidth]{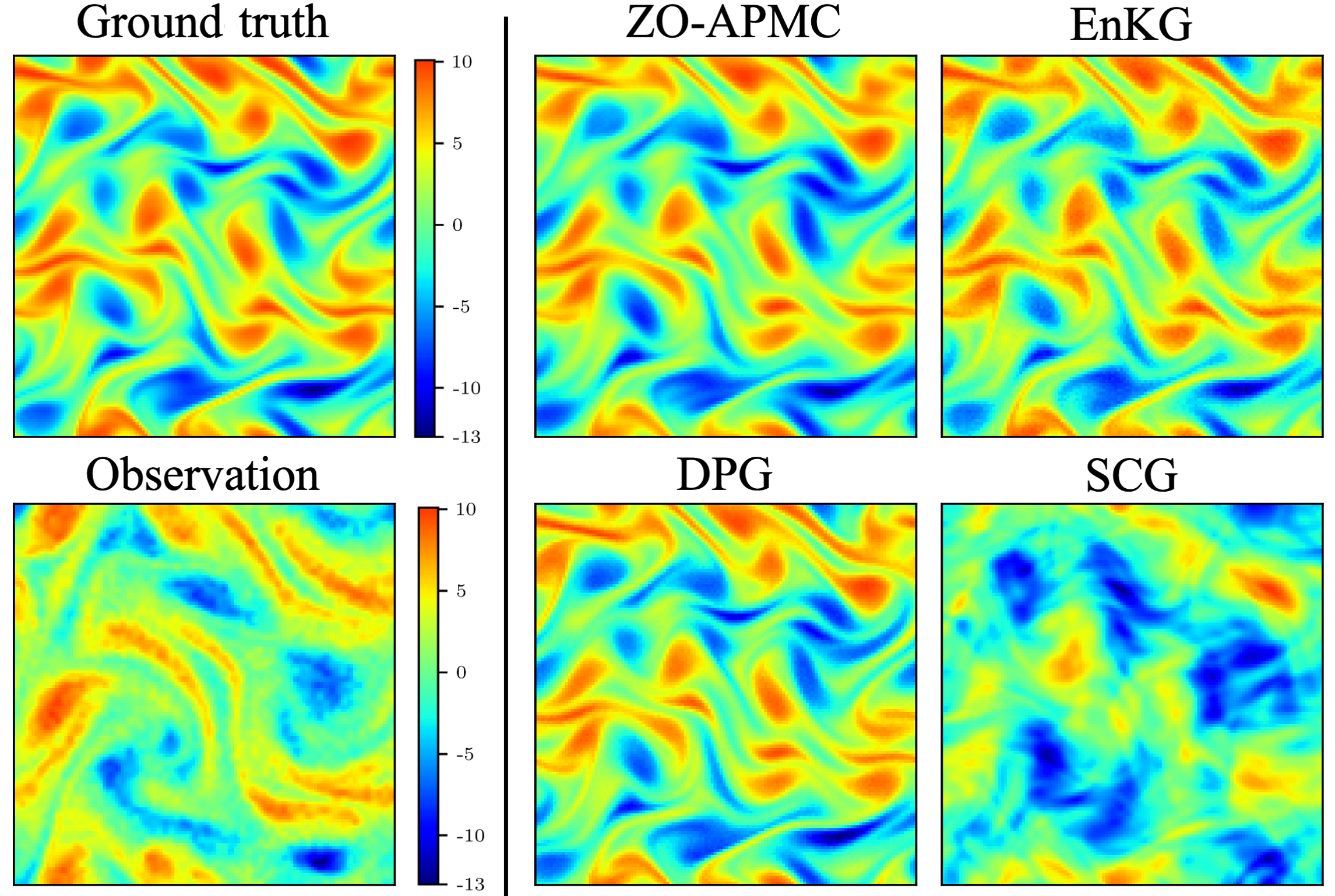}
  \vspace{-1.5em}
  {\captionsetup{type=figure,aboveskip=4pt,belowskip=0pt}
   \captionof{figure}{Visualization of mean samples for the Navier--Stokes inverse problem across black-box methods. \label{fig:navierstokes}}}
\end{minipage}
\vspace{-15pt}
\end{figure*}


\vspace{-4pt}
\section{Experiments}
\textbf{Baselines.} Our primary focus is on methods that assume black-box access to the forward model. Accordingly, we compare ZO-APMC against SCG~\citep{huang2024symbolic}, DPG~\citep{tang2024solving}, EnKG~\citep{zheng2024ensemble}, and Forward-GSG and Central-GSG proposed by \citet{zheng2024ensemble}. The GSG methods resemble DPS~\citep{chung2023diffusion}, but approximate the likelihood score via Tweedie's formula combined with ZO approximations. For completeness, we additionally evaluate gradient-based methods in settings where the forward model gradient are available. Specifically, we compare against DPS~\citep{chung2023diffusion}, PnPDM~\citep{wu2024principled}, and APMC~\citep{sun2024provable}, an annealed LMC posterior sampling method with gradient access and the gradient-based counterpart of ZO-APMC. 
\subsection{Toy Experiments}\label{sec:toy_exps_main_text}
\textbf{Numerical Validation.} To validate the convergence behavior predicted by our theory, we consider a synthetic inverse problem with bimodal 2D Gaussian mixture prior, random forward model $\mA$, and measurement noise $\xi\sim\gN(0,I)$. To mimic SGM estimation error, we use the analytical prior score corrupted by additive Gaussian noise with std. dev. $\varepsilon_{k^*}=2.5$. We generate 1000 samples with ZO-APMC from 20 random initializations of $\mA$ and report the mean FI relative to the analytical posterior. Fig.~\ref{fig:toy_exps}a shows that, with $b=10$, $b'=5$, ZO-APMC converges to near-zero FI for $p\in\{1,0.75,0.5\}$, but becomes unstable at $p=0.3$ due to the reduced total number of function evaluations. To validate converges under $\mathcal{O}(1)$ batch complexity, we vary $p$ and $b$ while keeping the per-iteration cost fixed $(pb=10)$ in Fig. \ref{fig:toy_exps}b. Across all parameter pairs, ZO-APMC converges below $0.01$ FI, consistent with our theoretical predictions.

\textbf{Statistical Validation.} To validate ZO-APMC's recovery of posterior mode statistics, we consider a compressed sensing problem with a bimodal Gaussian mixture prior constructed from $32\times 32$ CelebA~\citep{liu2015deep} images, a random forward model $\mA\in\sR^{115\times 1024}$, and a Gaussian measurement noise $\xi\!\sim\!\gN(0, 0.01I)$. The prior modes correspond to the ``male'' and ``female'' attributes shifted by $+1$ and $-1$, respectively, to ensure clear separation. We train a shallow customized SGM~\citep{nichol2021improved} on this prior and generate 1000 samples using ZO-APMC and APMC, where ZO-APMC uses $p\!=\!0.5$, $b\!=\!50$, and $b'=5$. Fig.~\ref{fig:toy_exps}c shows that ZO-APMC accurately recovers the mean and variance of both posterior modes, closely matching gradient-based APMC up to a slight increase in variance due to ZO estimations. This can be further reduced by increasing $b$. Additional results and details are provided in Appendix~\ref{sec:toy_experiments}.

\subsection{Magnetic Resonance Imaging (MRI)}
Imaging inverse problems (e.g., MRI recon.) are widely used benchmarks. Although our primary focus is on more challenging black-box forward models, we additionally evaluate our method on linear MRI recon. problem for completeness and to demonstrate the effectiveness of our variance-reduction mechanism in a high-dimensional setting. 

\textbf{Problem Setting. }We consider a radial subsampling mask with acceleration factor of $4\times$. For evaluation, we use the SGM prior from \citet{sun2024provable}, which is pre-trained on the FastMRI brain dataset \citep{zbontar1811fastmri}, and evaluate all methods on a separate test set provided in that work to ensure a consistent comparison. We randomly select 40 images of size $256\times 256$ and, for each method, generate 20 reconstructions per image, average them, and report the mean image-quality metrics across the test set. For ZO-APMC, we use $p=0.2$, $b=10^4$, and $b'=10^3$.  

\textbf{Results. }Fig.~\ref{fig:brain_mri_rep_case} demonstrates that both ZO-APMC and APMC yield visually indistinguishable reconstructions for representative brain MRI cases showing pathology, with ZO-APMC accurately capturing fine details without gradient information. Table~\ref{tab:brain_mri_results} shows that ZO-APMC consistently achieve higher reconstruction quality than other black-box baselines in all image quality metrics and closely matches the APMC with gradient access. Our method yields slightly higher SD than DPG but this can be alleviated by increasing 
$p$ or $b$, albeit at larger computational cost.

\subsection{Black-Hole Imaging}
\textbf{Problem Setting. }Black-hole interferometric imaging reconstructs black-hole images from “visibility” measurements acquired by Earth-based telescope arrays. We use the SGM prior pre-trained on  GRMHD~\citep{wong2022patoka} images of size $64\times 64$, the highly nonlinear forward model, and 100 test images provided by InverseBench~\citep{zheng2025inversebench}. For each method, we generate 5 reconstructions per image, average them, and report the resulting mean metrics across the test set. Since the image sizes are small, we use $p\!=\!1$ with $b\!=\!1024$. Evaluation is based on the chi-square errors of the closure phases (\(\chi_{\text{cph}}^2\)) and closure amplitudes (\(\chi_{\text{camp}}^2\)), which quantify how well the reconstructions fit the measurements. Because the black-hole imaging system captures only low spatial frequencies, we follow~\citet{akiyama2019first} and compute PSNR for both the original and blurred reconstructions at the system’s intrinsic resolution.

\textbf{Results. }Fig.~\ref{fig:blackhole_rep_cases} shows two representative cases of black-hole reconstructions generated by ZO-APMC and other black-box baselines, alongside the ground truth. ZO-APMC produces reconstructions that most closely match the ground truth, whereas baselines introduce artifacts and lose fine details. Table~\ref{tab:blackhole_table} shows that ZO-APMC outperforms all baselines across all metrics except SD, which can be mitigated by increasing batch size $p$ or $b$ at additional cost. 


\subsection{Navier--Stokes Equation}

\textbf{Problem Setting. }The Navier–Stokes equation is a standard fluid-dynamics benchmark~\citep{iglesias2013ensemble}, widely used from ocean dynamics to climate modeling, where atmospheric observations calibrate initial conditions for numerical forecasts. Computing forward model gradients via auto-differentiation is impractical because it requires differentiating through a PDE solver. We use the Navier--Stokes forward model, 10 test images, and the pre-trained SGM prior provided by InverseBench~\citep{zheng2025inversebench}. For each black-box method, we generate 5 samples per test image, average them, and report the mean NRMSE across the 10 test images. We repeat this procedure for 3 noise levels $\sigma_{\mathrm{noise}}\in\{0,1,2\}$. Additional experimental details are provided in Appendix~\ref{app:Navier--Stokes_exps}.

\textbf{Results.} Fig.~\ref{fig:navierstokes} shows results obtained with $\sigma_{\text{noise}}=1$, demonstrating that ZO-APMC generates samples that qualitatively preserve key flow features, comparable to EnKG and DPG, whereas SCG fails to do so. Moreover, EnKG produces noticeably noisier samples than ZO-APMC. Quantitative results in Appendix~\ref{app:Navier--Stokes_exps} show that, although ZO-APMC does not outperform EnKG and DPG in NRMSE, it achieves comparable performance to DPG while providing convergence guarantees and a principled understanding of how hyperparameters affect performance, unlike competing methods that rely on heuristic approximations.
\vspace{-1em}
\section{Conclusion}
We established a theoretically grounded framework for variance-reduced zeroth-order (ZO) sampling in non-log-concave settings, with applications to inverse problems. We derived the first non-asymptotic complexity guarantees for non-log-concave ZO sampling and for black-box posterior sampling with SGM priors. Empirically, our proposed ZO-APMC method achieved state-of-the-art performance among black-box methods in MRI reconstruction and black-hole imaging, while remaining competitive on the Navier–Stokes inverse problem. Future work includes extending our method to underdamped Langevin dynamics and latent diffusion models for improved sampling efficiency. 

\section*{Acknowledgements}
This work was partially supported by National Insititutes of Health grant NIH/NHLBI R01-HL153430 (PI: Sharif), and supported in part by NSF DMS-2502560 and CNS-2313109. 

\section*{Impact Statement}

This work establshes the first theoretically grounded framework for zeroth-order sampling from non-log-concave distributions, addressing a fundamental gap in derivative-free probabilistic inference. We further develop the first zeroth-order posterior sampling framework with convergence guarantees for black-box inverse problems using score-based generative priors. These advances broaden the applicability of sampling methods to settings where gradients are inaccessible, undefined, or prohibitively expensive, including scientific simulators, medical imaging systems, and physics-based inverse problems. By enabling principled probabilistic inference under limited forward model access, our work may expand the use of machine learning in scientific and engineering domains. As with other sampling and generative methods, practical deployment in application domains requires careful empirical validation and appropriate domain expertise. 

\bibliography{refs}
\bibliographystyle{icml2026}

\newpage
\appendix
\onecolumn

\section{Comparison to Related Work}\label{appendix:related-work}
In this section, we compare ZO-APMC against existing posterior sampling methods based on SGM priors, highlighting their strengths and limitations.

\subsection{Black-box Posterior Sampling Algorithms}

There are three other prior works that studied posterior sampling with SGM priors for black-box (derivative-free) forward models. The most recent is EnKG, which estimates the likelihood score using the ensemble-based statistical linearization~\citep{zheng2024ensemble}. Unlike our algorithm, EnKG is not designed to sample from the full posterior and therefore offers limited uncertainity quantification. In our experiments, we observed that EnKG also becomes highly inefficient when the evaluation of the forward model is cheaper than computation of the score network, which is the case in MRI reconstruction with linear forward model. In addition, as reflected in our results, in black hole imaging and MRI reconstruction, it produced noticeably lower reconstruction quality than our method. However, it performed better than our method for Navier--Stokes inverse problem because noisy function evaluations of forward model makes the guidance term unstable. In EnKG paper, authors proposed Forward-GSG and Central-GSG baselines, which are originated from DPS algorithm but the guidance term is estimated by mini-batch ZO estimate. Since they have additional approximation error due to heuristic approximation of the guidance term~\citep{chung2023diffusion}, they generate samples with larger reconstruction errors, as reflected in our results. However, they require less iteration as they use SDE from diffusion models~\cite{song2021score} as opposed to Langevin sampling, which requires longer iterations. 

SCG \citep{huang2024symbolic} and DPG \citep{tang2024solving}, which cast diffusion guidance in a stochastic control framework and steer the sampling process via an estimated value function. Similar to EnKG, neither of the targets to sample from the posterior distribution due to intractable score likelihood term, so uncertainity quantification cannot be done as opposed to our algorithm. Moreover, SCG relies on a threshold parameter that disables the likelihood-based guidance in certain iterations. We found that improper tuning of this threshold leads to severely degraded performance, yet the method provides no theoretical guidance for selecting it. As a result, practitioners must rely on grid search or extensive hyperparameter tuning. However, in our setting, $p$ and $b$ can be adjusted easily once a budget for per-iteration is determined. Although we did not experiment with, since SCG is based on value function (not approximation of gradient), it could perform better than our method for rule-based inverse problems. 

Similar to our approach, DPG also relies on Monte Carlo mini-batch estimates of the posterior score. However, unlike our method, DPG does not include any variance-reduction mechanism, and incorporating such mechanisms is nontrivial due to the DDPM~\citep{ho2020denoising} and DDIM~\citep{song2021denoising} sampling procedures it depends on. In addition, their update rule couples the likelihood and prior score terms, preventing the decoupling that enables our variance-reduced design. Empirically, these differences lead to clear performance gaps: in both the black hole and MRI reconstruction tasks, our method achieves substantially better reconstruction quality, whereas on the Navier–Stokes example DPG performs slightly better, which may be due to instability in the ZO score approximations for this specific problem.

\subsection{Gradient-based Posterior Sampling Algorithms}

There are many gradient-based posterior sampling with SGM priors proposed for solving inverse problems. As can be seen from our MRI reconstruction experiments (Table~\ref{tab:brain_mri_results}), APMC~\citep{sun2024provable}, which is the closest work to ours, generated slightly better reconstructions than the proposed ZO-APMC algorithm. This shows that when one can compute the gradient of the forward model exactly, it is more preferable. However, this may not be possible in commercial MRI applications or closed-source systems~\citep{karakuzu2025rethinking}, where one can only have access to input and output of the forward operator. In that scenario, our model performs the best among black-box posterior samplers with comparable performance to the gradient-based model. Similar trend can be seen in black hole imaging experiments as well. Similar to our work, the APMC paper also derived an upper bound on the FI. However, beyond establishing an FI bound in the black-box setting, we additionally prove convergence in squared total variation distance and show weak asymptotic convergence to the target distribution without using Lipschitzness assumption of the score network. 

Besides APMC, we used DPS~\citep{chung2023diffusion} and PnP-DM~\citep{wu2024principled} posterior sampling algorithms as gradient-based baselines. Although our method treats the forward model as black-box, it performed better than DPS in experiments. We attributed this to the heuristic approximation of the guidance term in DPS, which is called Jensen gap which cannot be controlled explicitly as we control the estimator error via PAGE variance reduction mechanism. PnP-DM provides a principled framework for posterior sampling, but it applies to inverse problems only when implemented within the EDM framework~\cite{karras2022elucidating} as opposed to our method. Moreover, although PnP-DM establishes upper bounds on FI, it provides neither
asymptotic convergence to the true posterior nor non-asymptotic guarantees in squared total variation distance. Consequently, it remains unclear whether samples produced after many iterations will coincide with those from the exact true posterior.


\begin{table*}[t]
\centering
\small
\setlength{\tabcolsep}{3pt}          
\renewcommand{\arraystretch}{1.05}   
\caption{A conceptual overview of posterior sampling approaches for probabilistic imaging. The \emph{``Annealing''} column highlights distinctions among MCMC-based methods. The \emph{``Black-box access''} column shows whether the corresponding method assumes black-box access and works when gradients of the forward model is unavailable. The $\mA(\cdot)$ column shows the assumption on the type of forward model. This table extends that of~\citet{sun2024provable} by incorporating additional black-box posterior sampling algorithms.}
\begin{tabularx}{\linewidth}{@{}p{2.2cm} p{1.6cm} c c Y c c c @{}}
\toprule
\textbf{Category} & \textbf{Reference} & \thead{\textbf{Generative}\\ \textbf{prior}} & \textbf{Model agnostic} & \textbf{$\mA(\cdot)$} & \thead{\textbf{Convergence}\\\textbf{guarantees}} & \textbf{Annealing} & \thead{\textbf{Black-box}\\ \textbf{access}} \\
\midrule
\multirow{2}{*}{\makecell{Variational\\Bayesian}} & \citep{sun2021deep}              & \xmark & \xmark & General & \xmark       & -- & \xmark \\
 & \citep{feng2023score}                & \cmark & \xmark & General & \xmark       & -- & \xmark \\
\noalign{\vskip 0.4ex}\hdashline\noalign{\vskip 0.4ex}
\multirow{3}{*}{\makecell{DM-based}} & \citep{song2022solving}          & \cmark & \cmark & Linear  & \xmark       & -- & \xmark \\
 & \citep{chung2023diffusion}                & \cmark & \cmark & General & \xmark       & -- & \xmark \\
 & \citep{liu2023dolce}                           & \cmark & \xmark & Linear  & \xmark       & -- & \xmark \\
 & \citep{tang2024solving}                           & \cmark & \xmark & General  & \xmark       & -- & \cmark \\
 & \citep{huang2024symbolic}                           & \cmark & \xmark & General  & \xmark       & -- & \cmark \\
 & \citep{zheng2024ensemble}                           & \cmark & \xmark & General  & \xmark       & -- & \cmark \\ 
\noalign{\vskip 0.4ex}\hdashline\noalign{\vskip 0.4ex}
\multirow{5}{*}{\makecell{MCMC-based}} & \citep{jalal2021robust}          & \cmark & \cmark & Linear  & \cmark$^{1}$ & \cmark & \xmark \\
 & \citep{kawar2021snips}                       & \cmark & \cmark & Linear  & \xmark       & \cmark & \xmark \\
 & \citep{laumont2022bayesian}                        & \xmark & \cmark & General & \cmark       & \xmark & \xmark \\
 & \citep{coeurdoux2024plug}                      & \cmark & \cmark & Linear  & \xmark       & \xmark & \xmark \\
 & \citep{bouman2023generative}               & \xmark & \cmark & Linear  & \cmark$^{2}$ & \xmark & \xmark \\
 & \citep{sun2024provable}               & \cmark & \cmark & General  & \cmark & \cmark & \xmark \\
\noalign{\vskip 0.4ex}\hline\noalign{\vskip 0.4ex}
\makecell{MCMC-based} & \textbf{Ours}                           & \cmark & \cmark & General & \cmark       & \cmark & \cmark \\
\bottomrule
\end{tabularx}
\label{tab_supp:comparison}
\vspace{2pt}
\begin{minipage}{\linewidth}\footnotesize
$^{1}$Requires $\mA(\cdot)$ to be a Gaussian random matrix.\quad
$^{2}$Guarantees on asymptotic convergence.
\end{minipage}
\vspace{-6pt}
\end{table*}

\section{Theoretical Results}

\textbf{Notation.} Throughout the proof, we work within the probability space 
$(\Omega, \mathcal{F}, \mathbb{P})$, where $\Omega$ denotes the sample space, 
$\mathcal{F}$ the $\sigma$-algebra, and $\mathbb{P}$ the probability measure. We define the following filtration containing all randomness up to iteration $k$
\begin{equation}
    \mathcal{F}_k\coloneqq \sigma(\vx_0, \vu_0^i, \mZ_0, \ldots, \vu_k^i, \mZ_{k}),
\end{equation}where $\mZ_{k} \coloneqq \mB_{(k+1)\gamma} - \mB_{k\gamma}$ and $\vu_k^i$ denotes the collection of Gaussian vectors used in the ZO estimates at iteration $k$. 

$\lceil \cdot \rceil$ is the ceiling function and it is defined as $\lceil x \rceil \coloneqq \min\{n\in\sZ : n\geq x\}$. 

For two nonnegative sequences \((a_n)_{n\geq0}\) and \((b_n)_{n\geq0}\), the notation \(a_n\lesssim b_n\) denotes that there exists a constant \(C>0\), independent of \(n\), such that
\[
a_n \leq C b_n,
\]
for all \(n\geq0\).

For a random variable $X : \Omega \to \mathbb{R}^n$, we write its expectation as 
\[
\mathbb{E}[X] = \int_{\Omega} X(\omega)\,\mathbb{P}(d\omega).
\]

The posterior distribution of interest is of the form
\[
\pi(\vx|\vy) \propto \ell(\vy|\vx)p(\vx),
\]
where we define $f(\vx)$ and $h(\vx)$ as potential functions of the likelihood $\ell(\vy|\vx)\propto \mathrm{exp}(-f(\vx))$ and the prior $p(\vx)\propto \mathrm{exp}(-h(\vx))$, respectively. Moreover,  we define $h_{\sigma_k}$ as the potential function of the perturbed prior given as $p_{\sigma_k}(\vx)\propto\mathrm{exp}(-h_{\sigma_k}(\vx))$, $p_\sigma(\vx)\coloneqq \int_{\sR^d}p(\vz)\gN(\vx| \vz, \sigma^2 I)\,d\vz$, where $p(\vx)$ is the unperturbed prior. For simplicity, we omit the explicit dependence 
on $\vy$. Recall that
\begin{equation}
    -\nabla\log\pi(\vx) = \nabla f(\vx) + \nabla h(\vx).
\end{equation}

We denote the zeroth-order approximation of the forward model gradient as follows
\begin{equation}\label{def:zeroth-order-def}
    \widetilde{\nabla}f_\mu(\vx_{k\gamma}, \vu) \coloneqq \frac{f(\vx_{k\gamma} + \mu \vu) - f(\vx_{k\gamma})}{\mu}\vu,
\end{equation}where $\vu\sim\gN(0,I)$ and $\mu > 0$. The expectation of the zeroth-order approximation is denoted as $\nabla f_\mu(\vx_{k\gamma})\coloneqq \E_{\vu}\left[\widetilde{\nabla}f_\mu(\vx_{k\gamma}, \vu)\right]$. For notational convenience, we also define
\begin{equation}
    \Delta_k \coloneqq \E\big[\|\vx_{(k+1)\gamma} - \vx_{k\gamma}\|_2^2\big],
\end{equation}
as the expected squared $\ell_2$-distance between consecutive iterates. For convenience, we recall the definition of the Kullback--Leibler (KL) divergence 
between two probability densities $\nu$ and $\pi$:
\[
\mathrm{KL}(\nu \| \pi) = \int_{\mathbb{R}^n} \nu(x) \log \frac{\nu(x)}{\pi(x)}\, dx.
\]

The Fisher information (FI) is given by
\[
\mathrm{FI}(\nu \| \pi) 
= \int_{\mathbb{R}^n} \bigl\|\nabla \log \tfrac{\nu(x)}{\pi(x)}\bigr\|_2^2 \nu(x)\, dx
= \int_{\mathbb{R}^n} \|\nabla \log \nu(x) - \nabla \log \pi(x)\|_2^2 \nu(x)\, dx.
\]

Total variation (TV) distance between two probability measures $\mu$ and $\nu$ on a measurable space $(\mathcal{X},\mathcal{F})$ is given b by
\[
\|\nu-\pi\|_{\mathrm{TV}}
\;\coloneqq\;\sup_{A\in\mathcal{F}}\bigl|\nu(A)-\pi(A)\bigr|
\;=\;\frac{1}{2}\int_{\mathcal{X}}\!\bigl|d\nu-d\pi\bigr|.
\]
Unless otherwise stated, $\|\cdot\|$ denotes the squared $\ell_{2}$-norm, i.e.\ $\|\cdot\|_{2}$. $C^{1,1}_L$ denotes the class of differentiable functions $f:\R^d \to \R$ whose gradients are $L$-Lipschitz, i.e.,
\begin{equation}
    \|\nabla f(\vx) - \nabla f(\vy)\|_2 \leq L \|\vx - \vy\|_2, 
    \quad \forall \vx, \vy \in \R^d.
\end{equation}

\subsection{Lemmas}\label{appendix:lemmas} We begin by reviewing the key lemmas from the zeroth-order optimization and non-log-concave sampling literature. The following lemma summarizes key properties of zeroth-order approximations used in our analysis. 

\begin{lemma}[Lemma 6.2 under Section 6.1.2.1 in \citet{lan2020first}]\label{lem:zeroth-order}
Suppose that $f(\vx)\in C_{L}^{1,1}$, i.e., $f$ is continuously differentiable and its gradient is $L$-Lipschitz, and let 
$f_{\mu}(\vx) := \E_\vu\!\left[f(\vx + \mu \vu)\right]$.  
Then the following statements hold:
\begin{enumerate}[label=(\alph*)]
    \item $f_\mu \in C_{L_\mu}^{1,1}(\R^d)$, where $L_\mu \leq L$,
    \item $\|\nabla f_\mu(\vx) - \nabla f(\vx)\| 
          \leq \tfrac{1}{2}\mu L(d+3)^{\tfrac{3}{2}}$,
    \item $\E_{\vu}[\|\widetilde{\nabla} f_\mu (\vx, \vu)\|^2] \leq \tfrac{1}{2}\mu^2 L^2 (d+6)^3 + 2(d+4)\|\nabla f(\vx)\|_2^2$,
\end{enumerate}
where 
$\widetilde{\nabla} f_\mu (\vx, \vu) := \frac{f(\vx + \mu \vu) - f(\vx)}{\mu}\vu$  
for $\vu \sim \mathcal{N}(0, I)$ and any $\vx\in\R^d$, $\mu>0$.
\end{lemma}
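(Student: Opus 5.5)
We follow the Gaussian-smoothing argument of Nesterov and Spokoiny and reduce each claim to a Taylor remainder bound combined with Gaussian moment bounds. The basic tool is the quadratic bound for $f\in C^{1,1}_L$:
\[
\bigl|f(\vx+\mu\vu)-f(\vx)-\mu\langle\nabla f(\vx),\vu\rangle\bigr|\le \tfrac{L}{2}\mu^2\|\vu\|_2^2 .
\]
We also need the moments of $\vu\sim\gN(0,I)$:
\[
\E\|\vu\|^2=d,\qquad \E\|\vu\|^4=d(d+2),\qquad \E\|\vu\|^6=d(d+2)(d+4).
\]
From these we get $\E\|\vu\|^6\le (d+6)^3$ directly. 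By Lyapunov's inequality, $\E\|\vu\|^3\le (\E\|\vu\|^4)^{3/4}=(d(d+2))^{3/4}\le (d+3)^{3/2}$.

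For (a), we differentiate under the integral, which is justified by dominated convergence since $\nabla f$ has linear growth. This gives $\nabla f_\mu(\vx)=\E_\vu[\nabla f(\vx+\mu\vu)]$. Jensen's inequality then yields
\[
\|\nabla f_\mu(\vx)-\nabla f_\mu(\vy)\|\le \E\|\nabla f(\vx+\mu\vu)-\nabla f(\vy+\mu\vu)\|\le L\|\vx-\vy\|,
\]
so $L_\mu\le L$.

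For (b), we use Gaussian integration by parts (Stein's identity). Writing $f_\mu(\vx)=\int f(\vz)\,\gN(\vz|\vx,\mu^2I)\,d\vz$ and differentiating the density gives
\[
\nabla f_\mu(\vx)=\tfrac1\mu\E[f(\vx+\mu\vu)\vu]=\E_\vu[\widetilde\nabla f_\mu(\vx,\vu)],
\]
where the $f(\vx)$ term is added for free because $\E\vu=0$. This identity is also what justifies the notation $\nabla f_\mu=\E_\vu[\widetilde\nabla f_\mu]$ used throughout the paper. Since $\E[\vu\vu^\top]=I$, we have $\nabla f(\vx)=\E[\langle\nabla f(\vx),\vu\rangle\vu]$. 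Subtracting,
\[
\nabla f_\mu(\vx)-\nabla f(\vx)=\tfrac1\mu\E\bigl[(f(\vx+\mu\vu)-f(\vx)-\mu\langle\nabla f(\vx),\vu\rangle)\vu\bigr].
\]
Bounding the norm by the remainder bound gives $\tfrac{\mu L}{2}\E\|\vu\|^3\le \tfrac12\mu L(d+3)^{3/2}$.

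For (c), we write $\widetilde\nabla f_\mu(\vx,\vu)=\langle\nabla f(\vx),\vu\rangle\vu+\tfrac{r}{\mu}\vu$ with $|r|\le \tfrac L2\mu^2\|\vu\|^2$, and apply $\|a+b\|^2\le2\|a\|^2+2\|b\|^2$. The remainder part contributes
\[
\tfrac{2}{\mu^2}\cdot\tfrac{L^2\mu^4}{4}\,\E\|\vu\|^6\le\tfrac12\mu^2L^2(d+6)^3 .
\]
For the linear part, a rotation-invariance computation with $\vg=\nabla f(\vx)$ gives $\E[\langle \vg,\vu\rangle^2\|\vu\|^2]=(d+2)\|\vg\|^2$. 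Its contribution is therefore $2(d+2)\|\vg\|^2\le 2(d+4)\|\vg\|^2$.

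The only step requiring care is the moment arithmetic: the non-integer third moment in (b) must be handled through Lyapunov's inequality as above, and the fourth-order mixed moment in (c) must be computed correctly. The rest is routine.
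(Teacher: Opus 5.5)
Your proof is correct and is essentially the standard argument. The paper does not prove this lemma itself; it imports it from \citet{lan2020first}, which follows \citet{nesterov2017random}. That proof combines the quadratic Taylor-remainder bound, the Gaussian integration-by-parts identity $\nabla f_\mu(\vx)=\E_\vu[\widetilde\nabla f_\mu(\vx,\vu)]$, and Gaussian moment bounds, exactly as you do. The only differences are in the moment arithmetic: you compute $\E[\langle\vg,\vu\rangle^2\|\vu\|^2]=(d+2)\|\vg\|^2$ exactly and use Lyapunov's inequality for $\E\|\vu\|^3$, whereas the cited source uses cruder general-purpose moment estimates. As a result, you obtain the slightly sharper constant $2(d+2)$ in place of $2(d+4)$ in part (c).
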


The following lemma concerns the density evolution of an interpolated diffusion process.

\begin{lemma}[Lemma 23 in \citet{balasubramanian2022towards}]\label{lemma:sampling_first}
    Consider the stochastic process defined by 
    \begin{equation*}
        \vx_t := \vx_0 - tg_0 + \sqrt{2}\mB_t, \quad \text{with $g_0 = g(\vx_0)$}, \quad \vx_0\sim\nu_0
    \end{equation*} where $g_0$ is integrable and $\{B_t\}_{t\geq 0}$ is a standard Brownian motion in $\sR^d$ independent of $(\vx_0, g_0)$. Then, writing $\nu_t$ for the probability density of $\vx_t$, we have 
    \begin{equation*}
        \frac{d}{dt}\mathrm{KL}(\nu_t\|\pi) \leq -\frac{3}{4}\mathrm{FI}(\nu_t\|\pi) + \E \left[\|\nabla f(\vx_t) - g_0\|^2\right], 
    \end{equation*}where we recall that $\pi\propto e^{-f}$, and the expectation in the last term is with respect to $x_0 \sim \nu_0$ and $x_t\sim \nu_t$. 
\end{lemma}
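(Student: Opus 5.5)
The plan is to derive a Fokker--Planck equation for $\nu_t$ by conditioning on the initial data, differentiate $\mathrm{KL}(\nu_t\|\pi)$ along it, and close with Young's inequality.

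\textbf{Step 1: conditional dynamics.} Conditionally on $(\vx_0,g_0)$, the process $\vx_t = \vx_0 - t g_0 + \sqrt{2}\mB_t$ is a Brownian motion with constant drift $-g_0$. Its conditional law $\nu_{t\mid 0}$ therefore satisfies
\[
\partial_t \nu_{t\mid 0} = \nabla\cdot(\nu_{t\mid 0}\, g_0) + \Delta \nu_{t\mid 0}.
\]

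\textbf{Step 2: averaging.} Integrating over the law of $(\vx_0,g_0)$ gives the Fokker--Planck equation for the marginal law,
\[
\partial_t \nu_t = \nabla\cdot\bigl(\nu_t\,\bar g_t\bigr) + \Delta\nu_t, \qquad \bar g_t(\vx) \coloneqq \E[g_0 \mid \vx_t = \vx].
\]
The conditional expectation is well defined because $g_0$ is integrable.

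\textbf{Step 3: differentiating the KL.} Using $\int \partial_t\nu_t = 0$ and integrating by parts,
\[
\frac{d}{dt}\mathrm{KL}(\nu_t\|\pi) = \int \partial_t\nu_t \log\frac{\nu_t}{\pi} = -\E\Bigl[\Bigl\langle \bar g_t(\vx_t) + \nabla\log\nu_t(\vx_t),\ \nabla\log\frac{\nu_t}{\pi}(\vx_t)\Bigr\rangle\Bigr].
\]
Write $\nabla\log\nu_t = \nabla\log(\nu_t/\pi) - \nabla f$, which uses $\pi\propto e^{-f}$. Then
\[
\frac{d}{dt}\mathrm{KL}(\nu_t\|\pi) = -\mathrm{FI}(\nu_t\|\pi) + \E\Bigl[\Bigl\langle \nabla f(\vx_t) - \bar g_t(\vx_t),\ \nabla\log\tfrac{\nu_t}{\pi}(\vx_t)\Bigr\rangle\Bigr].
\]
The second factor in the inner product is a function of $\vx_t$ alone. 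By the tower property we may therefore replace $\bar g_t(\vx_t)$ with $g_0$ inside the expectation.

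\textbf{Step 4: Young's inequality.} Apply $\langle a,b\rangle \le \|a\|^2 + \tfrac14\|b\|^2$. The cross term is then at most
\[
\E\bigl[\|\nabla f(\vx_t) - g_0\|^2\bigr] + \tfrac14\,\mathrm{FI}(\nu_t\|\pi),
\]
which yields the claimed bound $-\tfrac34\,\mathrm{FI}(\nu_t\|\pi) + \E[\|\nabla f(\vx_t)-g_0\|^2]$.

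\textbf{Main obstacle: rigor.} The algebra above is routine; the real work is justifying it. Three points need care.
\begin{itemize}
\item Existence of a smooth positive density $\nu_t$ for $t>0$. This follows because $\nu_t$ is a Gaussian convolution with variance $2t$.
\item Differentiating under the integral sign.
\item Vanishing of the boundary terms in the integrations by parts.
\end{itemize}
For these I would first prove the identity with truncated or mollified test functions and pass to the limit using Gaussian tail bounds together with the linear growth of $\nabla f$. Alternatively, I would argue at the level of integrated KL differences over $[s,t]$ with $s>0$ and let $s\downarrow 0$. If $\mathrm{FI}$ or the error term is infinite, the inequality holds trivially.
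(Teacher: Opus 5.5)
Your proposal is correct and follows essentially the same argument as the cited proof (Lemma 23 of \citet{balasubramanian2022towards}, which is the Vempala--Wibisono interpolation argument): you condition on $(\vx_0,g_0)$, average the Fokker--Planck equation to get the drift $\E[g_0\mid \vx_t=\cdot\,]$, integrate by parts, use the tower property to replace that conditional drift by $g_0$, and close with Young's inequality $\langle a,b\rangle\le\|a\|^2+\tfrac14\|b\|^2$. One minor slip: an infinite Fisher information would make the right-hand side $-\infty$, so that case is not ``trivial''; it simply does not occur for $t>0$, because $\nu_t$ is a Gaussian convolution, hence $\mathrm{FI}(\nu_t\|\pi)\le d/t+2\E[\|\nabla f(\vx_t)\|^2]<\infty$ under finite second moments and linear growth of $\nabla f$.
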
 We also used the following lemma to bound the Fisher information, which is taken from \citep{chewi2024analysis}.

\begin{lemma}
    [Lemma 20 in \citet{chewi2024analysis}]\label{lemma:sampling_final}
    Assume that $\nabla\log\pi(\vx)$ is $L_\pi$-Lipschitz. For any probability measure $\nu$,  it holds that
    \begin{equation*}
        \E_{\nu}\left[\|\nabla\log\pi(\vx)\|^2\right] \leq \mathrm{FI}\left(\nu\|\pi\right) + 2dL_\pi.
    \end{equation*}
\end{lemma}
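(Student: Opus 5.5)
The plan is a single integration-by-parts identity followed by Young's inequality. We may assume $\mathrm{FI}(\nu\|\pi)<\infty$; otherwise the bound is trivial. Then $\nu$ has a weak gradient with $\nabla\log\nu\in L^2(\nu)$. Split
\[
\E_\nu\bigl[\|\nabla\log\pi\|^2\bigr]=\E_\nu\bigl[\langle\nabla\log\pi,\nabla\log\pi-\nabla\log\nu\rangle\bigr]+\E_\nu\bigl[\langle\nabla\log\pi,\nabla\log\nu\rangle\bigr].
\]
For the first term, apply Cauchy--Schwarz and Young's inequality $ab\le \tfrac12 a^2+\tfrac12 b^2$. This bounds it by $\tfrac12\E_\nu[\|\nabla\log\pi\|^2]+\tfrac12\mathrm{FI}(\nu\|\pi)$.

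For the second term, write $\nu\nabla\log\nu=\nabla\nu$ and integrate by parts:
\[
\E_\nu\bigl[\langle\nabla\log\pi,\nabla\log\nu\rangle\bigr]=\int\langle\nabla\log\pi,\nabla\nu\rangle\,dx=-\int\nu\,\Delta\log\pi\,dx .
\]
Since $\nabla\log\pi$ is $L_\pi$-Lipschitz, Rademacher's theorem gives that $\nabla^2\log\pi$ exists a.e. with operator norm at most $L_\pi$. Hence $-\Delta\log\pi\le dL_\pi$, and the second term is at most $dL_\pi$.

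Combining the two bounds gives
\[
\tfrac12\E_\nu\bigl[\|\nabla\log\pi\|^2\bigr]\le\tfrac12\mathrm{FI}(\nu\|\pi)+dL_\pi ,
\]
provided $\E_\nu[\|\nabla\log\pi\|^2]<\infty$, so that it can be absorbed on the left. Multiplying by two yields the claim.

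The main obstacle is justifying the integration by parts and the absorption step, since $\nu$ is an arbitrary measure with finite Fisher information. I would handle this by truncation. Take a smooth cutoff $\chi_R$ equal to $1$ on the ball $B_R$, supported in $B_{2R}$, with $\|\nabla\chi_R\|\le 2/R$. Run the argument on $\int\chi_R\,\nu\|\nabla\log\pi\|^2$; every quantity is then finite because $\nabla\log\pi$ grows at most linearly. The integration by parts now holds rigorously, since $\nabla\log\pi$ is Lipschitz and so its divergence exists weakly and is bounded above by $dL_\pi$. It produces an extra boundary term $\int\nu\,\langle\nabla\log\pi,\nabla\chi_R\rangle$. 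Bound this term by $\tfrac{2}{R}\int_{R\le\|x\|\le2R}\nu\|\nabla\log\pi\|$, and control it with another Young split so that it is absorbed into a small multiple of the truncated left-hand side plus an $O(1/R)$ remainder. Finally let $R\to\infty$ using monotone convergence. This yields the stated inequality, including the conclusion that $\E_\nu[\|\nabla\log\pi\|^2]$ is finite whenever the Fisher information is.
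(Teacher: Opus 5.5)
Your proposal is correct and is essentially the standard argument behind the cited result; the paper gives no proof of its own, only the reference to Chewi et al. That argument integrates by parts to get $\E_\nu[\langle\nabla\log\pi,\nabla\log\nu\rangle]=\E_\nu[-\Delta\log\pi]\le dL_\pi$, bounds the remaining term by Cauchy--Schwarz/Young, and absorbs. One small correction to your truncation step: the annulus term is not dominated by the $\chi_R$-weighted left-hand side, since $\chi_R<1$ there. However, linear growth of $\nabla\log\pi$ bounds it directly by $\bigl(4L_\pi+2\|\nabla\log\pi(0)\|/R\bigr)\,\nu(\{\|x\|\ge R\})\to 0$, and alternatively, using $\chi_R^2$ as the weight makes your absorption argument literally valid.
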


We use the following lemma to derive an upper bound on total variation (TV) distance. 
\begin{lemma}[Theorem 3.1 in \citet{guillin2009transportation}]\label{lemma:tv_lemma}
    If $\pi$ satisfies a Poincaré inequality, i.e.\ for every smooth, compactly supported 
$f:\mathbb{R}^d\to\mathbb{R}$,
\[
\mathrm{Var}_{\pi}(f)\;\le\;C_{\mathrm{PI}}\,
\mathbb{E}_{\pi}\!\bigl[\|\nabla f\|^2\bigr],
\]
then for any probability measure $\nu$,
\[
\|\nu-\pi\|_{\mathrm{TV}}^2\;\le\;4C_{\mathrm{PI}}\,
\mathrm{FI}(\nu\|\pi).
\]
\end{lemma}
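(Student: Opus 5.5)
The plan is to apply the Poincaré inequality to the test function $\phi=\sqrt{d\nu/d\pi}$, which links the Hellinger distance to the relative Fisher information, and then pass from Hellinger to total variation by Cauchy--Schwarz. If $\mathrm{FI}(\nu\|\pi)=\infty$ there is nothing to prove. Otherwise we may assume $\nu\ll\pi$ with density $\rho\coloneqq d\nu/d\pi$ and $\sqrt{\rho}\in H^1(\pi)$. Note the identity
\[
\int\|\nabla\sqrt{\rho}\|_2^2\,d\pi=\tfrac14\int\rho\,\|\nabla\log\rho\|_2^2\,d\pi=\tfrac14\,\mathrm{FI}(\nu\|\pi).
\]

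\textbf{Step 1 (TV to Hellinger).} Write
\[
\|\nu-\pi\|_{\mathrm{TV}}=\tfrac12\int|\rho-1|\,d\pi=\tfrac12\int|\sqrt{\rho}-1|\,(\sqrt{\rho}+1)\,d\pi .
\]
Cauchy--Schwarz gives
\[
\|\nu-\pi\|_{\mathrm{TV}}^2\le\tfrac14\int(\sqrt{\rho}-1)^2d\pi\cdot\int(\sqrt{\rho}+1)^2d\pi .
\]
The second factor equals $2+2m$, where $m\coloneqq\int\sqrt{\rho}\,d\pi\le1$ by Jensen, so it is at most $4$. Hence $\|\nu-\pi\|_{\mathrm{TV}}^2\le\int(\sqrt{\rho}-1)^2d\pi=2(1-m)$.

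\textbf{Step 2 (Hellinger to variance).} Since $\int\rho\,d\pi=1$, we have $\mathrm{Var}_\pi(\sqrt{\rho})=1-m^2=(1-m)(1+m)\ge 1-m$. Therefore
\[
\|\nu-\pi\|_{\mathrm{TV}}^2\le 2\,\mathrm{Var}_\pi(\sqrt{\rho}).
\]

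\textbf{Step 3 (Poincaré).} Apply Assumption~\ref{ass:poincare-inequality} to $\phi=\sqrt{\rho}$:
\[
\mathrm{Var}_\pi(\sqrt{\rho})\le C_{\mathrm{PI}}\int\|\nabla\sqrt{\rho}\|_2^2\,d\pi=\tfrac{C_{\mathrm{PI}}}{4}\,\mathrm{FI}(\nu\|\pi).
\]
Combining with Step 2 yields $\|\nu-\pi\|_{\mathrm{TV}}^2\le \tfrac{C_{\mathrm{PI}}}{2}\,\mathrm{FI}(\nu\|\pi)$, which is stronger than the claimed constant $4C_{\mathrm{PI}}$.

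\textbf{Main obstacle.} The only delicate point is that the Poincaré inequality is assumed only for smooth, compactly supported functions, while $\sqrt{\rho}$ is merely in $H^1(\pi)$. I would first try a standard density argument. Truncate $\sqrt{\rho}$ to $\min\{\sqrt{\rho},n\}$, multiply by a smooth cutoff $\chi_R$ supported in a ball of radius $2R$ with $\|\nabla\chi_R\|\le 1/R$, and mollify. Then pass to the limit: the variances converge by dominated and monotone convergence, while the Dirichlet energies converge or are bounded in the limit by $\int\|\nabla\sqrt{\rho}\|^2d\pi$, since the cutoff error term $\int\rho\|\nabla\chi_R\|^2d\pi\le R^{-2}\to0$. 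Finiteness of $\mathrm{FI}$ is exactly what makes these limits legitimate.
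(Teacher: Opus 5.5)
Your proof is correct. It also gives a stronger constant than the one stated.

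The paper gives no proof of this lemma; it imports the result from Guillin et al.\ by citation. Your argument is the standard self-contained derivation: write $|\rho-1|=|\sqrt\rho-1|(\sqrt\rho+1)$, use Cauchy--Schwarz to pass from total variation to $\mathrm{Var}_\pi(\sqrt\rho)$, then apply the Poincar\'e inequality to $\sqrt\rho$, whose Dirichlet energy is $\tfrac14\mathrm{FI}(\nu\|\pi)$. What the citation buys is brevity. What your route buys is a transparent constant.

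You lose a factor $2$ needlessly in Step 1 when you bound $2+2m$ by $4$. Keeping the exact value gives
\[
\|\nu-\pi\|_{\mathrm{TV}}^2\le \tfrac14\cdot 2(1-m)\cdot 2(1+m)=1-m^2=\mathrm{Var}_\pi(\sqrt\rho)\le \tfrac{C_{\mathrm{PI}}}{4}\,\mathrm{FI}(\nu\|\pi).
\]
So the paper's constant $4C_{\mathrm{PI}}$ is loose by a factor of $16$. This is presumably an artifact of carrying over the cited constant without translating normalizations: that literature typically takes TV as $\int|d\nu-d\pi|$ and the information as $\int\|\nabla\sqrt\rho\|^2d\pi$. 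It is harmless here, because the paper only uses the lemma as an upper bound.

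In your density argument, the mollification step needs $\pi$ to be locally bounded above and away from zero; you should state this explicitly. That bound is what makes $\sqrt\rho\in H^1(\pi)$ imply $\sqrt\rho\in W^{1,2}_{\mathrm{loc}}$. It also makes the mollified, truncated, cut-off functions converge in $H^1(\pi)$ on their common compact support. It holds in the paper's setting, since $\pi\propto e^{-f}$ with $f$ continuous; the paper uses the same fact at the end of its weak-convergence proofs.
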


\subsection{Proof of Proposition~\ref{prop:error_prop}}\label{sec:proof_prop1}
\paragraph{Proposition~\ref{prop:error_prop}}\textit{Suppose Assumption~\ref{ass:lipschitzness} holds, and let $(\vx_{k\gamma})_{k\geq 0}$ be generated by~(\ref{eq:proposed-zo-sampling}), where $\vg_k$ is given by~(\ref{def:proposed-vr-zo-estimate}). Define $e_k^2 := \E[\|\vg_k - \nabla f_\mu(\vx_{k\gamma})\|_2^2]$. Then, for any $\gamma>0$,
\begin{equation}
    e^2_{k+1}\leq \frac{p\sigma_{k+1}^2}{b} + (1-p)e_{k}^2 + \frac{4(1-p)dL_{1}^2\Delta_k}{\mu^2b'},
\end{equation}where $\sigma^2_{k+1}\coloneqq \E[\|\widetilde{\nabla} f_\mu (\vx_{(k+1)\gamma},\vu_{k+1}^i)\|^2]$ and $\Delta_k \coloneqq \E[\|\vx_{(k+1)\gamma} - \vx_{k\gamma}\|^2]$}

\textit{Proof. } 
Using the definition of $\vg_k$ in~(\ref{def:proposed-vr-zo-estimate}), we can bound $e_{k+1}^2$ as follows
\begin{align}
e_{k+1}^2
&= p\,\mathbb{E}\!\left[
\left\|
\nabla f_\mu(\vx_{(k+1)\gamma})
-\frac{1}{b}\sum_{i=1}^b \widetilde{\nabla}f_\mu\bigl(\vx_{(k+1)\gamma}, \vu_{k+1}^i\bigr)
\right\|^2
\right] \notag\\[0.5em]
&\quad + (1-p)\,\mathbb{E}\!\left[
\left\|
\nabla f_\mu(\vx_{(k+1)\gamma})
- \vg_k
-\frac{1}{b'}\sum_{i=1}^{b'}\Bigl(
\widetilde{\nabla}f_\mu(\vx_{(k+1)\gamma}, \vu_{k+1}^i)
- \widetilde{\nabla}f_\mu(\vx_{k\gamma}, \vu_{k+1}^i)
\Bigr)
\right\|^2
\right]\label{eq:prop1:error_term_1-sampling}
\end{align}where $b'$, $b$ denote the small and large batch sizes, respectively, and $\vu_{k+1}^i\sim\gN(0,I)$ in $\sR^d$. Here, $k$ and $i$ denote the iteration and sample indices, respectively, for the sampled Gaussian vectors. We can upper bound the first expectation as 
\begin{align}
    \E\left[\left\|\nabla f_\mu(\vx_{(k+1)\gamma}) - \frac{1}{b}\sum_{i=1}^b \widetilde{\nabla}f_\mu\bigl(\vx_{(k+1)\gamma}, \vu_{k+1}^i\bigr)\right\|^2\right] &= \frac{1}{b}\E\left[\left\|\nabla f_\mu(\vx_{(k+1)\gamma}) - \widetilde{\nabla}f_\mu (\vx_{(k+1)\gamma}, \vu_{k+1}^i)\right\|^2\right] \label{eq:prop1:jensen_1-sampling} \\
    & \leq \frac{1}{b}\underbrace{\E\left[\left\|\widetilde{\nabla}f_\mu(\vx_{(k+1)\gamma},\vu_{k+1}^i)\right\|^2\right]}_{=\sigma_{k+1}^2} \label{eq:prop1:2moment_bound_1-sampling}.
\end{align}In~(\ref{eq:prop1:jensen_1-sampling}), we expand the square and use $\E[\widetilde{\nabla}f_\mu (\vx_{(k+1)\gamma}, \vu_{k+1}^i)|\mathcal{F}_k] = \nabla f_\mu(\vx_{(k+1)\gamma})$ with the fact that $\vu_{k+1}^i$ are identically and independently distributed. In~(\ref{eq:prop1:2moment_bound_1-sampling}), we use the second-moment bound on the variance with definition of $\sigma_{k+1}^2$. Plugging this upper bound into~(\ref{eq:prop1:error_term_1-sampling}), we get
\begin{align}
    e_{k+1}^2 & \leq \frac{p\sigma_{k+1}^2}{b}  
    + (1-p)\E\Biggl[
    \Bigl\|\nabla f_\mu(\vx_{(k+1)\gamma}) - \vg_k -\frac{1}{b'}\sum_{i=1}^{b'}
    \Bigl(\widetilde{\nabla}f_\mu(\vx_{(k+1)\gamma}, \vu_{k+1}^i) - 
    \widetilde{\nabla}f_\mu(\vx_{k\gamma}, \vu_{k+1}^i)\Bigr)
    \Bigr\|^2\Biggr] \label{eq:prop1:error_term2_1-sampling}\\
    &= \frac{p\sigma_{k+1}^2}{b} + (1-p)\E\Biggl[
    \Bigl\|\nabla f_\mu(\vx_{k\gamma}) - \vg_k + \nabla f_\mu(\vx_{(k+1)\gamma}) - \nabla f_\mu(\vx_{k\gamma}) \notag\\
    &\quad\quad\quad\quad\quad\quad\quad\quad\quad\quad\quad\quad\quad\quad -\frac{1}{b'}\sum_{i=1}^{b'}
    \Bigl(\widetilde{\nabla}f_\mu(\vx_{(k+1)\gamma}, \vu_{k+1}^i) - 
    \widetilde{\nabla}f_\mu(\vx_{k\gamma}, \vu_{k+1}^i)\Bigr)
    \Bigr\|^2\Biggr] \label{eq:prop1:error_term2_2-sampling}\\
    &= \frac{p\sigma_{k+1}^2}{b} + (1-p)e_{k}^2 + \frac{(1-p)}{b'}\E\Biggl[
    \Bigl\|\nabla f_\mu(\vx_{(k+1)\gamma}) - \nabla f_\mu(\vx_{k\gamma})-\Bigl(\widetilde{\nabla}f_\mu(\vx_{(k+1)\gamma}, \vu_{k+1}^i) - 
    \widetilde{\nabla}f_\mu(\vx_{k\gamma}, \vu_{k+1}^i)\Bigr)
    \Bigr\|^2\Biggr] \label{eq:prop1:error_term2_3-sampling}\\ 
    &\leq \frac{p\sigma_{k+1}^2}{b} + (1-p)e_{k}^2 + \frac{(1-p)}{b'}\E\Biggl[\Bigl\|\widetilde{\nabla}f_\mu(\vx_{(k+1)\gamma}, \vu_{k+1}^i) - 
    \widetilde{\nabla}f_\mu(\vx_{k\gamma}, \vu_{k+1}^i)
    \Bigr\|^2\Biggr] \label{eq:prop1:error_term2_4-sampling}\\ 
    &= \frac{p\sigma_{k+1}^2}{b} + (1-p)e_{k}^2 + \frac{(1-p)}{\mu^2b'}\E\Biggl[\Bigl\|\Bigl(f(\vx_{(k+1)\gamma} + \mu\vu_{k+1}^i) - f(\vx_{k\gamma} + \mu\vu_{k+1}^i)\Bigr) \notag \\ 
    & \quad\quad\quad\quad\quad\quad\quad\quad\quad\quad\quad\quad\quad\quad\quad\quad\quad\quad\quad\quad -\Bigl(f(\vx_{(k+1)\gamma}) - f(\vx_{k\gamma})\Bigr)\Bigr\|^2\Bigl\|\vu_{k+1}^i\Bigr\|^2\Biggr] \label{eq:prop1:error_term2_5-sampling}\\ 
    & \leq \frac{p\sigma_{k+1}^2}{b} + (1-p)e_{k}^2 + \frac{4(1-p)L_{1}^2\Delta_k}{\mu^2b'}\E\Bigl[\left\|\vu_{k+1}^i\right\|^2\Bigr]\label{eq:prop1:error_term2_6-sampling}\\  
    & = \frac{p\sigma_{k+1}^2}{b} + (1-p)e_{k}^2 + \frac{4(1-p)dL_{1}^2\Delta_k}{\mu^2b'} 
\end{align} where $\Delta_k \coloneqq \E\left[\|\vx_{(k+1)\gamma} - \vx_{k\gamma}\|^2\right]$. Note that we add and subtract $\nabla f_\mu(\vx_{k\gamma})$ in (\ref{eq:prop1:error_term2_2-sampling}). To get~(\ref{eq:prop1:error_term2_3-sampling}), we use the fact that random variables $\vu_{k+1}^i\sim\gN(0,I)$ are i.i.d., and calculate the conditional expectations conditioned with respect to $\gF_k$ and then use the definition of $e_k^2$. We use second-moment bound on variance in~(\ref{eq:prop1:error_term2_4-sampling}) and use the zeroth-order definition to get~(\ref{eq:prop1:error_term2_5-sampling}). Following these steps, we apply Assumption~\ref{ass:lipschitzness} and use the independence of \(\vu_{k+1}^i\) from \(\vx_{k\gamma}\) and \(\vx_{(k+1)\gamma}\) to obtain~(\ref{eq:prop1:error_term2_6-sampling}). This concludes our proof.
\qedbox

\subsection{Proof of Theorem~\ref{thm:zo-sampling-fi}}\label{appendix:zo-sampling-fi}
\paragraph{Theorem~\ref{thm:zo-sampling-fi} }\textit{Let $(\nu_t)_{t\geq 0}$ denote the law of interpolation~(\ref{eq:proposed-zo-sampling-interpolation}), and let $\pi\propto \mathrm{exp}(-f)$ be the target distribution, where the potential function $f$ satisfies Assumptions~\ref{ass:lipschitzness} and~\ref{ass:smoothness} with Lipschitz constants $L_1$ and $L_2$, respectively. Then, for any step size $\gamma \in \left(0, \frac{1}{L_m\sqrt{52\phi(\mu)}}\right]$, where $\phi(\mu) \coloneqq 1 + \frac{4(1-p)d}{p\mu^2b'}$ and $L_m\coloneqq \max\{L_1, L_2\}$, and for any $N\geq 1$, the Fisher information satisfies 
\begin{equation}
    \frac{1}{N\gamma}\int_{0}^{N\gamma}\mathrm{FI}(\nu_t\|\pi)\,dt \leq \frac{C_0}{N\gamma} + \frac{13d(d+2)}{2b} + \frac{13}{8}\mu^2L_2^2(d+3)^3 +14\gamma L_m^2\phi(\mu)d,
\end{equation}where $C_0>0$ is a numerical constant.  Furthermore, let
\begin{equation}
    \gamma = \frac{1}{L_m N^{3/4}d^{7/4}}, \quad p = \frac{L_m}{N^{1/4}d^{1/4}} \quad b = \left\lceil \frac{1}{p}\right\rceil, \quad \text{and} \quad \mu^2 = \frac{1}{L_mN^{1/4}d^{5/4}}
\end{equation}Then, to achieve $\mathrm{FI}(\Bar{\nu}_{N\gamma}\|\pi)\leq \varepsilon$ for the time averaged law $\Bar{\nu}_{N\gamma}\coloneqq \frac{1}{N\gamma}\int_0^{N\gamma}\nu_t\,dt$, $\mathcal{O}\left(\frac{d^7 L_m^4}{\varepsilon^4}\right)$ iterations with $\mathcal{O}(1)$ function evaluations per iteration is sufficient.
}

\textit{Proof. }At initialization, we choose $\gamma_0, b_0, \mu_0>0$ and $p_0\in(0,1]$, and define $\vg_0$ as
\begin{equation*}
\vg_0 \coloneqq \frac{1}{b_0}\sum_{i=1}^{b_0}\widetilde{\nabla}f_{\mu_0}(\vx_0, \vu_0^i),\quad \vu_0^i\sim \gN(0,I).
\end{equation*} We recall that the interpolation argument~(\ref{eq:proposed-zo-sampling-interpolation}) for the proposed algorithm is given as 
\begin{equation}
    \vx_t \coloneqq \vx_{k\gamma} - (t-k\gamma)\vg_k + \sqrt{2}(\mB_t - \mB_{k\gamma}),
\end{equation}where $(\mB_t)_{t\geq 0}$ is Brownian motion and $\vg_k$ is the proposed variance-reduced ZO estimator. Using this interpolation with Lemma~\ref{lemma:sampling_first}, we obtain
\begin{align}
   \frac{d}{dt}\mathrm{KL}(\nu_t\|\pi)& \leq -\frac{3}{4}\mathrm{FI}(\nu_t\|\pi) + \E[\|\nabla f(\vx_t) - \vg_k\|^2]  \notag \\
   & = -\frac{3}{4}\mathrm{FI}(\nu_t\|\pi) + \E[\|\nabla f(\vx_t) - \nabla f(\vx_{k\gamma}) + \nabla f(\vx_{k\gamma}) - \nabla f_\mu(\vx_{k\gamma}) + \nabla f_\mu(\vx_{k\gamma}) - \vg_k\|^2] \notag \\
   & \leq -\frac{3}{4}\mathrm{FI}(\nu_t\|\pi) + 3L_2^2\E[\|\vx_t - \vx_{k\gamma}\|^2] + \frac{3}{4}\mu^2L_2^2 (d+3)^3 + 3\E[\|\nabla f_\mu(\vx_{k\gamma}) - \vg_k\|^2]. \label{eq:main_ineq_1-sampling}
\end{align}We use Jensen's inequality, Lemma~\ref{lem:zeroth-order}, and Assumption~\ref{ass:smoothness} to obtain the last inequality. Let $e_k\coloneqq \E[\|\nabla f_\mu (\vx_{k\gamma}) - \vg_k\|^2]$ be the mean squared estimation error. Then, using Proposition~\ref{prop:error_prop}, we have 
\begin{equation}\label{eq:thm1:error-bound}
    e_{k+1}^2 \leq \frac{p\sigma_{k+1}^2}{b} + (1-p)e_{k}^2 + \frac{4(1-p)dL_1\Delta_{k}}{\mu^2 b'},
\end{equation}where $\Delta_k\coloneqq \E[\|x_{(k+1)\gamma} - x_{k\gamma}\|^2]$ and $\sigma_{k+1}^2 \coloneqq \E[\|\widetilde{\nabla}f_{\mu}(x_{(k+1)\gamma}, \vu_{k+1}^i)\|^2]$. We can further upper bound $\sigma_{k+1}^2$ by using the definition of $\widetilde{\nabla}f_\mu (\vx_{(k+1)\gamma}, \vu_{k+1}^i)$ with the Assumption~\ref{ass:lipschitzness} as follows
\begin{align}
    \sigma_{k+1}^2 & = \E[\|\widetilde{\nabla}f_{\mu}(x_{(k+1)\gamma}, \vu_{k+1}^i)\|^2] \notag \\
    & \leq L_1^2 \E[\|\vu_{k+1}^i\|^4] \label{eq:prop1:sigma-bound} \\
    & = L_1^2d(d+2).  \notag
\end{align}Plugging this into~(\ref{eq:thm1:error-bound}), we have 
\begin{equation}\label{eq:prop1:err-bound2}
    e_{k+1}^2 \leq \frac{pd(d+2)L_1^2}{b} + (1-p)e_{k}^2 + \frac{4(1-p)dL_1\Delta_{k}}{\mu^2 b'}
\end{equation}for $k\geq 1$. Dividing both sides by $p$ and rearranging the terms, we get a recursive upper bound on the error term
\begin{equation}\label{eq:error_final-sampling}
    e_k^2 \leq \frac{d(d+2)L_{1}^2}{b} + \left(\frac{1-p}{p}\right)\frac{4dL_{1}^2}{\mu^2b'}\Delta_k - \frac{1}{p}(e_{k+1}^2 - e_k^2).
\end{equation}Using the interpolation in~(\ref{eq:proposed-zo-sampling-interpolation}), we can upper bound $\E[\|\vx_t - \vx_{k\gamma}\|^2]$ in~(\ref{eq:main_ineq_1-sampling}) as 
\begin{align}
    \E[\|\vx_t - \vx_{k\gamma}\|^2] & = (t-k\gamma)\E[\|\vg_k\|^2] + 2\gamma d \notag \\
    & \leq \gamma^2 \E[\|\vg_k\|^2] + 2\gamma d = \Delta_k. \label{eq:thm1:delta-k-bound}
\end{align}Using this property and plugging the upper bound in~(\ref{eq:error_final-sampling}) into~(\ref{eq:main_ineq_1-sampling}), we get
\begin{equation}\label{eq:kl-before-delta-k-sampling}
    \frac{d}{dt}\mathrm{KL}(\nu_t\|\pi)\leq -\frac{3}{4}\mathrm{FI}(\nu_t\|\pi) + 3L_m^2 \phi(\mu)\Delta_k + \frac{3d(d+2)L_1^2}{b} - \frac{3}{p}(e_{k+1}^2 - e_k^2) + \frac{3}{4}\mu^2 L_2^2(d+3)^3,
\end{equation}where $\phi(\mu) \coloneqq 1 + \left(\frac{1-p}{p}\right)\frac{4d}{\mu^2b'}$. We can upper bound the term involving $\Delta_k$ as 
\begin{align}
    \Delta_k &\coloneqq  \E[\|\vx_{(k+1)\gamma} - \vx_{k\gamma}\|^2] \notag \\
    & = \gamma^2 \E[\|\vg_k\|^2] + 2\gamma d \label{eq:interpolation-sampling} \\
    & = \gamma^2 \E[\|\vg_k - \nabla f_\mu(\vx_{k\gamma}) + \nabla f_\mu(\vx_{k\gamma})- \nabla f(\vx_{k\gamma}) + \nabla f(\vx_{k\gamma})  - \nabla f(\vx_t) + \nabla f(\vx_t)\|^2] + 2\gamma d \notag \\
    & = 4\gamma^2L_2^2 \Delta_k + \gamma^2 \mu^2 L_2^2 (d+3)^3 + 4\gamma^2e_k^2 + 4\gamma^2\E[\|\nabla f(\vx_{t})\|^2] + 2\gamma d \label{eq:ass-zo-lem-sampling} \\
    & \leq 4\gamma^2L_m^2 \phi(\mu)\Delta_k + \frac{4\gamma^2d(d+2)L_1^2}{b} - \frac{4\gamma^2}{p}(e_{k+1}^2 - e_k^2)+ \gamma^2 \mu^2 L_2^2 (d+3)^3 + 4\gamma^2\E[\|\nabla f(\vx_t)\|^2] + 2\gamma d.  \label{eq:ek-plug-in-sampling}
\end{align}We use interpolation~(\ref{eq:proposed-zo-sampling-interpolation}) to obtain~(\ref{eq:interpolation-sampling}). Following that, we use Assumption~\ref{ass:smoothness} together with~(\ref{eq:thm1:delta-k-bound}), Jensen's inequality, and apply Lemma~\ref{lem:zeroth-order} to obtain~(\ref{eq:ass-zo-lem-sampling}). Finally, we use the upper bound in~(\ref{eq:error_final-sampling}) with the fact that $L_2\leq L_m$ to obtain~(\ref{eq:ek-plug-in-sampling}). Rearranging the terms, we get 
\begin{equation*}
    \left(1 - 4\gamma^2 L_m^2 \phi(\mu)\right)\Delta_k \leq \frac{4\gamma^2d(d+2)L_1^2}{b} - \frac{4\gamma^2}{p}(e_{k+1}^2 - e_k^2)+ \gamma^2 \mu^2 L_2^2 (d+3)^3 + 4\gamma^2\E[\|\nabla f(\vx_t)\|^2] + 2\gamma d.
\end{equation*}Let $\gamma\leq \frac{1}{L_m\sqrt{52\phi(\mu)}}$ and multiplying both sides by $\frac{13}{12}$, we get
\begin{equation*}
    \Delta_k \leq \frac{13}{3}\frac{\gamma^2d(d+2)L_1^2}{b} -\frac{13}{3}\frac{\gamma^2}{p}\left(e_{k+1}^2 - e_k^2\right) + \frac{13}{12}\gamma^2 \mu^2 L_2^2 (d+3)^3 + \frac{13\gamma^2}{3}\E[\|\nabla f(\vx_t)\|^2] + \frac{13}{6}\gamma d
\end{equation*}Plugging this upper bound into~(\ref{eq:kl-before-delta-k-sampling}), we obtain
\begin{align}
    \frac{d}{dt}\mathrm{KL}(\nu_t\|\pi) \leq & -\frac{3}{4}\mathrm{FI}(\nu_t\|\pi) + \left[3 + 13\gamma^2 L_m^2 \phi(\mu)\right]\frac{d(d+2)}{b} -\frac{1}{p}\left[3 + 13\gamma^2L_m^2 \phi(\mu)\right](e_{k+1}^2 - e_k^2) \notag \\
    & + \left[\frac{3}{4} + \frac{13}{4}\gamma^2L_m^2 \phi(\mu)\right]\mu^2L_2^2(d+3)^3 + 13\gamma^2 L_m^2 \phi(\mu)\E[\|\nabla f(\vx_t)\|^2] + \frac{13}{2}\gamma L_m^2 \phi(\mu)d \notag \\
    \leq & -\frac{1}{2}\mathrm{FI}(\nu_t\|\pi) + \frac{13d(d+2)}{4b} -\frac{1}{p}\left[3 + 13\gamma^2L_m^2 \phi(\mu)\right](e_{k+1}^2 - e_k^2) + \frac{13}{16}\mu^2L_2^2(d+3)^3 \notag \\
    & + \left[\frac{13}{2}+ 26\gamma L_m\right]\gamma L_m^2 \phi(\mu)d. \label{eq:kl-after-delta-k-sampling}
\end{align}We use the bound $\gamma\leq \frac{1}{L_m\sqrt{52\phi(\mu)}}$ and apply Lemma~\ref{lemma:sampling_final} to obtain~(\ref{eq:kl-after-delta-k-sampling}). We define 
\begin{equation*}
    \gL_k \coloneqq \mathrm{KL}(\nu_t\|\pi) + \frac{\gamma}{p}\left(3 + 13\gamma^2 L_m^2 \phi(\mu)\right)e_k^2. 
\end{equation*}Integrating both sides of~(\ref{eq:kl-after-delta-k-sampling}) and using the definition $\gL_k$, we get 
\begin{equation}\label{eq:sampling-difference-of-ellss}
    \gL_{k+1} - \gL_k \leq -\frac{1}{2}\int_{k\gamma}^{(k+1)\gamma}\mathrm{FI}(\nu_t\|\pi)\,dt + \frac{13}{4}\frac{\gamma d(d+2)}{b} + \frac{13}{16}\gamma\mu^2 L_2^2(d+3)^3 + 7\gamma^2 L_m \phi(\mu)d. 
\end{equation}Iterating this inequality for $k\in\{0,1,\ldots, N-1\}$, and rearranging the terms, we obtain
\begin{equation}\label{eq:thm1:fi-bound-int}
    \frac{1}{N\gamma}\int_{0}^{N\gamma}\mathrm{FI}(\nu_t\|\pi)\,dt \leq \frac{2\gL_0}{N\gamma} + \frac{13}{2}\frac{d(d+2)}{b}+ \frac{13}{8}\mu^2 L_2^2(d+3)^3 + 14\gamma L_m^2 \phi(\mu)d,
\end{equation}where 
\begin{equation*}
    2\gL_0 = 2\mathrm{KL}(\nu_0\|\pi) +\frac{2\gamma}{p}\left(3 + 13 \gamma^2 L_m^2 \phi(\mu)\right)e_0^2 \leq \underbrace{2\mathrm{KL}(\nu_0\|\pi) +\frac{26\gamma}{4p}e_0^2}_{\coloneqq C_0} < \infty,
\end{equation*}where we use $\gamma\leq \frac{1}{L_m\sqrt{52\phi(\mu)}}$ and $C_0>0$ is a numerical constant. This proves the upper bound on Fisher information in the theorem statement. Furthermore, since $p\in(0,1]$, we obtain
\begin{equation*}
    \phi(\mu) = 1 + \frac{(1-p)4d}{p\mu^2 b'}  \leq 1 + \frac{4d}{p\mu^2 b'} = 1 + \frac{4d^{5/2}N^{1/2}}{b'}.
\end{equation*} We can choose $N\geq \frac{b'}{4d^{5/2}}$ and obtain
\begin{equation}\label{eq:thm1:phi-mu-upperbound}
    \phi(\mu) \leq 1 + \frac{4d^{5/2}N^{1/2}}{b'} \leq \frac{8d^{5/2}N^{1/2}}{b'}
\end{equation}Plugging this upper bound into~(\ref{eq:thm1:fi-bound-int}), we obtain
\begin{equation*}
    \frac{1}{N\gamma}\int_{0}^{N\gamma}\mathrm{FI}(\nu_t\|\pi)\,dt \leq \frac{C_0}{N\gamma} + \frac{13d(d+2)}{2b}+ \frac{13}{8}\mu^2 L_2^2(d+3)^3 + \frac{112\gamma L_m^2 d^{7/2}N^{1/2}}{b'}. 
\end{equation*}Using the convexity of Fisher information, we obtain
\begin{equation}\label{eq:final-fisher-bound-sampling}
    \mathrm{FI}(\Bar{\nu}_{N\gamma} \|\pi) \leq \frac{C_0}{N\gamma} + \frac{13d(d+2)}{2b}+ \frac{13}{8}\mu^2 L_2^2(d+3)^3 + \frac{112\gamma L_m^2 d^2}{p\mu^2 b'}. 
\end{equation}Let
\begin{equation*}
\gamma = \frac{1}{L_m N^{3/4}d^{7/4}}, \quad p = \frac{L_m}{N^{1/4}d^{1/4}} \quad b = \left\lceil \frac{1}{p}\right\rceil, \quad \text{and} \quad \mu^2 = \frac{1}{L_mN^{1/4}d^{5/4}}. 
\end{equation*}Plugging these values into~(\ref{eq:final-fisher-bound-sampling}), we obtain
\begin{equation*}
    \mathrm{FI}(\Bar{\nu}_{N\gamma} \|\pi) \leq \frac{C_0L_m d^{7/4}}{N^{1/4}} + \frac{13}{2}\frac{L_m (d+2)^{3/2}}{N^{1/4}} + \frac{13}{8}\frac{L_m(d+3)^3}{N^{1/4}d^{5/4}} + \frac{112d^{7/4}L_m}{b'N^{1/4}}.
\end{equation*}This implies $\mathrm{FI}(\Bar{\nu}_{N\gamma} \|\pi)= \mathcal{O}\left(\frac{d^{7/4}L_m}{N^{1/4}}\right).$ Therefore, to obtain $\mathrm{FI}(\Bar{\nu}_{N\gamma} \|\pi)\leq \varepsilon $, the sufficient number of iterations is
\begin{equation*}
    N = \mathcal{O} \left(\frac{d^7L_m^4}{\varepsilon^4}\right).
\end{equation*} Note that the per-iteration cost is $pb + (1-p)b' = \mathcal{O}(1)$, therefore, the total number of function evaluation is $\mathcal{O} \left(\frac{d^7L_m^4}{\varepsilon^4}\right)$ as well.

It remains to verify that the iteration complexity \(N=\mathcal{O}\!\left(\frac{d^7L_m^4}{\varepsilon^4}\right)\) satisfies the conditions
\begin{equation*}
    N\geq \frac{b'}{4d^{5/2}}, \quad p \in (0,1], \quad \gamma \leq \frac{1}{L_m\sqrt{52\phi(\mu)}}. 
\end{equation*}
The first condition holds since \(b'=\mathcal{O}(1)\) is a numerical constant. The second condition requires
\[
N\geq \frac{L_m^4}{d},
\]
which is satisfied by the derived iteration complexity. Finally, using the upper bound in~(\ref{eq:thm1:phi-mu-upperbound}), we choose $N$ such that
\begin{equation*}
    \gamma \leq \frac{1}{L_m\sqrt{\frac{416d^{5/2}N^{1/2}}{b'}}}
    \leq
    \frac{1}{L_m\sqrt{52\phi(\mu)}},
\end{equation*}
which verifies the last condition. Substituting the definition of \(\gamma\) and simplifying yields the additional requirement
\[
N\geq \frac{416}{b'd},
\]
which is again satisfied by the derived iteration complexity. This completes the proof.

\qedbox

\subsection{Proof of Corollary~\ref{corr:zo-sampling-tv}}\label{appendix:zo-sampling-tv}
\paragraph{Corollary~\ref{corr:zo-sampling-tv} }\textit{Let $(\nu_t)_{t\geq 0}$ denote the law of interpolation~(\ref{eq:proposed-zo-sampling-interpolation}), and let $\pi\propto \mathrm{exp}(-f)$ be the target distribution, where the potential function $f$ satisfies Assumptions~\ref{ass:lipschitzness},~\ref{ass:smoothness} with Lipschitz constants $L_1$, $L_2$, respectively, and Assumption~\ref{ass:poincare-inequality} with constant $C_{\mathrm{PI}}$. Then, for any step size $\gamma \in \left(0, \frac{1}{L_m\sqrt{52\phi(\mu)}}\right]$, where $\phi(\mu) \coloneqq 1 + \frac{4(1-p)d}{p\mu^2b'}$ and $L_m\coloneqq \max\{L_1, L_2\}$, and for any $N\geq 1$, we have
\begin{equation}
   \|\Bar{\nu}_{N\gamma} - \pi\|_{\mathrm{TV}}^2 \leq \frac{4C_{\mathrm{PI}}C_0}{N\gamma} + \frac{26C_{\mathrm{PI}}d(d+2)}{b} + \frac{13}{2}C_{\mathrm{PI}}\mu^2L_2^2(d+3)^3 + 56C_{\mathrm{PI}}\gamma L_m^2\phi(\mu)d,
\end{equation}where $C_0>0$ is a numerical constant. Furthermore, let
\begin{equation}
    \gamma = \frac{1}{L_m N^{3/4}d^{7/4}}, \quad p = \frac{L_m}{N^{1/4}d^{1/4}} \quad b = \left\lceil \frac{1}{p}\right\rceil, \quad \text{and} \quad \mu^2 = \frac{1}{L_mN^{1/4}d^{5/4}}
\end{equation}Then, to achieve $\|\Bar{\nu}_{N\gamma} - \pi\|_{\mathrm{TV}}^2\leq \varepsilon$ for the time averaged law $\Bar{\nu}_{N\gamma}\coloneqq \frac{1}{N\gamma}\int_0^{N\gamma}\nu_t\,dt$, $\mathcal{O}\left(\frac{d^7 L_m^4C_{\mathrm{PI}^4}}{\varepsilon^4}\right)$ iterations with $\mathcal{O}(1)$ function evaluations per iteration is sufficient.
}

\textit{Proof. }Recall from Theorem~\ref{thm:zo-sampling-fi}, we have 
\begin{equation}
     \frac{1}{N\gamma}\int_{0}^{N\gamma}\mathrm{FI}(\nu_t\|\pi)\,dt \leq \frac{C_0}{N\gamma} + \frac{13}{2}\frac{d(d+2)}{b}+ \frac{13}{8}\mu^2 L_2^2(d+3)^3 + 14\gamma L_m^2 \phi(\mu)d.
\end{equation}Since $\mathrm{FI}(\cdot\|\pi)$ is convex with respect to its first argument, Jensen's inequality yields
\begin{equation}
    \mathrm{FI}(\Bar{\nu}_{N\gamma}\|\pi)\leq\frac{C_0}{N\gamma} + \frac{13}{2}\frac{d(d+2)}{b}+ \frac{13}{8}\mu^2 L_2^2(d+3)^3 + 14\gamma L_m^2 \phi(\mu)d.
\end{equation}Using Assumption~\ref{ass:poincare-inequality} and invoking Lemma~\ref{lemma:tv_lemma}, we obtain
\begin{align}
    \|\Bar{\nu}_{N\gamma} - \pi\|_{\mathrm{TV}}^2 & \leq\frac{4C_{\mathrm{PI}}C_0}{N\gamma} + 26C_{\mathrm{PI}}\frac{d(d+2)}{b}+ \frac{13}{2}C_{\mathrm{PI}}\mu^2 L_2^2(d+3)^3 + 56C_{\mathrm{PI}}\gamma L_m^2 \phi(\mu)d.
\end{align}This completes the proof of the upper bound in the corollary. The convergence rate then follows by an argument similar to that of Theorem~\ref{thm:zo-sampling-fi}. We can choose $N\geq \frac{b'}{4d^{5/2}}$ and using the fact that $p\in (0,1]$, we have
\begin{equation}
    \phi(\mu)\coloneqq 1 + \frac{(1-p)4d}{p\mu^2 b'} \leq \frac{8d}{p\mu^2 b'},
\end{equation} we obtain
\begin{equation}
    \|\Bar{\nu}_{N\gamma} - \pi\|_{\mathrm{TV}}^2 \leq\frac{4C_{\mathrm{PI}}C_0}{N\gamma} + 26C_{\mathrm{PI}}\frac{d(d+2)}{b}+ \frac{13}{2}C_{\mathrm{PI}}\mu^2 L_2^2(d+3)^3 + 448C_{\mathrm{PI}}\frac{\gamma L_m^2 d^2}{p\mu^2 b'}. 
\end{equation}Similar to the previous part, letting
\begin{equation}
\gamma = \frac{1}{L_m N^{3/4}d^{7/4}}, \quad p = \frac{L_m}{N^{1/4}d^{1/4}} \quad b = \left\lceil \frac{1}{p}\right\rceil, \quad \text{and} \quad \mu^2 = \frac{1}{L_mN^{1/4}d^{5/4}},  
\end{equation}we obtain that $\|\Bar{\nu}_{N\gamma} - \pi\|_{\mathrm{TV}}^2\leq \mathcal{O}\left(\frac{C_{\mathrm{PI}}d^{7/4}L_m}{N^{1/4}}\right).$ Therefore, to obtain $\|\Bar{\nu}_{N\gamma} - \pi\|_{\mathrm{TV}}^2\leq \varepsilon $, the sufficient number of iterations is 
\begin{equation}
    N = \mathcal{O} \left(\frac{C_{\mathrm{PI}}^4d^7L_m^4}{\varepsilon^4}\right).
\end{equation} Note that the per-iteration cost is $pb + (1-p)b'  = \mathcal{O}(1)$; therefore, the number of function evaluations is $\mathcal{O} \left(\frac{C_{\mathrm{PI}}^4d^7L_m^4}{\varepsilon^4}\right)$ as well. The remaining conditions on \(N\) can be verified by following the same arguments as in the proof of Theorem~\ref{thm:zo-sampling-fi}. 
\qedbox

\subsection{Proof of Theorem~\ref{thm:zo-sampling-id}}\label{appendix:zo-sampling-id}
\paragraph{Theorem~\ref{thm:zo-sampling-id} }\textit{Let $(\nu_t)_{t\geq 0}$ denote the law of interpolation~(\ref{eq:proposed-zo-sampling-interpolation}), and let $\pi\propto \mathrm{exp}(-f)$ be the target distribution, where the potential function $f$ satisfies Assumptions~\ref{ass:lipschitzness} and~\ref{ass:smoothness} with Lipschitz constants $L_1$ and $L_2$, respectively. Define the time-varying parameters as follows
\begin{equation}
    \gamma_k = \frac{C_\gamma}{k^{3/2}}, \quad p_k = \frac{1}{2k^{1/2}}, \quad b_k = \left\lceil \frac{1}{p_k} \right\rceil, \quad \text{and}\quad \mu_k=\frac{C_\mu}{k^{1/8}}, \quad \forall k\geq 1,
\end{equation}where $L_m\coloneqq \max\{L_1, L_2\}$, and $C_\gamma, C_\mu > 0$ are numerical constants. Then, the time averaged law $\Bar{\nu}_{\tau_n}\coloneqq \frac{1}{\tau_n}\int_0^{\tau_n}\nu_t\,dt$, where $\tau_n\coloneqq \sum_{k=1}^n\gamma_k$, converges weakly to $\pi$. 
}

\textit{Proof. } Given time-varying parameters \(\gamma_k,b_k,p_k,\mu_k\) at iteration \(k\), define the cumulative time \(\tau_n\) and averaged law \(\bar{\nu}_{\tau_n}\) at iteration \(n\) as
\[
\tau_{n}\coloneqq\sum_{k=1}^{n}\gamma_{k},
\qquad
\bar{\nu}_{\tau_{n}}\coloneqq
\frac{1}{\tau_{n}}\int_{0}^{\tau_{n}}\nu_{t}\,dt,
\]
where \(\nu_t\) denotes the law of the process \(\vx_t\) under the following continuous-time interpolation
\begin{equation}
\vx_{t}\coloneqq 
\vx_{\tau_{n}}
-(t-\tau_{n})\,\vg_n
+\sqrt{2}\,\bigl(\mB_{t}-\mB_{\tau_{n}}\bigr),
\qquad t\in[\tau_{n},\tau_{n+1}].
\end{equation}$g_n$ is defined as follows
\begin{equation}
\vg_{n} :=
\begin{cases}
\displaystyle
\frac{1}{b_{n}}\sum_{i=1}^{b_{n}} \widetilde{\nabla} f_{\mu_{n}}(\vx_{\tau_n}, \vu_{n}^i)
& \text{w.p. } p_{n}, \\[1ex]
\displaystyle
\vg_{n-1}
+ \frac{1}{b'}\sum_{i=1}^{b'}
\left(
\widetilde{\nabla} f_{\mu_{n}}(\vx_{\tau_n}, \vu_{n}^i)
- \widetilde{\nabla} f_{\mu_{n}}(\vx_{\tau_{n-1}}, \vu_{n}^i)
\right)
& \text{w.p. } 1-p_{n},
\end{cases}
\end{equation}for all $n \geq 1$. At initialization, we choose $\gamma_0, b_0, \mu_0>0$ and $p_0\in(0,1]$, and define $\vg_0$ as
\begin{equation*}
\vg_0 \coloneqq \frac{1}{b_0}\sum_{i=1}^{b_0}\widetilde{\nabla}f_{\mu_0}(\vx_0, \vu_0^i),\quad \vu_0^i\sim \gN(0,I).
\end{equation*} We establish weak convergence by adapting the proof of Theorem~\ref{thm:zo-sampling-fi}. To this end, we verify that the step sizes $(\gamma_k)_{k\geq1}$ satisfy the step-size condition of Theorem~\ref{thm:zo-sampling-fi}, namely,
\begin{equation}
    \gamma_k \in \left(0, \frac{1}{L_m\sqrt{52\phi_k(\mu_k)}}\right], \quad \text{where} \quad \phi_k(\mu_k) = 1 + \frac{4(1-p_k)d}{p_k\mu_k^2b'} \label{eq:thm2:gamma-k-condition}
\end{equation}for all $k\geq 1$. Using the definitions of $\mu_k$ and $p_k$, we have
\begin{align}
    \phi_k(\mu_k) & = 1 + \frac{4(1-p_k)d}{p_k\mu_k^2 b'} \notag \\
    & \leq 1 + \frac{4d}{p_k\mu_k^2 b'}\notag \\
    &\leq 1 + \frac{8dk^{3/4}}{b'C_\mu^2} \notag \\
    & \leq \frac{16dk^{3/4}}{b'C_\mu^2},
\end{align}where the last inequality is satisfied by selecting a constant $C_\mu$ such that
\begin{equation*}
    \sqrt{\frac{8d}{b'}} \leq C_\mu. 
\end{equation*}Then, we have
\begin{equation*}
    \frac{1}{L_m\sqrt{52\phi_k(\mu_k)}} \geq \frac{1}{L_m\sqrt{\frac{832dk^{3/4}}{b'C_\mu^2}}} = \frac{C_\mu}{L_mk^{3/8}}\sqrt{\frac{b'}{832d}}.
\end{equation*}Choosing
\begin{equation*}
    C_\gamma = \frac{C_\mu}{L_m}\sqrt{\frac{b'}{832d}},
\end{equation*}we have 
\begin{equation*}
    \gamma_k = \frac{C_\gamma}{k^{3/2}} \leq \frac{C_\mu}{L_mk^{3/8}}\sqrt{\frac{b'}{832d}}\leq \frac{1}{L_m\sqrt{52\phi_k(\mu_k)}},
\end{equation*}which verifies~(\ref{eq:thm2:gamma-k-condition}). By definition of $p_k=\frac{1}{2k^{1/2}}$, we have $p_k\in(0,1]$ for $k\geq 1$. Therefore, we can follow the same steps in Theorem~\ref{thm:zo-sampling-fi} up to~(\ref{eq:sampling-difference-of-ellss}) since the same arguments remain valid for $t \in [\tau_{n-1}, \tau_n]$ with time-varying parameters. We obtain
\begin{align}
\mathrm{KL}(\nu_{\tau_{n}}\|\pi) - \mathrm{KL}(\nu_{\tau_{n-1}}\|\pi) & \leq -\frac{1}{2}\int_{\tau_{n-1}}^{\tau_n}\mathrm{FI}(\nu_t\|\pi)dt + \frac{13\gamma_n d(d+2)L_{1}^2}{4b} + \frac{13}{16}\gamma_n\mu_n^2L_{2}^2(d+3)^3 \notag \\ 
    & \quad + 7\gamma_n^2 dL_m^2 \phi_n(\mu_n) -\frac{\gamma_n}{p_n}\left(3 + 13\gamma_n^2L_m^2\phi_n(\mu_n)\right)\left(e_{n}^2 - e_{n-1}^2\right) \label{eq:one_step_recursion-sampling}
\end{align}for $t\in[\tau_{n-1}, \tau_n]$. Plugging the definitions of $\gamma_n$, $b_n$, $p_n$, and $\mu_n$ into~(\ref{eq:one_step_recursion-sampling}), we get
\begin{align}
\mathrm{KL}(\nu_{\tau_{n}}\|\pi) - \mathrm{KL}(\nu_{\tau_{n-1}}\|\pi) & \leq -\frac{1}{2}\int_{\tau_{n-1}}^{\tau_n}\mathrm{FI}(\nu_t\|\pi)dt + \frac{A_1}{n^2} + \frac{A_2}{n^{7/4}} + \frac{A_3}{n^{9/4}} -c_n\left(e_{n}^2 - e_{n-1}^2\right),\label{eq:kl_ineq_3-sampling}
\end{align}where 
\begin{equation*}
    A_1 \coloneqq \frac{13d(d+2)L_{1}^2C_\gamma}{8}, \quad A_2 \coloneqq \frac{13L_{2}^2(d+3)^3C_\gamma C_\mu^2} {16}, \quad A_3 \coloneqq \frac{112d^2L_m^2C_\gamma ^2}{b'C_\mu^2},
\end{equation*} and
\begin{equation*}
    c_n\coloneqq \frac{\gamma_n}{p_n}\left(3 + 13\gamma_n^2L_m^2\phi_n(\mu_n)\right)
\end{equation*}for $n\geq 1$. Iterating the bound in~(\ref{eq:kl_ineq_3-sampling}), we obtain
\begin{align}
    \mathrm{KL}(\nu_{\tau_{n}}\|\pi) \leq \mathrm{KL}(\nu_{0}\|\pi) &-\frac{1}{2}\int_{0}^{\tau_n}\mathrm{FI}(\nu_t\|\pi)dt + A_1S_1 + A_2S_2 + A_3S_3 -\sum_{k=1}^nc_k\left(e_{k}^2- e_{k-1}^2\right) \label{eq:kl_ineq4-sampling}
\end{align}where we have 
\[
\sum_{k=1}^{n}k^{-2}
\leq\underbrace{\sum_{k=1}^{\infty}k^{-2}}_{\coloneqq S_{1}}<\infty,\quad
\sum_{k=1}^{n}k^{-7/4}
\leq\underbrace{\sum_{k=1}^{\infty}k^{-7/4}}_{\coloneqq S_2}<\infty.
\]

\[
\sum_{k=1}^{n}k^{-9/4}
\leq\underbrace{\sum_{k=1}^{\infty}k^{-9/4}}_{\coloneqq S_3}<\infty.
\]
Thus, $A_1S_1$, $A_2S_2$, and $A_3S_3$ are bounded constants and are independent of $n$. Furthermore, if we assume that $(c_n)_{n\geq 1}$ is nonnegative and decreasing sequence (i.e. $0\leq c_{n+1}< c_n$), we can bound the summation in~(\ref{eq:kl_ineq4-sampling}) as follows
\begin{align}
    -\sum_{k=1}^nc_k\left(e_k^2 - e_{k-1}^2\right) = c_1e_0^2 + \sum_{k=1}^{n-1}(c_{k+1}-c_k)e_k^2 - c_ne_n^2 \leq c_1e_0. \label{eq:thm2:ck-sum-bound}
\end{align}Thus, it remains to show that \((c_n)_{n\geq 1}\) is a nonnegative and decreasing sequence. Substituting the parameters \(\gamma_n\), \(b_n\), \(p_n\), and \(\mu_n\) into the definition of \(c_n\), we obtain
\begin{equation}\label{eq:thm2:ck-def}
    c_n = \frac{6C_\gamma}{n} + \frac{6L_m^2C_\gamma^3}{n^4} + \frac{48dL_m^2 C_\gamma^3}{b'C_\mu^2}\frac{1}{n^{13/4}}\left(1 - \frac{1}{2n^{1/2}}\right) 
\end{equation}for $n\geq 1$. For notational convenience, define $F\coloneqq L_m^2C_\gamma^3$, which is a numerical constant, then we rewrite $c_n$ as 
\begin{equation*}
    c_n = \frac{6C_\gamma}{n} + \frac{F}{n^4} + \frac{8Fd}{b'C_\mu^2n^{13/4}}\left(1 - \frac{1}{2n^{1/2}}\right).
\end{equation*}Nonnegativity is immediate since $1 - \frac{1}{2n^{1/2}} > 0$ for $n\geq 1$. Therefore, all the terms in $c_n$ are nonnegative for $n\geq 1$. To show $(c_n)_{n\geq 1}$ is decreasing, define the continuous extension $c(x)$, for $x\geq 1$, as 
\begin{equation*}
    c(x) = \frac{6C_\gamma}{x} + \frac{F}{x^4} + \frac{8Fd}{b'C_\mu^2x^{13/4}}\left(1 - \frac{1}{2x^{1/2}}\right).
\end{equation*}Taking the derivative yields
\begin{equation*}
    c'(x) = -\frac{6C_\gamma}{x^2} - \frac{4F}{x^5} - \frac{26Fd}{b'C_\mu^2x^{17/4}}\left(1 - \frac{15}{26x^{1/2}}\right).
\end{equation*}For $x\geq1$, we have
\[
1-\frac{15}{26x^{1/2}}>0,
\]
and hence $c'(x)<0$. Therefore, $c(x)$ is decreasing on $[1,\infty)$, which implies that $(c_n)_{n\geq1}$ is decreasing. Hence, using the upper bound~(\ref{eq:thm2:ck-sum-bound}) in~(\ref{eq:kl_ineq4-sampling}) yields
\begin{align}
    \mathrm{KL}(\nu_{\tau_{n}}\|\pi) & \leq \mathrm{KL}(\nu_{0}\|\pi) -\frac{1}{2}\int_{0}^{\tau_n}\mathrm{FI}(\nu_t\|\pi)dt + A_1S_1 + A_2S_2 + A_3S_3 + c_1e_0^2 \label{eq:kl_ineq5-sampling}
\end{align}Rearranging the terms, using the convexity of Fisher information with Jensen's inequality, and multiplying both sides by $2/\tau_n$, we get 
\begin{align}
    \mathrm{FI}(\Bar{\nu}_{\tau_n}\|\pi)
    &\leq \frac{1}{\tau_n}\int_{0}^{\tau_n}\mathrm{FI}(\nu_t\|\pi)\,dt \notag\\
    &\leq \frac{2\,\mathrm{KL}(\nu_0\|\pi)}{\tau_n} 
    + \frac{2}{\tau_n}\Bigl(A_1S_1 + A_2S_2 + A_3S_3 + c_1e_0^2\Bigr),\label{eq:fi_final_bound-sampling}
\end{align}where $A_1S_1 + A_2S_2 + A_3S_3 < \infty$ and does not depend on $n$. On the other hand, if $t\in\left[\tau_{n}, \tau_{n+1}\right]$, integrating~(\ref{eq:kl-after-delta-k-sampling}) between $\tau_n$ and $t$ and dropping the negative integral over the Fisher information give us
\begin{align}
    \mathrm{KL}(\nu_t\|\pi) & \leq \mathrm{KL}(\nu_{\tau_n}\|\pi) + \frac{13(t-\tau_n)d(d+2)L_{f_1}^2}{4b} + \frac{13}{16}(t-\tau_n)\mu^2L_{f_2}^2(d+3)^3 \notag \\
    & \quad + 7(t-\tau_n)\gamma dL_m^2 \phi(\mu) -\frac{(t-\tau_n)}{p_{n+1}}\left(3 + 13\gamma_{n+1}^2L_m^2\phi_{n+1}(\mu_{n+1})\right)(e_{n+1}^2-e_n^2) \notag \\
    & \leq \mathrm{KL}(\nu_0\|\pi) + 2A_1S_1 + 2A_2S_2 + 2A_3S_3 + 2c_1e_0^2 + c_{n+1}e_n^2.\label{eq:kl_bound_t-sampling}
\end{align}To obtain the second inequality, we bound all terms except the \(\mathrm{KL}\) divergence, $c_1e_0^2$, and \(c_{n+1}e_n^2\) by their finite limits as \(n\to\infty\). We then apply the bound on $\mathrm{KL}(\nu_{\tau_n}\|\pi)$ derived from~(\ref{eq:kl_ineq5-sampling}). By Assumption~\ref{ass:lipschitzness} and Lemma~\ref{lem:zeroth-order}, the initial error satisfies $e_0 < \infty$, and hence $c_1e_0^2< \infty$. Therefore, to show that \(\{\mathrm{KL}(\nu_t\|\pi)\mid t\geq0\}\) is bounded, it remains to prove that \(c_{n+1}e_n^2<\infty\) for all \(n\geq1\). Since \((c_n)_{n\geq1}\) is decreasing, it is uniformly bounded. Therefore, it remains to show that \(e_n^2<\infty\) for all \(n\geq1\). To this end, we apply Proposition~\ref{prop:error_prop} for time-varying parameters
\begin{equation}\label{eq:thm2:recursive-err-bound}
    e_{n+1}^2 \leq (1-p_{n+1})e_n^2 + 
    \frac{p_{n+1}\sigma_{n+1}^2}{b_{n+1}} + \frac{4(1-p_{n+1})dL_1^2}{\mu_{n+1}^2b'}\E[\|\vx_{\tau_{n+1}} - \vx_{\tau_n}\|^2],
\end{equation}
where
\[
\sigma_{n+1}^2 \coloneqq \E[\|\widetilde{\nabla}f_{\mu}(\vx_{\tau_{n+1}}, \vu_{n+1}^i)\|^2].
\]

The second term can be bounded using Assumption~\ref{ass:lipschitzness} together with Lemma~\ref{lem:zeroth-order}, which imply that \(\sigma_n^2\leq C_\sigma\) for some constant \(C_\sigma>0\). Hence,
\begin{equation}\label{eq:thm2:r_k-first-term}
\frac{p_{n+1}\sigma_{n+1}^2}{b_{n+1}}
\lesssim p_{n+1}^2,
\end{equation}
where \(a_n\lesssim b_n\) denotes that there exists a constant \(C>0\), independent of \(n\), such that \(a_n\leq Cb_n\).

To bound the last term in~(\ref{eq:thm2:recursive-err-bound}), we use the interpolation~(\ref{eq:proposed-zo-sampling-interpolation}) to obtain
\begin{align}
    \frac{4(1-p_{n+1})dL_1^2}{\mu_{n+1}^2 b'}\E[\|\vx_{\tau_{n+1}} - \vx_{\tau_n}\|^2]
    &\lesssim
    \frac{1-p_{n+1}}{\mu_{n+1}^2}\left(\gamma_{n+1}^2 \E[\|\vg_n\|^2] + \gamma_{n+1}\right)\label{eq:thm2:lesssim0} \\
    &\lesssim
    \frac{1}{\mu_{n+1}^2}\left(\gamma_{n+1}^2e_n^2 + \gamma_{n+1}^2\E[\|\nabla f_{\mu_n}(\vx_{\tau_n})\|^2] + \gamma_{n+1}\right)\label{eq:thm2:lesssim1}\\
    &\lesssim
    \frac{\gamma_{n+1}^2}{\mu_{n+1}^2}(e_n^2 + 1) + \frac{\gamma_{n+1}}{\mu_{n+1}^2} \label{eq:thm2:lesssim2-lipschitz}\\
    &\lesssim
    (n+1)^{-11/4}(e_n^2 + 1) + (n+1)^{-5/4} \label{eq:thm2:lesssim3-defs}\\
    &\lesssim 
    p_{n+1}^{11/2}(e_n^2 + 1) + p_{n+1}^{5/2}.\label{eq:thm2:lesssim3-defpk}
\end{align}

We add and subtract \(\nabla f_{\mu_n}(\vx_{\tau_n})\) inside the squared norm in~(\ref{eq:thm2:lesssim0}), apply the inequality \((a+b)^2\leq 2a^2+2b^2\) and use $p_{n+1}\in(0,1]$ to obtain~(\ref{eq:thm2:lesssim1}). Since Lipschitz continuity is preserved under Gaussian smoothing, \(f_{\mu_n}\) is Lipschitz continuous whenever \(f\) is Lipschitz continuous. Applying this property yields~(\ref{eq:thm2:lesssim2-lipschitz}). Substituting the definitions of \(\gamma_{n+1}\) and \(\mu_{n+1}\) into~(\ref{eq:thm2:lesssim2-lipschitz}) gives~(\ref{eq:thm2:lesssim3-defs}), and using the definition of \(p_{n+1}\) yields the final inequality.

Combining~(\ref{eq:thm2:r_k-first-term}) and~(\ref{eq:thm2:lesssim3-defpk}), we obtain, for some constant \(G>0\),
\begin{equation}\label{eq:thm1:bounded-err4sure}
    e_{n+1}^2
    \leq
    \left(1-p_{n+1}+Gp_{n+1}^{11/2}\right)e_n^2
    +
    G\left(p_{n+1}^2+p_{n+1}^{11/2}+p_{n+1}^{5/2}\right).
\end{equation}

Since \(p_n\to0\), there exists \(n_0\geq1\) such that
\[
Gp_{n+1}^{9/2}\leq \frac12
\]
for all \(n\geq n_0\). Moreover, since \(p_{n+1}\in(0,1]\),
\[
p_{n+1}^2+p_{n+1}^{11/2}+p_{n+1}^{5/2}
\leq
3p_{n+1}.
\]
Therefore,
\[
e_{n+1}^2
\leq
\left(1-\frac{p_{n+1}}2\right)e_n^2
+
3Gp_{n+1},
\qquad n\geq n_0.
\]

Defining \(M\coloneqq 6G\), we obtain
\[
e_{n+1}^2
\leq
\left(1-\frac{p_{n+1}}2\right)e_n^2
+
\frac{Mp_{n+1}}2.
\]
Since the right-hand side is a convex combination of \(e_n^2\) and \(M\),
\[
e_{n+1}^2
\leq
\max\{e_n^2,M\},
\qquad n\geq n_0.
\]
By induction,
\[
e_n^2
\leq
\max\{e_{n_0}^2,M\},
\qquad n\geq n_0.
\]

For the finitely many indices \(1\leq n\leq n_0\), finiteness of \(e_n^2\) follows recursively from~(\ref{eq:thm1:bounded-err4sure}). Consequently,
\begin{equation}\label{eq:thm2:e_k-is-bounded-uniformly}
e_n^2
\leq
\max\{e_0^2,\ldots,e_{n_0}^2,M\}
<
\infty
\end{equation}for all $n\geq 1$. That proves \(c_{n+1}e_n^2<\infty\) for all $n\geq 1$.

Combining this with~(\ref{eq:kl_bound_t-sampling}) implies that \(\{\mathrm{KL}(\nu_t\|\pi)\mid t\geq0\}\) is uniformly bounded. By the convexity of the KL divergence, this implies that the sequence $\{\mathrm{KL}(\bar\nu_{\tau_n}\|\pi)\}_{n\in\mathbb{N}}$ is uniformly bounded as well. Since the sublevel sets of $\mathrm{KL}(\cdot\|\pi)$ are weakly compact, $(\bar\nu_{\tau_n})_{n\in\mathbb{N}}$ is tight. To establish that $\bar\nu_{\tau_n}\rightharpoonup\pi$ weakly, it suffices to verify that every cluster point of $(\bar\nu_{\tau_n})_{n\in\mathbb{N}}$ equal to $\pi$. 

\quad Consider a subsequence of $(\bar\nu_{\tau_n})_{n\in\mathbb{N}}$ converging to some limit $\bar\nu$. Taking $n\to\infty$ in~(\ref{eq:fi_final_bound-sampling}) and noting that $\tau_n\to\infty$, we obtain $\mathrm{FI}(\bar{\nu}_{\tau_n}\|\pi)\rightarrow 0$. Therefore, the same holds along the subsequence. By the weak lower semicontinuity of the Fisher information along the subsequence, we have $\mathrm{FI}(\bar\nu\|\pi)=0$. Writing $\psi:=\frac{d\bar\nu}{d\pi}$, this means $\sqrt{\psi}\in\text{dom}~\mathcal{E}$ and $\mathcal{E}(\sqrt{\psi})=0$, where $\mathcal{E}$ denotes the Dirichlet energy (i.e., the squared $L^2(\pi)$-norm of the gradient; see Section 3 in \citet{balasubramanian2022towards}). Since $\nabla \log\pi$ is Lipschitz by Assumption~\ref{ass:smoothness}, $\pi$ has a continuous and strictly positive density on $\mathbb{R}^d$, so $\mathcal{E}(\sqrt{\psi})=0$, which implies that $\psi$ must be a constant $\pi$-a.e., hence $\bar\nu=\pi$.\qedbox

\subsection{Proof of Theorem~\ref{thm:zo-posterior-sampling-fi}}\label{appendix:zo-posterior-sampling-fi}
\paragraph{Theorem~\ref{thm:zo-posterior-sampling-fi}}
\textit{Let $\pi\propto\ell(\vy|\vx)p(\vx)$ be the posterior with the likelihood $\ell(\vy|\vx)\propto \mathrm{exp}(-f(\vx))$ and the prior $p(\vx)\propto \mathrm{exp}(-h(\vx))$. Suppose the likelihood potential $f$ satisfies Assumptions~\ref{ass:lipschitzness},~\ref{ass:smoothness} with Lipschitz constants $L_{f_1}$, $L_{f_2}$, respectively, the prior potential $h$ satisfies Assumption~\ref{ass:smoothness} with Lipschitz constant $L_h$ and Assumption~\ref{ass:perturb_prior}, and the SGM satisfies Assumption~\ref{ass:score_net} with decreasing error $\varepsilon_{\sigma_k}=\mathcal{O}(k^{-1/2})$ for $\sigma_k>0$, and $k\geq 1$. Define $L_m\coloneqq\max\{L_{f_2} + L_h,L_{f_1}\}$. Let $(\nu_t)_{t\geq 0}$ denote the law of interpolation generated by~(\ref{eq:proposed-zo-posterior-sampling-interpolation}) with annealing and noise schedules defined in~(\ref{def:annealing-schedule-sigma-alpha}). For any step size $\gamma \in \left(0,\frac{1}{L_m\sqrt{85\phi(\mu)}}\right]$, where $\phi(\mu)\coloneqq 1+\frac{4(1-p)d}{p\mu^2b'}$, and for any $N \geq 1$, the Fisher information satisfies
\begin{equation}\label{eq:theorem_1}
\frac{1}{N\gamma}\!\int_{0}^{N\gamma}\!\mathrm{FI}(\nu_t\|\pi)\,dt\!
\le\!\frac{C_0}{N\gamma}
+\!\frac{17d(d+2)L_{f_1}^2}{2b}
+\!\frac{17\mu^2L_{f_2}^2(d+3)^3}{8}
+\!\bar\sigma^2+\!\bar\varepsilon_\sigma^2+\!\bar\alpha^2 +32\gamma L_m^2d\phi(\mu),
\end{equation}where
\begin{equation*}
    \bar\sigma^2\coloneqq \frac{51C_1^2}{2N}\sum_{k=0}^{N-1}\sigma_k^2, \quad \Bar{\varepsilon}_\sigma^2\coloneqq \frac{51}{2N}\sum_{k=0}^{N-1}\varepsilon_{\sigma_k}^2, \quad \text{and} \quad \bar\alpha^2\coloneqq\frac{51C_2^2}{2N}\sum_{k=0}^{N-1}\frac{(\alpha_k-1)^2}{\sigma_k^2},
\end{equation*}and $C_0$, $C_1$, $C_2 $ are positive constants. Furthermore, let
\begin{equation*}
    \gamma = \frac{1}{L_m N^{3/4}d^{7/4}}, \quad p = \frac{L_m}{N^{1/4}d^{1/4}} \quad b = \left\lceil \frac{1}{p}\right\rceil, \quad \text{and} \quad \mu^2 = \frac{1}{L_mN^{1/4}d^{5/4}},
\end{equation*} and let the initial parameters satisfy $\sigma_0\geq\sigma_{\text{min}}$, $\varepsilon_{\sigma_0}>0$, $\alpha_0\geq1$, and $\sigma_{\text{min}}> 0$ is the minimum noise level. Then, to achieve $\mathrm{FI}(\Bar{\nu}_{N\gamma}\|\pi)\leq \varepsilon$ for the time averaged law $\Bar{\nu}_{N\gamma}\coloneqq\frac{1}{N\gamma}\int_0^{N\gamma}\nu_t\,dt$, $\mathcal{O}\left(\frac{d^7 L_m^4}{\varepsilon^4}\right)$ iterations with $\mathcal{O}(1)$ function evaluations per iteration is sufficient. }


\textit{Proof.} At initialization, we choose $\gamma_0, b_0, \mu_0>0$ and $p_0\in(0,1]$, and define $\vg_0$ as
\begin{equation*}
\vg_0 \coloneqq \frac{1}{b_0}\sum_{i=1}^{b_0}\widetilde{\nabla}f_{\mu_0}(\vx_0, \vu_0^i),\quad \vu_0^i\sim \gN(0,I).
\end{equation*}We recall the interpolation argument in~(\ref{eq:proposed-zo-posterior-sampling-interpolation}) given as
\begin{equation}\label{eq:interpolation}
    \vx_t \coloneqq \vx_{k\gamma} - (t-k\gamma)(\vg_k - \alpha_k\gS_{\theta}(\vx_{k\gamma})) + \sqrt{2}(\mB_t - \mB_{k\gamma}) \quad \text{for} \quad t\in\left[k\gamma, (k+1)\gamma\right],
\end{equation}where $\vg_k$ denotes variance-reduced zeroth-order estimate of the negative likelihood score $\nabla f(\vx_{k\gamma})$ at iteration $k$, and $(\alpha_k)_{k= 0}^{N-1}$ and $(\sigma_k)_{k=0}^{N-1}$ denote annealing and noise schedules, respectively. By Assumption~\ref{ass:smoothness},~\ref{ass:perturb_prior} and triangle inequality, we know that the posterior score function $\nabla\log\pi(\vx)$ is Lipschitz continuous with Lipschitz constant $L_p + L_{f_2}$. Furthermore, by Assumptions~\ref{ass:perturb_prior} and~\ref{ass:score_net}, the error between the negative prior score and and its estimate scaled by annealing parameter can be bounded by
\begin{align}
    \|\nabla h(\vx_{k\gamma}) + \alpha_k \gS_\theta(\vx_{k\gamma})\| &\leq \|\nabla h(\vx_{k\gamma}) -  \nabla h_{\sigma_k}(\vx_{k\gamma}) + \nabla h_{\sigma_k}(\vx_{k\gamma}) + \gS_{\theta}(\vx_{k\gamma}, \sigma_k)+ (\alpha_k-1) \gS_\theta(\vx_{k\gamma})\| \notag \\
    & \leq \sigma_kC + \varepsilon_{\sigma_k} + (\alpha_k - 1)\sigma_k^{-1}C. \label{eq:smooth-prior}
\end{align} Recall that \(\gS_{\theta}(\vx_{k\gamma}, \sigma_k)\) estimates the perturbed score \(-\nabla h_{\sigma_k}(\vx_{k\gamma})\). Applying the triangle inequality yields~(\ref{eq:smooth-prior}). We are now ready to prove the theorem.

Combining Lemma~\ref{lemma:sampling_first} with the interpolation argument in~(\ref{eq:interpolation}), it follows that for every  $t\in\left[k\gamma, (k+1)\gamma\right]$,
\begin{equation*}
    \frac{d}{dt}\mathrm{KL(\nu_t \| \pi)} \leq -\frac{3}{4}\mathrm{FI(\nu_t\|\pi)} + \E\left[\|\nabla\log \pi(\vx_t) + \vg_k - \alpha_k\gS_\theta(\vx_{k\gamma}, \sigma_k)\|^2\right]. 
\end{equation*}Adding and subtracting \(\nabla f_\mu(\vx_t)\), \(\nabla f_\mu(\vx_{k\gamma})\), \(\nabla h(\vx_{k\gamma})\), and \(\nabla h_{\sigma_k}(\vx_{k\gamma})\) inside the squared norm in expectation, and applying the inequality $(a + b+c+d)^2 \leq 4a^2 + 4b^2 + 4c^2 + 4d^2$ together with~(\ref{eq:smooth-prior}) and Lemma~\ref{lem:zeroth-order} yields 
\begin{align}\label{eq:main_ineq_1}
    \frac{d}{dt}\mathrm{KL(\nu_t \| \pi)} \leq &-\frac{3}{4}\mathrm{FI(\nu_t\|\pi)} + 4 \E\left[\|\vg_k - \nabla f_\mu(\vx_{k\gamma})\|^2\right] + 4L_{\pi}^2\E\left[\|\vx_t - \vx_{k\gamma}\|^2\right] \notag \\
    & +\mu^2L_{f_2}^2(d+3)^3 + 4(C_1\sigma_k + \varepsilon_{\sigma_k} +(\alpha_k - 1)C_2\sigma_k^{-1})^2, 
\end{align}where $L_\pi \coloneqq L_{f_2} + L_h$. Using Proposition~\ref{prop:error_prop}, and subsequently applying steps~(\ref{eq:prop1:sigma-bound}),~(\ref{eq:prop1:err-bound2}), and~(\ref{eq:error_final-sampling}), we can upper bound the mean squared estimation error $e_k^2 \coloneqq \E[\|\vg_k - \nabla f_\mu (\vx_{k\gamma})\|^2]$ as follows
\begin{equation}\label{eq:error_final}
    e_k^2 \leq \frac{d(d+2)L_{f_1}^2}{b} + \left(\frac{1-p}{p}\right)\frac{4dL_{f_1}^2}{\mu^2b'}\Delta_k - \frac{1}{p}(e_{k+1}^2 - e_k^2)
\end{equation}Plugging this upper bound into~(\ref{eq:main_ineq_1}), we get
\begin{align}
    \frac{d}{dt}\mathrm{KL}(\nu_t\|\pi) &\leq -\frac{3}{4}\mathrm{FI(\nu_t\|\pi)} + 4L_\pi^2\E\Bigl[\|\vx_t - \vx_{k\gamma}\|^2\Bigr] + \mu^2L_{f_2}^2(d+3)^2 + \frac{4d(d+2)L_{f_1}^2}{b} \notag \\
    &\quad + \left(\frac{1-p}{p}\right)\frac{16dL_{f_1}^2}{\mu^2b'}\Delta_k- \frac{4}{p}(e_{k+1}^2 - e_k^2) + 4(C_1\sigma_k + \varepsilon_{\sigma_k} + (\alpha_k - 1)C_2\sigma_k^{-1})^2 \label{eq:main_ineq2}
\end{align} By the interpolation argument in~(\ref{eq:interpolation}), we have
\begin{align}
    \E[\|\vx_{t} - \vx_{k\gamma}\|^2] & = (t-k\gamma)^2 \E\left[\|\vg_k - \alpha_k\gS_{\theta}(\vx_{k\gamma})\|^2\right] + 2(t-k\gamma)d \notag \\
    & \leq \gamma^2\E\left[\|\vg_k - \alpha_k\gS_{\theta}(\vx_{k\gamma})\|^2\right] + 2\gamma d \notag\\
    & = \E[\|\vx_{(k+1)\gamma}-\vx_{k\gamma}\|^2] \notag \\
    & = \Delta_k \label{eq:delta_k_bound}
\end{align}for $t\in\left[k\gamma, (k+1)\gamma\right]$. Substituting the bound in~(\ref{eq:delta_k_bound}) into~(\ref{eq:main_ineq2}) yields
\begin{align}
    \frac{d}{dt}\mathrm{KL}(\nu_t\|\pi) &\leq -\frac{3}{4}\mathrm{FI(\nu_t\|\pi)} + 4L_m^2\underbrace{\left[1 + \left(\frac{1-p}{p}\right)\frac{4d}{\mu^2b'}\right]}_{\coloneqq \phi(\mu)}\Delta_k + \mu^2L_{f_2}^2(d+3)^2 + \frac{4d(d+2)L_{f_1}^2}{b} \notag \\
    &\quad - \frac{4}{p}(e_{k+1}^2 - e_k^2) + 4(C_1\sigma_k + \varepsilon_{\sigma_k} + (\alpha_k - 1)C_2\sigma_k^{-1})^2 \label{eq:main_ineq3}
\end{align} where \(L_m\coloneqq\max\{L_{f_1}, L_\pi\}\). We next use the interpolation~(\ref{eq:interpolation}) to derive an upper bound on \(\Delta_k\):
\begin{align}
    \Delta_k & = \gamma^2 \E[\|\vg_k - \alpha_k\gS_\theta(\vx_{k\gamma}, \sigma_k)\|^2] + 2\gamma d \notag \\
    &\leq 5\gamma^2e_k^2 + \frac{5\gamma^2\mu^2 L_{f_2}^2(d+3)^3}{4} + 5\gamma^2L_\pi^2\Delta_k + 5\gamma^2\E\left[\|\nabla\log\pi(\vx_t)\|^2\right]  \notag \\
    & \quad + 5\gamma^2(C_1\sigma_k + \varepsilon_{\sigma_k}+ (\alpha_k - 1)C_2\sigma_k^{-1})^2 + 2\gamma d \label{eq:delta_k_2}
\end{align}where we add and subtract 
\(\nabla f_\mu(\mathbf{x}_{k\gamma})\), 
\(\nabla f(\mathbf{x}_{k\gamma})\), 
\(\nabla f(\mathbf{x}_{t})\), 
\(\nabla h(\mathbf{x}_{t})\), 
\(\nabla h(\mathbf{x}_{k\gamma})\), 
and \(\nabla h_{\sigma_k}(\mathbf{x}_{k\gamma})\) inside the squared norm in expectation, and then apply the convexity of the \(\ell_{2}\) norm together with the part~(b) of Assumption~\ref{ass:score_net} to obtain~(\ref{eq:delta_k_2}). Using the bound on $e_k^2$ in~(\ref{eq:error_final}), we get
\begin{align}
    \Delta_k &\leq 5\gamma ^2L_m^2\phi(\mu)\Delta_k + \frac{5\gamma^2 d(d+2)L_{f_1}^2}{b} + \frac{5\gamma^2\mu^2L_{f_2}^2(d+3)^3}{4} + 5\gamma^2\E\left[\|\nabla\log\pi(\vx_t)\|^2\right] - \frac{5\gamma^2}{p}\left(e_{k+1}^2 - e_k^2\right) \notag \\
    & \quad + 5\gamma^2(\sigma_kC + \varepsilon_{\sigma_k} + (\alpha_k - 1)\sigma_k^{-1}C)^2 + 2\gamma d\notag 
\end{align} Assume that $\gamma\leq \frac{1}{L_m\sqrt{85\phi(\mu)}}$, then rearranging the terms, we have 
\begin{align}
    \frac{16}{17}\Delta_k &\leq \frac{5\gamma^2 d(d+2)L_{f_1}^2}{b} + \frac{5\gamma^2\mu^2L_{f_2}^2(d+3)^3}{4} + 5\gamma^2\E\left[\|\nabla\log\pi(\vx_t)\|^2\right] - \frac{5\gamma^2}{p}\left(e_{k+1}^2 - e_k^2\right) \notag \\
    & \quad + 5\gamma^2(C_1\sigma_k + \varepsilon_{\sigma_k} + (\alpha_k - 1)C_2\sigma_k^{-1})^2 + 2\gamma d\notag 
\end{align} Multiplying both sides by $\frac{17}{16}$, we get
\begin{align}
    \Delta_k &\leq \frac{85\gamma^2 d(d+2)L_{f_1}^2}{16b} + \frac{85\gamma^2\mu^2L_{f_2}^2(d+3)^3}{64} + \frac{85}{16}\gamma^2\E\left[\|\nabla\log\pi(\vx_t)\|^2\right] - \frac{85\gamma^2}{16p}\left(e_{k+1}^2 - e_k^2\right) \notag \\
    & \quad + \frac{85}{16}\gamma^2(C_1\sigma_k + \varepsilon_{\sigma_k} + (\alpha_k - 1)C_2\sigma_k^{-1})^2 + \frac{17}{8}\gamma d \notag 
\end{align}We can use Lemma~\ref{lemma:sampling_final} to put an upper bound on $\E[\|\nabla \log \pi (\vx_t)\|^2]$
\begin{align}
    \Delta_k & \leq \frac{85\gamma^2 d(d+2)L_{f_1}^2}{16b} + \frac{85\gamma^2\mu^2L_{f_2}^2(d+3)^3}{64} + \frac{85}{16}\gamma^2\mathrm{FI}(\nu_t\|\pi) + \frac{85}{8}\gamma^2L_\pi d - \frac{85\gamma^2}{16p}\left(e_{k+1}^2 - e_k^2\right) \notag \\
    & \quad + \frac{85}{16}\gamma^2(C_1\sigma_k + \varepsilon_{\sigma_k}^2 + (\alpha_k - 1)C_2\sigma_k^{-1})^2 + \frac{17}{8}\gamma d. \label{eq:thm3:delta_k-bounding}
\end{align} We can combine the terms \(\frac{85}{8}\gamma^2L_\pi d\) and \(\frac{17}{8}\gamma d\) using the assumption \(\gamma\leq \frac{1}{L_m\sqrt{85\phi(\mu)}}\). In particular, 
\begin{align}
    \gamma & \leq \frac{1}{\sqrt{85L_m^2\left(1 + \left(\frac{1-p}{p}\right)\frac{4d}{\mu^2b'}\right)}} \leq \frac{1}{L_m\sqrt{85}},\notag 
\end{align}where we use the fact that $\left(\frac{1-p}{p}\right)\frac{8d}{\mu^2b'}> 0$. Consequently,
\begin{align}
    \frac{85}{8}\gamma^2dL_\pi \leq \frac{\sqrt{85}\gamma d L_\pi}{8L_m} \leq \frac{\sqrt{85}}{8}\gamma d, \notag
\end{align}where we use $L_\pi \leq L_m$. Therefore,
\begin{align*}
    \frac{85}{8}\gamma^2 L_\pi d + \frac{17}{8}\gamma d & \leq \left(\frac{\sqrt{85}}{8} + \frac{17}{8}\right) \gamma d \\
    & \leq 4\gamma d.
\end{align*}Substituting this bound into~(\ref{eq:thm3:delta_k-bounding}) yields the simplified bound
\begin{align}\label{eq:delta_k_final_bound}
    \Delta_k & \leq \frac{85\gamma^2 d(d+2)L_{f_1}^2}{16b} + \frac{85\gamma^2\mu^2L_{f_2}^2(d+3)^3}{64} + \frac{85}{16}\gamma^2\mathrm{FI}(\nu_t\|\pi) - \frac{85\gamma^2}{16p}\left(e_{k+1}^2 - e_k^2\right) \notag \\ 
    &\quad + \frac{85}{16}\gamma^2(C_1\sigma_k + \varepsilon_{\sigma_k}^2 + (\alpha_k - 1)C_2\sigma_k^{-1})^2 + 4\gamma d.  
\end{align}Plugging~(\ref{eq:delta_k_final_bound}) into~(\ref{eq:main_ineq3}), we get
\begin{align}
    \frac{d}{dt}\mathrm{KL}(\nu_t\|\pi) &
    \leq \left(-\frac{3}{4} + \frac{85\gamma^2L_m^2\phi(\mu)^2}{4}\right)\mathrm{FI(\nu_t\|\pi)} + \left(1 + \frac{85\gamma^2 L_m^2\phi(\mu)}{16}\right)\frac{4d(d+2)L_{f_1}^2}{b} \notag \\
    & \quad + \left(1 + \frac{85\gamma^2 L_m^2\phi(\mu)}{16}\right)\mu^2L_{f_2}^2(d+3)^3 -\frac{4}{p}\left(1 + \frac{85\gamma^2L_m^2\phi(\mu)}{16}\right)\left(e_{k+1}^2 - e_{k}^2\right) \notag \\
    & \quad + 4\left(1 + \frac{85\gamma^2L_m^2\phi(\mu)}{16}\right)(\sigma_kC + \varepsilon_{\sigma_k} + (\alpha_k - 1)\sigma_k^{-1}C)^2 + 16\gamma dL_m^2 \phi(\mu) \notag \\
    & \leq -\frac{1}{2}\mathrm{FI(\nu_t\|\pi)} + \frac{17d(d+2)L_{f_1}^2}{4b} + \frac{17}{16}\mu^2L_{f_2}^2(d+3)^3 -\frac{4}{p}\left(1 + \frac{85\gamma^2L_m^2\phi(\mu)}{16}\right)\left(e_{k+1}^2 - e_{k}^2\right) \notag \\
    & \quad + \frac{17}{4}(C_1\sigma_k + \varepsilon_{\sigma_k} + (\alpha_k - 1)C_2\sigma_k^{-1})^2 + 16\gamma dL_m^2 \phi(\mu)\label{eq:main_ineq4},
\end{align}where we use the fact that $85\gamma^2L_m^2\phi(\mu)\leq 1$ to get~(\ref{eq:main_ineq4}). Integrating both sides between $\left[k\gamma, (k+1)\gamma\right]$, we get
\begin{align}\label{eq:kl_ineq}
    \mathrm{KL}(\nu_{(k+1)\gamma}\|\pi) - \mathrm{KL}(\nu_{k\gamma}\|\pi) & \leq -\frac{1}{2}\int_{k\gamma}^{(k+1)\gamma}\mathrm{FI}(\nu_t\|\pi)dt + \frac{17\gamma d(d+2)L_{f_1}^2}{4b} + \frac{17}{16}\gamma\mu^2L_{f_2}^2(d+3)^3  \notag \\ 
    & \quad -\frac{4\gamma}{p}\left(1 + \frac{85\gamma^2L_m^2\phi(\mu)}{16}\right)\left(e_{k+1}^2 - e_{k}^2\right) \notag \\ 
    & \quad + \frac{51\gamma}{4}(C_1^2\sigma_k^2 + \varepsilon_{\sigma_k}^2 + (\alpha_k - 1)^2C_2^2\sigma_k^{-2}) + 16\gamma^2 dL_m^2 \phi(\mu),
\end{align}where we use the inequality $(a+b+c)^2\leq 3a^2 + 3b^2 + 3c^2$ to get the last term. Let $\gL_k\coloneqq \mathrm{KL}(\nu_{k\gamma}\| \pi) + \frac{4\gamma}{p}\left[1 + \frac{85}{16}\gamma^2L_m^2\phi(\mu)\right]e_k^2$, iterating for $k=0,\ldots,N-1$, multiplying both sides by $\frac{2}{N\gamma}$, rearranging the terms and using the fact that $\gL_N\geq 0$, we get 
\begin{align}\label{eq:main_ineq4c1}
    \frac{1}{N\gamma}\int_0^{N\gamma}\mathrm{FI}(\nu_t\|\pi)dt \leq \frac{2\gL_0}{N\gamma} + \frac{17d(d+2)L_{f_1}^2}{2b} + \frac{17}{8}\mu^2L_{f_2}^2(d+3)^3 + \Bar{\sigma}^2 + \Bar{\varepsilon}_\sigma^2 + \Bar{\alpha}^2 + 32\gamma L_m^2 d\phi(\mu), 
\end{align}where $\Bar{\sigma}^2 \coloneqq \frac{51C_1^2}{2N}\sum_{k=0}^{N-1}\sigma_k^2$, $\Bar{\varepsilon}_\sigma^2 \coloneqq \frac{51}{2N}\sum_{k=0}^{N-1}\varepsilon_{\sigma_k}^2$, and $\Bar{\alpha}\coloneqq \frac{51C_2^2}{2N}\sum_{k=0}^{N-1}\frac{(\alpha_k - 1)^2}{\sigma_k^2}$. Since $85\gamma^2L_m^2\phi(\mu)\leq 1$, we have 
\begin{equation*}
    \gL_0 = \mathrm{KL}(\nu_0\|\pi) + \frac{4\gamma}{p}\left(1 + \frac{85\gamma^2L_m^2\phi(\mu)}{16}\right)e_0^2 \leq \mathrm{KL}(\nu_0\|\pi) + \frac{17\gamma e_0^2}{4p}.
\end{equation*}We can define $C_0\coloneqq2\mathrm{KL}(\nu_0\|\pi) + \frac{17\gamma e_0^2}{2p}$ which is a numerical constant. This completes the proof of the Fisher information upper bound. 

To establish the iteration complexity, we observe that all terms in the upper bound of Theorem~\ref{thm:zo-sampling-fi}, except for the bias induced by the score network approximation error $\Bar{\varepsilon}_{\sigma}^2$, and the bias due to the annealing and noise schedules $\Bar{\alpha}^2$ and $\Bar{\sigma}^2$, respectively, appear with different coefficients. Therefore, choosing the same parameter values as
\begin{equation*}
    \gamma = \frac{1}{L_m N^{3/4}d^{7/4}}, \quad p = \frac{L_m}{N^{1/4}d^{1/4}} \quad b = \left\lceil \frac{1}{p}\right\rceil, \quad \text{and} \quad \mu^2 = \frac{1}{L_mN^{1/4}d^{5/4}}, 
\end{equation*} we obtain the following result
\begin{align}
    \frac{1}{N\gamma}\int_0^{N\gamma}\mathrm{FI}(\nu_t\|\pi)dt \leq \mathcal{O}\left(\frac{d^{7/4}L_m}{N^{1/4}}\right) + \Bar{\sigma}^2 + \Bar{\varepsilon}_\sigma^2 + \Bar{\alpha}^2. \label{eq:FI-convergence-final-bound}
\end{align} We recall that the noise $(\sigma_k)_{k=0}^{N-1}$ and annealing $(\alpha_k)_{k=0}^{N-1}$ schedules are defined as 
\begin{equation}
    \alpha_k = \max\{\alpha_0\rho_1^k, 1\} \quad \text{and} \quad \sigma_k = \max\{\sigma_0\rho_2^k, \sigma_{\text{min}}\},
\end{equation}where $\rho_1,\rho_2\in(0,1)$ denote decay rates, $\sigma_0>0$, $\alpha_0>0$, and $\sigma_{\text{min}}> 0$ is the minimum noise level. By definition in~(\ref{def:annealing-schedule-sigma-alpha}), there exist indices $K_\alpha< N-1$ and $K_\sigma<N-1$ independent of $N$ such that $\alpha_k=1$ for $\forall k\geq K_\alpha$ and $\sigma_k=\sigma_{\text{min}}$ for $\forall k\geq K_\sigma$. 

We next analyze the convergence behavior of the bias contributions due to the noise schedule, annealing schedule, and score network error separately.

To bound $\bar{\sigma}^2$, we proceed as 
\begin{align}
    \bar{\sigma}^2
    &= \frac{51C_1^2}{2N}\sum_{k=0}^{N-1}\max\{\sigma_0^2\rho_2^{2k},\sigma_{\min}^2\} \notag\\
    &\overset{(*)}{=}
    \frac{51C_1^2}{2N}\sum_{k=0}^{K_\sigma-1}\sigma_0^2\rho_2^{2k}
    +
    \frac{51C_1^2}{2N}\sum_{k=K_\sigma}^{N-1}\sigma_{\min}^2 \notag\\
    &=
    \underbrace{
    \frac{51C_1^2\sigma_0^2(1-\rho_2^{2K_\sigma})}
    {2N(1-\rho_2^2)}
    }_{\mathcal{O}(N^{-1})}
    +
    \frac{51C_1^2(N-K_\sigma)\sigma_{\min}^2}{2N}. \label{eq:thm2:noise-schedule}
\end{align}where $(*)$ follows from the definition of the annealing and noise schedules~(\ref{def:annealing-schedule-sigma-alpha}). Setting
\[
\sigma_{\min}=\mathcal{O}(N^{-1/8}),
\]
we obtain
\begin{equation}
\label{eq:posterior-sampling-sigma-bound}
\bar{\sigma}^2
=
\mathcal{O}(N^{-1})
+
\mathcal{O}(N^{-1/4})
=
\mathcal{O}(N^{-1/4}).
\end{equation}

To bound $\bar{\alpha}^2$, we use that $(\sigma_k)_{k=0}^{N-1}$ is a nonincreasing sequence and that there exists a constant $K_\alpha < N-1$, independent of $N$, such that $\alpha_k = 1$ for all $k\geq K_\alpha$. Thus, we obtain
\begin{align}
    \Bar{\alpha}^2 & = \frac{51C_2^2}{2N}\sum_{k=0}^{N-1}\frac{(\alpha_k-1)^2}{\sigma_k^2} \notag \\ 
    & = \frac{51C_2^2}{2N}\sum_{k=0}^{K_\alpha-1}\frac{(\alpha_0\rho_1^k-1)^2}{\sigma_k^2} \notag \\ 
    & \leq \frac{51C_2^2}{2N\sigma_{\text{min}}^2}\sum_{k=0}^{K_\alpha-1}(\alpha_0\rho_1^k-1)^2 \notag \\ 
    & \leq \frac{51C_2^2}{2N\sigma_{\text{min}}^2}\sum_{k=0}^{K_\alpha-1}(\alpha_0^2\rho_1^{2k} -2\alpha_0\rho_1^k + 1) \notag \\ 
    & \leq \frac{51C_2^2\alpha_0^2}{2N\sigma_{\text{min}}^2}\sum_{k=0}^{K_\alpha-1}\rho_1^{2k} + \frac{51C_2^2K_\alpha}{2N\sigma_{\text{min}}^2} \notag \\
    & = \underbrace{\frac{51C_2^2\alpha_0^2(1-\rho_1^{2K_\alpha})}{2N\sigma_{\text{min}}^2(1-\rho_1^2)}}_{\mathcal{O}\left(N^{-3/4}\right)} + \underbrace{\frac{51C_2^2K_\alpha}{2N\sigma_{\text{min}}^2}}_{\mathcal{O}(N^{-3/4})}\notag \\ 
    & = \mathcal{O}\left(N^{-3/4}\right) \label{eq:thm2:alpha-bar-bound}
\end{align}

Finally, to bound $\bar{\varepsilon}_{\sigma}^2$, under the assumption that the SGM estimation error decays as \(\varepsilon_{\sigma_k}^2\leq \frac{C'}{k}\) for some constant \(C'>0\), we proceed as 
\begin{equation}\label{eq:posterior-sampling-epsilon-bound}
  \Bar{\varepsilon}_\sigma^2 = \frac{51}{2N}\sum_{k=0}^{N-1}\varepsilon_{\sigma_k}^2 \leq \frac{51\varepsilon_{\sigma_0}^2}{2N} + \frac{51C'}{2N}\sum_{k=1}^{N-1}\frac{1}{k} = \mathcal{O}\left(\frac{\log N}{N}\right)
\end{equation} Combining the results~(\ref{eq:posterior-sampling-sigma-bound}),~(\ref{eq:thm2:alpha-bar-bound}), and~(\ref{eq:posterior-sampling-epsilon-bound}), we obtain
\begin{align}
    \frac{1}{N\gamma}\int_0^{N\gamma}\mathrm{FI}(\nu_t\|\pi)dt &=  \mathcal{O}\left(\frac{d^{7/4}L_m}{N^{1/4}}\right) + \mathcal{O}\left(\frac{1}{N^{1/4}}\right) + \mathcal{O}\left(\frac{1}{N^{3/4}}\right) + \mathcal{O}\left(\frac{\log N}{N}\right)\notag \\ 
    & = \mathcal{O}\left(\frac{d^{7/4}L_m}{N^{1/4}}\right) \label{eq:thm3:fi-bound-order}
\end{align} By the convexity of the Fisher information and Jensen's inequality, it follows that
\begin{align}
    \mathrm{FI}(\Bar{\nu}_{N\gamma}\|\pi)
    =
    \mathcal{O}\left(\frac{d^{7/4}L_m}{N^{1/4}}\right). \notag
\end{align}Therefore, to obtain $\mathrm{FI}(\bar{\nu}_{N\gamma}\|\pi)\leq \varepsilon$, the sufficient number of iterations is 
\begin{equation*}
    N = \mathcal{O}\left(\frac{d^7L_m^4}{\varepsilon^4}\right). 
\end{equation*}Note that the mean per-iteration cost is $pb + (1-p)b' = \mathcal{O}(1)$, therefore, the total number of function evaluations is $\mathcal{O}\left(\frac{d^7L_m^4}{\varepsilon^4}\right)$ as well. In the proof of Theorem~\ref{thm:zo-sampling-fi}, we verify that \(N\geq\mathcal{O}\!\left(\frac{d^7L_m^4}{\varepsilon^4}\right)\) satisfies the conditions
\begin{equation}
    p\in(0,1] \quad \text{and} \quad \gamma \leq \frac{1}{L_m\sqrt{85\phi(\mu)}}.
\end{equation}
Therefore, all required conditions on $N$ hold by the iteration complexity, completing the proof. \qedbox

\subsection{Proof of Corollary~\ref{corr:zo-posterior-sampling-tv}}\label{appendix:zo-posterior-sampling-tv}
\paragraph{Corollary~\ref{corr:zo-posterior-sampling-tv}}
    \textit{Let $\pi\propto\ell(\vy|\vx)p(\vx)$ be the posterior with the likelihood $\ell(\vy|\vx)\propto \mathrm{exp}(-f(\vx))$ and the prior $p(\vx)\propto \mathrm{exp}(-h(\vx))$. Suppose the likelihood potential $f$ satisfies Assumptions~\ref{ass:lipschitzness},~\ref{ass:smoothness} with Lipschitz constants $L_{f_1}$, $L_{f_2}$, respectively, the target posterior $\pi$ satisfies Assumption~\ref{ass:poincare-inequality} with constant $C_{\mathrm{PI}}>0$, the prior potential $h$ satisfies Assumption~\ref{ass:smoothness} with Lipschitz constant $L_h$ and Assumption~\ref{ass:perturb_prior}, and the SGM satisfies Assumption~\ref{ass:score_net} with decreasing error $\varepsilon_{\sigma_k}=\mathcal{O}(k^{-1/2})$ for $\sigma_k>0$, and $k\geq 1$. Define $L_m\coloneqq\max\{L_{f_2} + L_h,L_{f_1}\}$. Let $(\nu_t)_{t\geq 0}$ denote the law of interpolation generated by~(\ref{eq:proposed-zo-posterior-sampling-interpolation}) with annealing and noise schedules defined in~(\ref{def:annealing-schedule-sigma-alpha}). For any step size $\gamma \in \left(0,\frac{1}{L_m\sqrt{85\phi(\mu)}}\right]$, where $\phi(\mu)\coloneqq 1+\frac{4(1-p)d}{p\mu^2b'}$, and for any $N \geq 1$, we have
    \begin{align}
        \|\Bar{\nu}_{N\gamma}-\pi\|_{\mathrm{TV}}^2 & \leq 16L_m\sqrt{\frac{C_0C_{\mathrm{PI}}d\phi(\mu)}{N}} + \frac{34d(d+2)C_{\mathrm{PI}}L_{f_1}^2}{b} + \frac{17}{2}\mu^2C_{\mathrm{PI}}L_{f_2}^2(d+3)^3 + \Bar{\sigma}^2 + \bar\varepsilon_{\sigma}^2 + \Bar{\alpha}^2, \notag 
    \end{align}
    where 
    \begin{equation*}
        \bar\sigma^2\coloneqq \frac{102C_{\mathrm{PI}}C_1^2}{N}\sum_{k=0}^{N-1}\sigma_k^2, \quad \Bar{\varepsilon}_\sigma^2\coloneqq \frac{102C_{\mathrm{PI}}}{N}\sum_{k=0}^{N-1}\varepsilon_{\sigma_k}^2, \quad \text{and} \quad \bar\alpha^2\coloneqq\frac{102C_{\mathrm{PI}}C_2^2}{N}\sum_{k=0}^{N-1}\frac{(\alpha_k-1)^2}{\sigma_k^2},
    \end{equation*} and $C_0, C_1, C_2$ are positive constants. Furthermore, let
    \begin{equation*}
        \gamma = \frac{1}{L_m N^{3/4}d^{7/4}}, \quad p = \frac{L_m}{N^{1/4}d^{1/4}} \quad b = \left\lceil \frac{1}{p}\right\rceil, \quad \text{and} \quad \mu^2 = \frac{1}{L_mN^{1/4}d^{5/4}},
    \end{equation*}and let the initial parameters satisfy $\sigma_0\geq\sigma_{\text{min}}$, $\varepsilon_{\sigma_0}>0$, $\alpha_0\geq 1$, and $\sigma_{\text{min}}>0$ is the minimum noise level. Then, to achieve $\|\Bar{\nu}_{N\gamma} - \pi \|_{\mathrm{TV}}^2 \leq \varepsilon$ for the time average law $\Bar{\nu}_{N\gamma}\coloneqq \frac{1}{N\gamma}\int_0^{N\gamma}\nu_t\,dt$, $\mathcal{O}\left(\frac{d^7L_m^4C_{\mathrm{PI}}^4}{\varepsilon^4}\right)$ iterations with $\mathcal{O}(1)$ function evaluations per iteration is sufficient. }

\textit{Proof. }Recall that from Theorem~\ref{thm:zo-posterior-sampling-fi} proof, we have inequality~(\ref{eq:main_ineq4c1}) as
\begin{align}
    \frac{1}{N\gamma}\int_0^{N\gamma}\mathrm{FI}(\nu_t\|\pi)dt &\leq \frac{C_0}{N\gamma} + \frac{17d(d+2)L_{f_1}^2}{2b} + \frac{17}{8}\mu^2L_{f_2}^2(d+3)^3 + \Bar{\sigma}^2 + \Bar{\varepsilon}_\sigma^2 + \Bar{\alpha}^2 + 32\gamma L_m^2 d\phi(\mu), \notag 
\end{align}where $C_0=2\mathrm{KL}(\nu_0\|\pi) + \frac{17\gamma e_0^2}{2p}$. By the convexity of the Fisher information, we have 
\begin{align}
    \mathrm{FI}(\Bar{\nu}_{N\gamma}\|\pi) & \leq \frac{1}{N\gamma}\int_0^{N\gamma}\mathrm{FI}(\nu_t\|\pi)dt \notag \\
    & \leq \frac{C_0}{N\gamma} + \frac{17d(d+2)L_{f_1}^2}{2b} + \frac{17}{8}\mu^2L_{f_2}^2(d+3)^3 + \Bar{\sigma}^2 + \Bar{\varepsilon}_\sigma^2 + \Bar{\alpha}^2+ 32\gamma L_m^2 d\phi(\mu) \notag 
\end{align}where $\Bar{\nu}_{N\gamma}\coloneqq \frac{1}{N\gamma}\int_0^{N\gamma}\nu_tdt$. By Assumption~\ref{ass:poincare-inequality}, we invoke Lemma \ref{lemma:tv_lemma} and get
\begin{align}
    \|\Bar{\nu}_{N\gamma} - \pi\|^2_{\mathrm{TV}} & \leq 4C_{\mathrm{PI}}\mathrm{FI}(\Bar{\nu}_{N\gamma}\|\pi) \notag \\
    & \leq \frac{4C_0C_{\mathrm{PI}}}{N\gamma} + \frac{34d(d+2)C_{\mathrm{PI}}L_{f_1}^2}{b} + \frac{17}{2}\mu^2C_{\mathrm{PI}}L_{f_2}^2(d+3)^3 + 4C_{\mathrm{PI}}(\Bar{\sigma}^2 + \Bar{\varepsilon}_\sigma^2 + \Bar{\alpha}^2) \notag \\
    & \quad + 128\gamma C_{\mathrm{PI}}L_m^2 d\phi(\mu). \label{eq:main_ineq5}
\end{align}Let
\begin{align*}
    \gamma = \frac{1}{L_m N^{3/4}d^{7/4}}, \quad p = \frac{L_m}{N^{1/4}d^{1/4}} \quad b = \left\lceil \frac{1}{p}\right\rceil, \quad \text{and} \quad \mu^2 = \frac{1}{L_mN^{1/4}d^{5/4}}. 
\end{align*}Substituting these choices into~(\ref{eq:main_ineq5}) and following the derivation steps from~(\ref{eq:FI-convergence-final-bound}) to~(\ref{eq:thm3:fi-bound-order}) in Theorem~\ref{thm:zo-posterior-sampling-fi}, we obtain
\begin{equation}\label{eq:tv-analysis-bound}
    \|\Bar{\nu}_{N\gamma} - \pi\|^2_{\mathrm{TV}} = \mathcal{O}\left(\frac{C_{\mathrm{PI}}d^{7/4}L_m}{N^{1/4}} + \frac{C_{\mathrm{PI}}\log N}{N}\right)
\end{equation}provided that the annealing and noise schedules are defined as~(\ref{def:annealing-schedule-sigma-alpha}) and the SGM estimation error obeys $\varepsilon_{\sigma_k}\coloneqq \mathcal{O}(k^{-1/2})$ and for all $k\geq 1$ with initial value $\varepsilon_{\sigma_0}>0$. Thereofer, to achieve $\|\Bar{\nu}_{N\gamma} - \pi\|_{\mathrm{TV}}^2 \leq \varepsilon$, the sufficient number of iterations is
\begin{equation*}
    N=\mathcal{O}\left(\frac{C_{\mathrm{PI}}^4d^{7}L_m^4}{\varepsilon^4}\right).
\end{equation*}Furthermore, the per-iteration computation cost is $pb + (1-p)b' = \mathcal{O}(1)$; therefore, the number of total function evaluations is $\mathcal{O}\left(\frac{C_{\mathrm{PI}}^4d^7L_m^4}{\varepsilon^4}\right)$ as well. \qedbox

\subsection{Proof of Theorem~\ref{thm:zo-posterior-sampling-id}}\label{appendix:zo-posterior-sampling-id}

\paragraph{Theorem~\ref{thm:zo-posterior-sampling-id}}\textit{Let $\pi\propto\ell(\vy|\vx)p(\vx)$ be the posterior with the likelihood $\ell(\vy|\vx)\propto \mathrm{exp}(-f(\vx))$ and the prior $p(\vx)\propto \mathrm{exp}(-h(\vx))$. Suppose the likelihood potential $f$ satisfies Assumptions~\ref{ass:lipschitzness},~\ref{ass:smoothness} with Lipschitz constants $L_{f_1}$, $L_{f_2}$, respectively, the prior potential $h$ satisfies Assumption~\ref{ass:smoothness} with Lipschitz constant $L_h$ and Assumption~\ref{ass:perturb_prior}, and the SGM satisfies Assumption~\ref{ass:score_net} with decreasing error $\varepsilon_{\sigma_k}=\mathcal{O}(k^{-1/2})$ for $\sigma_k>0$, and $k\geq 1$. Define $L_m\coloneqq\max\{L_{f_2} + L_h,L_{f_1}\}$. Let $(\nu_t)_{t\geq 0}$ denote the law of interpolation generated by~(\ref{eq:proposed-zo-posterior-sampling-interpolation}) with annealing and noise schedules defined in~(\ref{def:annealing-schedule-sigma-alpha}). Define the time-varying parameters as follows
\begin{equation*}
    \gamma_k=\frac{C_{\gamma}}{k^{3/2}}, \quad p_k=\frac{1}{2k^{1/2}}, \quad b_k = \left\lceil \frac{1}{p_k} \right\rceil, \quad \text{and} \quad \mu_k=\frac{C_{\mu}}{k^{1/8}},
\end{equation*}where $L_m\coloneqq\max\{L_{f_2} + L_h,L_{f_1}\}$, and $C_\gamma, C_\mu$ are positive constants. Let the initial parameters satisfy $\sigma_0 \geq \sigma_{\text{min}}$, $\varepsilon_{\sigma_0}>0$, $\alpha_0\geq 1$, and $\sigma_{\text{min}}>0$ is the minimum noise level. Then, the time-averaged law $\bar{\nu}_{\tau_n}\coloneqq \frac{1}{\tau_n}\int_0^{\tau_n}\nu_t\,dt$, where $\tau_n\coloneqq\sum_{k=1}^n\gamma_k$, converges weakly to $\pi$. 
}

\textit{Proof. }Given time-varying parameters \(\gamma_k,b_k,p_k,\mu_k\) at iteration \(k\), define the cumulative time \(\tau_n\) and averaged law \(\bar{\nu}_{\tau_n}\) at iteration \(n\) as
\[
\tau_{n}\coloneqq\sum_{k=1}^{n}\gamma_{k},
\qquad
\bar{\nu}_{\tau_{n}}\coloneqq
\frac{1}{\tau_{n}}\int_{0}^{\tau_{n}}\nu_{t}\,dt,
\]
where \(\nu_t\) denotes the law of the process \(\vx_t\) under the following continuous-time interpolation
\begin{equation}
\vx_{t}\coloneqq 
\vx_{\tau_{n-1}}
-(t-\tau_{n-1})\,(\vg_n - \alpha_n\gS_{\theta}(\vx_{\tau_n}, \sigma_n ))
+\sqrt{2}\,\bigl(\mB_{t}-\mB_{\tau_{n-1}}\bigr),
\qquad t\in[\tau_{n-1},\tau_{n}].
\end{equation} $g_k$ is defined as follows
\begin{equation}
\vg_{n} :=
\begin{cases}
\displaystyle
\frac{1}{b_{n}}\sum_{i=1}^{b_{n}} \widetilde{\nabla} f_{\mu_{n}}(\vx_{\tau_n}, \vu_{n}^i)
& \text{w.p. } p_{n}, \\[1ex]
\displaystyle
\vg_{n-1}
+ \frac{1}{b'}\sum_{i=1}^{b'}
\left(
\widetilde{\nabla} f_{\mu_{n}}(\vx_{\tau_n}, \vu_{n}^i)
- \widetilde{\nabla} f_{\mu_{n}}(\vx_{\tau_{n-1}}, \vu_{n}^i)
\right)
& \text{w.p. } 1-p_{n},
\end{cases}
\end{equation}for all $n \geq 1$. At initialization, we choose $\gamma_0, b_0, \mu_0>0$ and $p_0\in(0,1]$, and define $\vg_0$
\begin{equation*}
\vg_0 \coloneqq \frac{1}{b_0}\sum_{i=1}^{b_0}\widetilde{\nabla}f_{\mu_0}(\vx_0, \vu_0^i),\quad \vu_0^i\sim \gN(0,I).
\end{equation*}We establish weak convergence by adapting the proof of Theorem~\ref{thm:zo-posterior-sampling-fi}. To this end, we first verify that the step sizes $(\gamma_k)_{k\geq1}$ satisfy the step-size condition of Theorem~\ref{thm:zo-posterior-sampling-fi}, namely,
\begin{equation}\label{eq:thm2:constraints-on-gamma}
    \gamma_k \in \left(0, \frac{1}{L_m\sqrt{85\phi_k(\mu_k)}}\right], \quad \text{where} \quad \phi_k(\mu_k) = 1 + \frac{4(1-p_k)d}{p_k\mu_k^2b'} 
\end{equation}for $k \geq 1$. Using the definitions of $\mu_k$ and $p_k$, we have
\begin{align}
    \phi_k(\mu_k) & = 1 + \frac{4(1-p_k)d}{p_k\mu_k^2 b'} \notag \\
    & \leq 1 + \frac{4d}{p_k\mu_k^2 b'}\notag \\
    &\leq 1 + \frac{8dk^{3/4}}{b'C_\mu^2} \notag \\
    & \leq \frac{16dk^{3/4}}{b'C_\mu^2}, \label{eq:thm4:phi-bound}
\end{align}where the last inequality is satisfied by selecting a constant $C_\mu$ such that
\begin{equation*}
    \sqrt{\frac{8d}{b'}} \leq C_\mu. 
\end{equation*}Then, we have
\begin{align*}
    \frac{1}{L_m\sqrt{85\phi_k(\mu_k)}} \geq \frac{1}{L_m\sqrt{\frac{1360dk^{3/4}}{b'C_\mu^2}}} = \frac{C_\mu}{L_mk^{3/8}}\sqrt{\frac{b'}{1360d}}. \notag
\end{align*}Choosing
\begin{equation*}
    C_\gamma = \frac{C_\mu}{L_m}\sqrt{\frac{b'}{1360d}},
\end{equation*}we have 
\begin{equation*}
    \gamma_k = \frac{C_\gamma}{k^{3/2}} \leq \frac{C_\mu}{L_mk^{3/8}}\sqrt{\frac{b'}{1360d}} \leq \frac{1}{L_m\sqrt{85\phi_k(\mu_k)}}, 
\end{equation*}which implies~(\ref{eq:thm2:constraints-on-gamma}) holds for $k\geq 1 $. Furthermore, we note that $p_k\in(0,1]$ holds for all $k\geq 1$ by definition. Therefore, we can follow the same steps up to~(\ref{eq:kl_ineq}) in Theorem~\ref{thm:zo-posterior-sampling-fi} since the same arguments remain valid for $t\in[\tau_{n-1}, \tau_n]$ with time-varying parameters. We obtain 
\begin{align}
\mathrm{KL}(\nu_{\tau_{n}}\|\pi) - \mathrm{KL}(\nu_{\tau_{n-1}}\|\pi) & \leq -\frac{1}{2}\int_{\tau_{n-1}}^{\tau_n}\mathrm{FI}(\nu_t\|\pi)dt + \frac{17\gamma_n d(d+2)L_{f_1}^2}{4b_n} + \frac{17}{16}\gamma_n\mu_n^2L_{f_2}^2(d+3)^3 \notag \\ 
    & \quad -\frac{4\gamma_n}{p_n}\left(1 + \frac{85\gamma_n^2L_m^2\phi_n(\mu_n)}{16}\right)\left(e_{n}^2 - e_{n-1}^2\right) \notag \\
    & \quad + \frac{51\gamma_n}{4}(C_1^2\sigma_n^2 + \varepsilon_{\sigma_n}^2 + (\alpha_n - 1)^2C_2^2\sigma_n^{-2}) + 16\gamma_n^2 dL_m^2 \phi_n(\mu_n)\label{eq:one_step_recursion}
\end{align}for $t\in[\tau_{n-1}, \tau_n]$. Plugging the definitions of $\gamma_k$, $b_k$, $p_k$, and $\mu_k$ given in Theorem~\ref{thm:zo-posterior-sampling-id} into~(\ref{eq:one_step_recursion}), we get
\begin{align}
\mathrm{KL}(\nu_{\tau_{n}}\|\pi) - \mathrm{KL}(\nu_{\tau_{n-1}}\|\pi) & \leq -\frac{1}{2}\int_{\tau_{n-1}}^{\tau_n}\mathrm{FI}(\nu_t\|\pi)dt + \frac{A_1}{n^2} + \frac{A_2}{n^{7/4}} + \frac{A_3}{n^{9/4}} -c_n\left(e_{n}^2 - e_{n-1}^2\right)\notag \\
& \quad + \frac{51\gamma_n}{4}(C_1^2\sigma_n^2 + \varepsilon_{\sigma_n}^2 + (\alpha_n - 1)^2\sigma_n^{-2}C_2^2), \label{eq:kl_ineq_3}
\end{align}where 
\begin{equation*}
    A_1 \coloneqq \frac{17d(d+2)L_{f_1}^2C_\gamma}{8}, \quad A_2 \coloneqq \frac{17L_{f_2}^2(d+3)^3C_\gamma C_\mu^2}{16}, \quad A_3 \coloneqq \frac{256d^2L_m^2 C_\gamma^2}{b'C_\mu^2},
\end{equation*} and
\begin{equation*}
    c_n\coloneqq \frac{\gamma_n}{p_n}\left(4 + \frac{85\gamma_n^2L_m^2\phi_n(\mu_n)}{16}\right),
\end{equation*}for $n\geq 1$. We use~(\ref{eq:thm4:phi-bound}) to obtain $A_3$. Iterating the bound in~(\ref{eq:kl_ineq_3}), we obtain
\begin{align}
    \mathrm{KL}(\nu_{\tau_{n}}\|\pi) \leq \mathrm{KL}(\nu_{0}\|\pi) &-\frac{1}{2}\int_{0}^{\tau_n}\mathrm{FI}(\nu_t\|\pi)dt + A_1S_1 + A_2S_2 + A_3S_3 -\sum_{k=1}^nc_k\left(e_{k}^2- e_{k-1}^2\right) \notag \\
    & + \frac{51}{4}\sum_{k=1}^n\gamma_k(\sigma_k^2C^2 + \varepsilon_{\sigma_k}^2 + (\alpha_k - 1)^2\sigma_k^{-2}C^2), \label{eq:kl_ineq4}
\end{align}where we have 
\[
\sum_{k=1}^{n}k^{-2}
\leq\underbrace{\sum_{k=1}^{\infty}k^{-2}}_{\coloneqq S_{1}}<\infty,\quad
\sum_{k=1}^{n}k^{-7/4}
\leq\underbrace{\sum_{k=1}^{\infty}k^{-7/4}}_{\coloneqq S_2}<\infty.
\]

\[
\sum_{k=1}^{n}k^{-9/4}
\leq\underbrace{\sum_{k=1}^{\infty}k^{-9/4}}_{\coloneqq S_3}<\infty.
\]
Thus, $A_1S_1$, $A_2S_2$, and $A_3S_3$ are uniformly bounded constants and are independent of $n$. Furthermore, if we assume that $c_n$ is nonnegative and decreasing sequence (i.e. $0\leq c_{n+1} < c_n$), we can bound the summation in~(\ref{eq:kl_ineq4}) as follows
\begin{align}
    -\sum_{k=1}^nc_k\left(e_k^2 - e_{k-1}^2\right) = c_1e_0^2 + \sum_{k=1}^{n-1}(c_{k+1}-c_k)e_k^2 - c_ne_n^2 \leq c_1e_0. \label{eq:thm4:ck-differences}
\end{align}Thus, it remains to show that \((c_n)_{n\geq 1}\) is a nonnegative and decreasing sequence. Substituting the parameters \(\gamma_n\), \(b_n\), \(p_n\), and \(\mu_n\) into the definition of \(c_n\), we obtain
\begin{equation*}
    c_n = \frac{8C_\gamma}{n} + \frac{85L_m^2 C_\gamma^3}{8}\frac{1}{n^4} +\frac{85dL_m^2C_\gamma^3}{b'C_\mu^2}\frac{1}{n^{13/4}}\left(1 - \frac{1}{2n^{1/2}}\right)
\end{equation*}for $n\geq 1$. Let $F\coloneqq \frac{85L_m^2C_\gamma^3}{8}$, which is a numerical constant. Then, we have 
\begin{equation*}
    c_n = \frac{8C_\gamma}{n} + \frac{F}{n^4} + \frac{8Fd}{b'C_\mu^2n^{13/4}}\left(1 - \frac{1}{2n^{1/2}}\right).
\end{equation*}Nonnegativity is immediate since $1-\frac{1}{2n^{1/2}} > 0 $ for $n\geq 1$. Therefore, all the terms in $c_n$ are nonnegative for $n\geq 1$. To show $(c_n)_{n\geq 1}$ is decreasing, define the continuous extension $c(x)$, for $x\geq 1$, as 
\begin{equation*}
    c(x) = \frac{8C_\gamma}{x} + \frac{F}{x^4} + \frac{8Fd}{b'C_\mu^2x^{13/4}}\left(1 - \frac{1}{2x^{1/2}}\right).
\end{equation*}Taking the derivative yields
\begin{equation*}
    c'(x) = -\frac{8C_\gamma}{x^2} - \frac{4F}{x^5} - \frac{26Fd}{b'C_\mu^2x^{17/4}}\left(1 - \frac{15}{26x^{1/2}}\right).
\end{equation*}For $x\geq1$, we have
\[
1-\frac{15}{26x^{1/2}}>0,
\]
and hence $c'(x)<0$. Therefore, $c(x)$ is decreasing on $[1,\infty)$, which implies that $(c_n)_{n\geq1}$ is decreasing. Hence, using the upper bound~(\ref{eq:thm4:ck-differences}) in~(\ref{eq:kl_ineq4}), we get 
\begin{align}
    \mathrm{KL}(\nu_{\tau_{n}}\|\pi) \leq \mathrm{KL}(\nu_{0}\|\pi) &-\frac{1}{2}\int_{0}^{\tau_n}\mathrm{FI}(\nu_t\|\pi)dt + A_1S_1 + A_2S_2 + A_3S_3 + c_1e_0^2\notag \\
    & + \frac{51}{4}\sum_{k=1}^n\gamma_k(C_1^2\sigma_k^2 + \varepsilon_{\sigma_k}^2 + (\alpha_k - 1)^2C_2^2\sigma_k^{-2}). \label{eq:kl_ineq5}
\end{align}We next show that the last term in~(\ref{eq:kl_ineq5}), capturing the effects of the noise and annealing schedules and the SGM error, is uniformly bounded.

For the noise schedule $(\sigma_k)_{k=1}^{n}$, using monotonicity, we obtain
\begin{align}
    \frac{51C_1^2}{4}\sum_{k=1}^n\gamma_k \sigma_k^2 & = \frac{51C_1^2\sigma_0^2}{4}\sum_{k=1}^n \gamma_k \notag \\
    & = \frac{51C_1^2\sigma_0^2C_\gamma}{4}\sum_{k=1}^n \frac{1}{k^{3/2}} \notag \\ 
    & \leq \underbrace{\frac{51C_1^2\sigma_0^2C_\gamma}{4}\sum_{k=1}^\infty \frac{1}{k^{3/2}}}_{\coloneqq C_\sigma} \label{eq:thm4:K-separation-defs}\\ 
    & < \infty. 
\end{align}

For the annealing and noise schedules $(\alpha_k)_{k=1}^{n}$ and $(\sigma_k)_{k=1}^n$, respectively, using monotonicity together with the bounds $\alpha_k\leq \alpha_0$ and $\sigma_k\geq \sigma_{\text{min}}$, we obtain
\begin{align}
    \frac{51C_2^2}{4}\sum_{k=1}^n\frac{\gamma_k(\alpha_k -1)^2}{\sigma_k^2} & \leq \frac{51C_2^2(\alpha_0-1)^2}{4\sigma_{\text{min}}^2}\sum_{k=1}^n\gamma_k\notag \\
    & \leq \underbrace{\frac{51C_2^2(\alpha_0-1)^2}{4\sigma_{\text{min}}^2}\sum_{k=1}^\infty\frac{1}{k^{3/2}}}_{\coloneqq C_\alpha}\label{eq:thm4:weighted-annealing-upper-bound} \\
    & < \infty.
\end{align}

Lastly, for the SGM error decay $(\varepsilon_{\sigma_k})_{k=1}^{n}$, we have
\begin{equation}\label{eq:sigma_err_bound}
    \frac{51}{4}\sum_{k=1}^n \gamma_k \varepsilon_{\sigma_k}^2 
    \leq 
    \underbrace{\frac{51}{4}\sum_{k=1}^{\infty} \mathcal{O}\left(\frac{1}{k^{5/2}}\right)}_{\coloneqq C_{\varepsilon_{\sigma}}} < \infty.
\end{equation}Using the upper bounds ~(\ref{eq:thm4:K-separation-defs}),~(\ref{eq:thm4:weighted-annealing-upper-bound}), and~(\ref{eq:sigma_err_bound}) to upper bound the last term in~(\ref{eq:kl_ineq5}), we obtain
\begin{align}
    \mathrm{KL}(\nu_{\tau_{n}}\|\pi) \leq \mathrm{KL}(\nu_{0}\|\pi) &-\frac{1}{2}\int_{0}^{\tau_n}\mathrm{FI}(\nu_t\|\pi)dt + A_1S_1 + A_2S_2 + A_3S_3 + c_1e_0^2 + C_\alpha + C_\sigma + C_{\varepsilon_\sigma}.\label{eq:thm4:kl-bound-classic}
\end{align}
Rearranging the terms, using the convexity of Fisher information and multiplying both sides by $2/\tau_n$, we get 
\begin{align}
    \mathrm{FI}(\Bar{\nu}_{\tau_n}\|\pi)
    &\leq \frac{1}{\tau_n}\int_{0}^{\tau_n}\mathrm{FI}(\nu_t\|\pi)\,dt \notag\\
    &\leq \frac{2\,\mathrm{KL}(\nu_0\|\pi)}{\tau_n} 
    + \frac{2}{\tau_n}\Bigl(A_1S_1 + A_2S_2 + A_3S_3 + c_1e_0^2 
    + C_\alpha + C_\sigma + C_{\varepsilon_\sigma}\Bigr), \label{eq:thm4:fi-boundd}
\end{align}where $A_1S_1 + A_2S_2 + A_3S_3 
+ C_\alpha + C_{\varepsilon_\sigma} + C_\sigma< \infty$ and does not depend on $n$. On the other hand, if $t\in\left[\tau_{n}, \tau_{n+1}\right]$, integrating~(\ref{eq:main_ineq4}) between $\tau_n$ and $t$ and dropping the negative integral over the Fisher information give us
\begin{align}
    \mathrm{KL}(\nu_t\|\pi) & \leq \mathrm{KL}(\tau_n\|\pi) + \frac{17(t-\tau_n)d(d+2)L_{f_1}^2}{4b} + \frac{17}{16}(t-\tau_n)\mu^2L_{f_2}^2(d+3)^3 \notag \\
    & \quad + 4(t-\tau_n)\gamma dL_m^2 \phi(\mu) -\frac{4(t-\tau_n)}{p}\left(1 + \frac{85\gamma^2L_m^2\phi(\mu)}{16}\right)(e_{n+1}^2-e_n^2) \notag \\
    & \quad + \frac{51(t-\tau_n)}{4}(C_1^2\sigma_k^2 + \varepsilon_{\sigma_k}^2 + (\alpha_k - 1)^2C_2^2\sigma_k^{-2}) \notag \\
    & \leq \mathrm{KL}(\nu_0\|\pi) + 2A_1S_1 + 2A_2S_2 + 2A_3S_3 + 2c_1e_0^2 + c_{n+1}e_n^2 + 2C_\alpha + 2C_{\varepsilon_\sigma} + 2C_\sigma. \label{eq:kl_bound_t}
\end{align}We obtain~(\ref{eq:kl_bound_t}) by bounding all terms except the $\mathrm{KL}$ divergence, $c_1e_0^2$, and $c_ne_n^2$ by their finite limits as $n\rightarrow\infty$. We then apply the bound on $\mathrm{KL}(\nu_{\tau_n}\|\pi)$ derived from~(\ref{eq:thm4:kl-bound-classic}). By Assumption~\ref{ass:lipschitzness} and Lemma~\ref{lem:zeroth-order}, the initial error satisfies $e_0<\infty$, and hence $c_1e_0^2<\infty$. In addition, following the steps from~(\ref{eq:thm1:bounded-err4sure}) to~(\ref{eq:thm2:e_k-is-bounded-uniformly}) in the proof of Theorem~\ref{thm:zo-sampling-id}, we have $e_n^2 <\infty$. Combining these with~(\ref{eq:kl_bound_t}) implies that \(\{\mathrm{KL}(\nu_t\|\pi)\mid t\geq0\}\) is uniformly bounded. By the convexity of the KL divergence, this implies that the sequence $\{\mathrm{KL}(\bar\nu_{\tau_n}\|\pi)\}_{n\in\mathbb{N}}$ is uniformly bounded as well. Since the sublevel sets of $\mathrm{KL}(\cdot\|\pi)$ are weakly compact, $(\bar\nu_{\tau_n})_{n\in\mathbb{N}}$ is tight. To establish that $\bar\nu_{\tau_n}\rightharpoonup\pi$ weakly, it suffices to verify that every cluster point of $(\bar\nu_{\tau_n})_{n\in\mathbb{N}}$ equal to $\pi$. 

\quad Consider a subsequence $(\bar\nu_{\tau_n})_{n\in\mathbb{N}}$ converging to some limit $\bar\nu$. Taking $n\to\infty$ in~(\ref{eq:thm4:fi-boundd}) and noting that $\tau_n\to\infty$, we obtain $\mathrm{FI}(\bar\nu_{\tau_n}\|\pi)\to0$. Therefore, the same holds along the subsequence. By the weak lower semicontinuity of the Fisher information along the subsequence, we have $\mathrm{FI}(\bar\nu\|\pi)=0$. Writing $\psi:=\frac{d\bar\nu}{d\pi}$, this means $\sqrt{\psi}\in\text{dom}~\mathcal{E}$ and $\mathcal{E}(\sqrt{\psi})=0$, where $\mathcal{E}$ denotes the Dirichlet energy (i.e., the squared $L^2(\pi)$-norm of the gradient; see Section 3 in \citet{balasubramanian2022towards}). Since $\nabla \log\pi$ is Lipschitz by Assumption~\ref{ass:smoothness}, $\pi$ has a continuous and strictly positive density on $\mathbb{R}^d$, so $\mathcal{E}(\sqrt{\psi})=0$, which implies that $\psi$ must be a constant $\pi$-a.e., hence $\bar\nu=\pi$.\qedbox

\

\section{Extended Experimental Results}\label{sec:toy_experiments}
In this section, we provide additional details on our experiments. The code is publicly available at \url{https://github.com/mberk-sahin/zo-posterior-sampling}, where additional implementation details can be found. 
\subsection{Numerical Validation}
Similar to the related works~\citep{sun2024provable, song2020improved}, we run ZO-APMC with an exponential annealing schedule:
\begin{equation}\label{def:exp-annealing-noise-schedules}
    \sigma_k\coloneqq \max\{\sigma_0\rho_2^k, \sigma_{\text{min}}\}, \quad \alpha_k = \max\{\alpha_0\sigma_k^2, 1\},
\end{equation}where $\rho_2$ is the decay rate and $k$ is the step index. We always choose $\alpha_0 \leq 1/\sigma_{\text{min}}^2$ so that $\alpha_k$ converges to 1. We note that our definition of annealing and noise schedules~(\ref{def:annealing-schedule-sigma-alpha}) with two different independent decay rates, which are $\rho_1$ and $\rho_2$, for each schedule is more general than this definition since one could easily recover~(\ref{def:exp-annealing-noise-schedules}) by setting $\rho_2$, $\alpha_0$, $\sigma_0$, and $\sigma_{\text{min}}$ properly. For our numerical validation results, we set $\sigma_0=10$, $\alpha_0 = 10$, $\rho_2=0.975$, $\sigma_{\text{min}} = 0$ and $\gamma = 0.1.$ For ZO estimator, we choose the smoothing parameter as $\mu = 10^{-4}$. We run ZO-APMC with 1000 sample points initialized with uniform distribution U$[-50,50]^2$ on $[-50,50]^2$ grid for $N=2000$ iterations. At each step, we use a Gaussian mixture model (GMM) to fit a distribution to the samples at intermediate steps, which allows us to compute the probability of an arbitrary value on $[-50,50]^2$ grid. Then, for each intermediate GMM distribution, we calculate the empirical Fisher information relative to target posterior whose analytical posterior can be calculated. We discretize the grid to $1000\times1000$ unit areas in $[-50,50]^2$ and calculate the Fisher information for each unit area. The total sum over the grid gives us the approximate relative Fisher information. As stated in Section~\ref{sec:toy_exps_main_text}, all results in the numerical validation experiments are obtained by repeating the same experimental procedure across 20 random seeds, each corresponding to a different randomly generated forward operator. Reported Fisher information and KL divergence values are averaged over these runs.

\begin{figure}[h]
    \centering
    \includegraphics[width=0.99\linewidth]{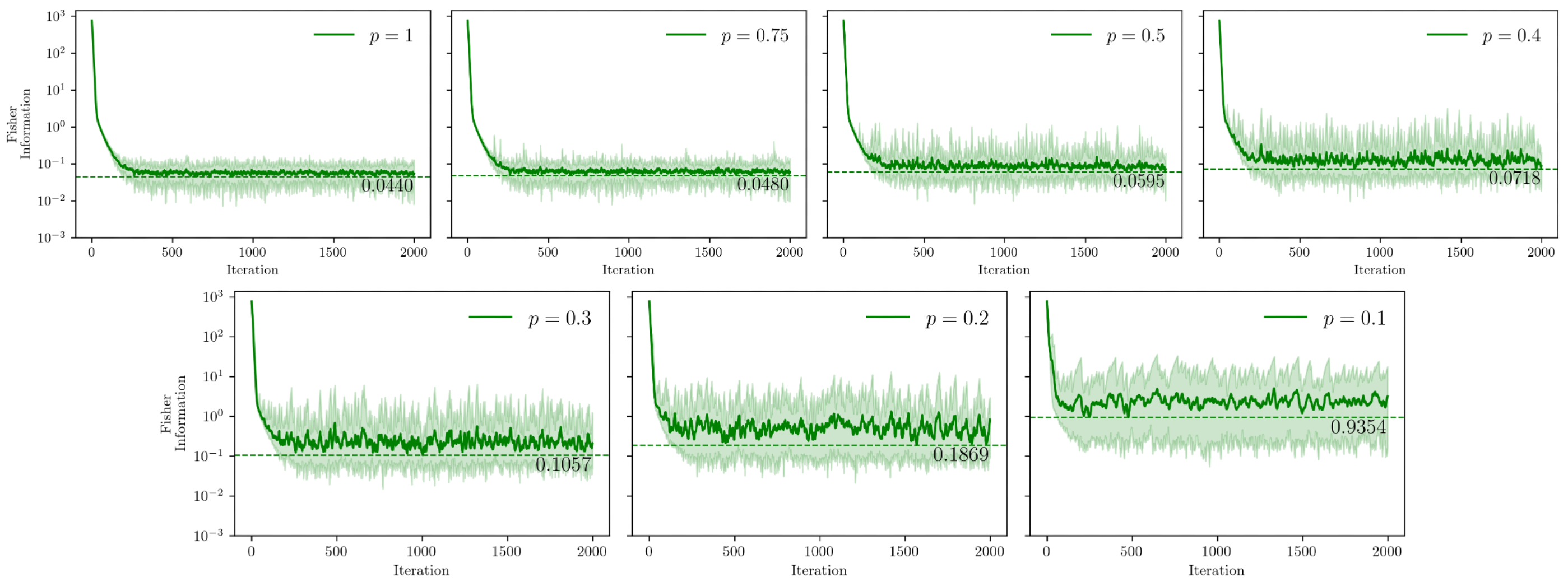}
    \caption{Effect of $p$ on the convergence of ZO-APMC to the true posterior distribution in terms of relative Fisher information. The solid lines show the mean values and shaded areas show the minimum and maximum ranges. }
    \label{fig:additional_peffect_results}
\end{figure} \textbf{Convergence in Fisher Information. }We provide additional experiments illustrating the effect of $p$ on FI convergence in Fig.~\ref{fig:additional_peffect_results}. At each iteration, ZO-APMC performs a zero-order estimate with a large batch size $b=10$ with probability $p$, while with probability $1-p$ it uses a smaller batch size $b'=1$, whose gradient estimate is aggregated with the previous step's update. 

To further demonstrate the benefit of the variance reduction mechanism, we compare naive ZO-APMC approach ($p=1$) discussed in after Proposition~\ref{prop:error_prop} with ZO-APMC with $p<1$. We use the same inverse problem setting and analytical prior explained in Section~\ref{sec:toy_exps_main_text}. We note that we approximate FI accurately because our generated samples live in $\sR^2$. In higher dimensions, this approximation quickly becomes unreliable and expensive. Importantly, in $\sR^2$, ZO estimators do not suffer from the curse of dimensionality, which is a primary motivation for our variance-reduction mechanism. To better mimic high-dimensional behavior and induce higher variance in ZO estimates, we inject synthetic noise into the likelihood score estimates with standard deviation $\sigma_{\mathrm{noise}} = 10$. Given a fixed budget of 6 score likelihood approximations on average, which corresponds to 12 forward model evaluations, we run ZO-APMC with different pairs of $(p,b)\in \{(1, 6), (0.4, 9), (0.2, 14), (0.1, 24)\}$. For each pair, we choose $b'$ such that $(1-p)b' + pb \le 6$ and the algorithm remains within the per-iteration budget. This ensures a fair comparison between ZO-APMC with $p < 1$ and the naive ZO-APMC $(p=1)$ approach.
\begin{figure}[h]
    \centering
    \includegraphics[width=0.7\linewidth]{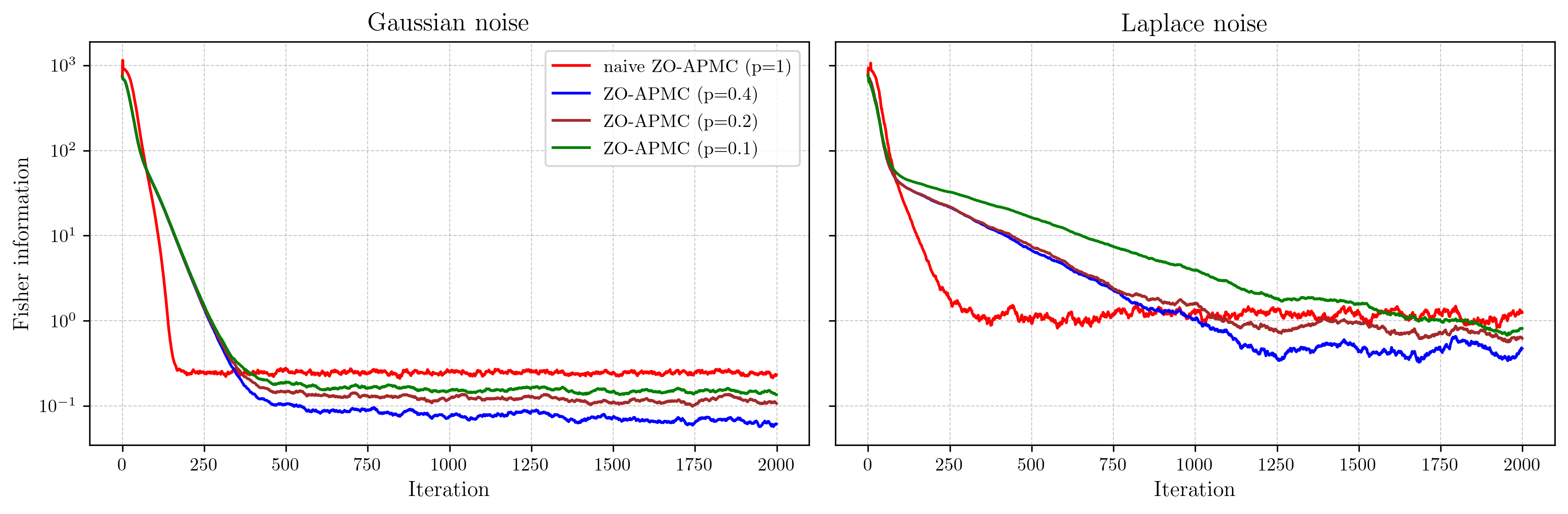}
    \caption{Comparison of naive ZO-APMC ($p=1$) and ZO-APMC with $p<1$ for Fisher information convergence under Gaussian (left) and Laplace (right) measurement noise types in an inverse problem. Each plot performs the same number of forward model evaluations per iteration on average.} 
    \label{fig:comp_results}
\end{figure}In addition to the Gaussian noise model, we also consider a Laplace noise model, which is commonly used in practice. Accordingly, we conduct experiments under both noise modeling assumptions. We present the results in Fig.~\ref{fig:comp_results}. They verify our theoretical results in Theorem~\ref{thm:zo-posterior-sampling-fi}. When $p=1$, ZO-APMC converges to the largest suboptimal point. This suggests that when $b$ is not large enough, the upper bound in Theorem~\ref{thm:zo-posterior-sampling-fi} becomes suboptimal and it can be made optimal only by increasing $b$. In contrast, when $p<1$, we can increase $b$ without changing the per-iteration budget. Doing so improves the convergence of the FI because ZO-APMC with different values of $p<1$ converge to smaller values. We further note that the convergence for $p=0.1$ is worse than $p=0.4$. This is consistent with Theorem~\ref{thm:zo-posterior-sampling-fi} because as $p$ reduces, it increases the discretization error (second term) at the upper bound. We present a similar comparison for statistical validation in the subsequent section after weak convergence results.

\textbf{Weak Convergence. } To verify the weak convergence of ZO-APMC to the target posterior, we consider the same synthetic inverse problem setting as in the previous experiments, but use time-varying schedules for the step size $\gamma_k$, refresh probability $p_k$, batch size $b_k$, and ZO smoothing parameter $\mu_k$. Following Theorem~\ref{thm:zo-posterior-sampling-id}, we employ decreasing schedules for $\gamma_k$, $p_k$, and $\mu_k$, together with an increasing schedule for $b_k$. Specifically, letting $t=k/N \in [0,1]$ denote the normalized iteration index, where $N$ is the total number of iterations, we linearly decay the step size from $\gamma_0=0.1$ to $\gamma_{\min}=0.001$, the refresh probability from its initial value $p_0$ (reported in Fig.~\ref{fig:weak_conv_exps}) to $p_{\min}=0.01$, and the smoothing parameter from $\mu_0=10^{-4}$ to $\mu_{\min}=10^{-5}$. Consistent with the theoretical relation $b_k=\lceil 1/p_k \rceil$, the batch size is increased linearly from $b_0=10$ to $b_{\max}=100$. Unlike the asymptotic schedules in Theorem~\ref{thm:zo-posterior-sampling-id}, where $\gamma_k,p_k,\mu_k\to0$ and $b_k\to\infty$, we use bounded practical schedules for numerical stability and finite computational cost. Moreover, consistent with Theorem~\ref{thm:zo-posterior-sampling-id}, the SGM estimation error is linearly decreased from $\varepsilon_{\sigma_0}=2.5$ to $0$ throughout sampling. Fig.~\ref{fig:weak_conv_exps} shows that, across different initial values of $p_0$, ZO-APMC achieves near-zero KL divergence, providing empirical support for the weak convergence of the generated sample distribution toward the target posterior predicted by Theorem~\ref{thm:zo-posterior-sampling-id}. Despite using only ZO likelihood score estimates~(\ref{def:proposed-vr-zo-estimate}), ZO-APMC closely matches the convergence behavior of APMC using exact likelihood scores.

\begin{figure}[h]
    \centering
    \includegraphics[width=0.5\linewidth]{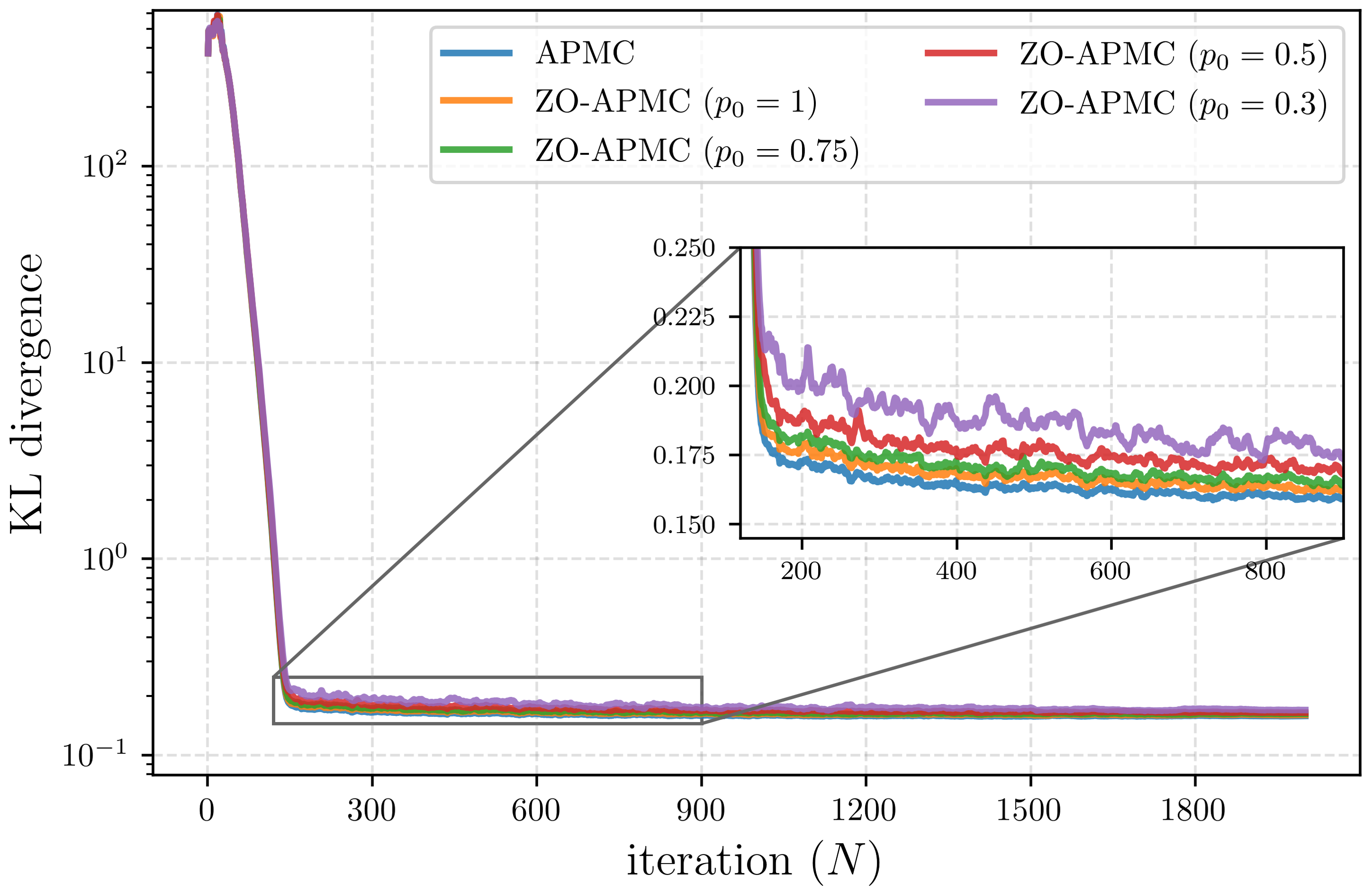}
    \caption{Convergence of APMC and ZO-APMC to the target posterior in $\mathrm{KL}$ divergence for varying initial values $p_0$. The probability parameter $p$ is decreased throughout sampling, and the reported values denote its initialization. ZO-APMC converges for all initial values of $p_0$.}
    \label{fig:weak_conv_exps}
\end{figure}

\begin{figure}[h]
    \centering
    \includegraphics[width=1\linewidth]{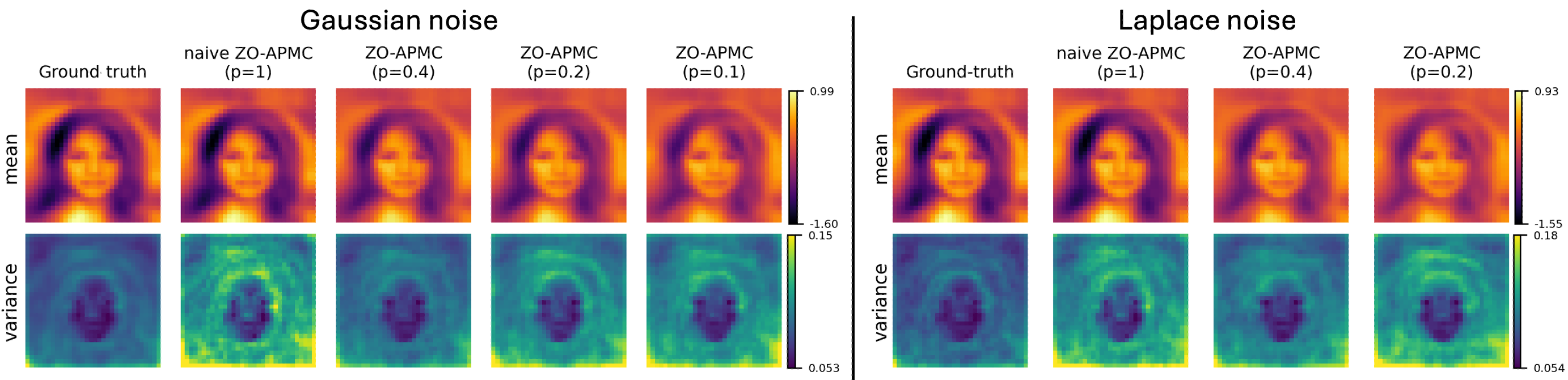}
    \caption{Comparison of naive ZO-APMC ($p=1$) and ZO-APMC with $p<1$ for estimating the ground truth posterior mean and variance statistics under Gaussian (left) and Laplace (right) measurement noise types. First row shows the posterior means and the second row shows the posterior variance. The colorbar for each statistic is shown at the end of its corresponding row. Each column shows the statistics estimated by ZO-APMC with specified probability $p$.}
    \label{fig:zo_sgd_stat}
\end{figure}

\subsection{Statistical Validation} The SGM used in this experiment is U-Net taken from~\citep{nichol2021improved} with some of its layers removed to process the $32\times 32$ images, which are taken from CelebA~\citep{liu2015deep} dataset. Each image is normalized to $[-1,1]$ and downscaled to $32\times 32$ pixels for simplicity. The forward operator is generated as random Gaussian matrix and for each test image, we inject a Gaussian noise with variance 0.01 as a measurement noise. We construct a bimodal distribution by selecting male and female images from the CelebA dataset and fitting a Gaussian mixture model (GMM) to the combined data. To ensure adequate separation, the two modes are shifted by $+1$ and $-1$. The SGM prior is then trained on samples drawn from this synthetic multimodal distribution. Because the synthetic Gaussian images lack the structural richness of natural images, the score network’s results on this dataset should not be taken as representative of its performance on real-world data. For comparison, we compute the target modes and posterior statistics using the statistics derived from male and female images in the CelebA dataset.

In addition to the statistical validation experiments in Section~\ref{sec:toy_exps_main_text}, we compare naive ZO-APMC ($p\!=\!1$) with ZO-APMC for $p\!<\!1$ under Gaussian and Laplace noise modeling, which are two widely used noise modelings. We run each ZO-APMC algorithm for $N=5000$ iterations to generate $1000$ samples with different values of $(p,b)\in\{(1, 2), (0.4, 4), (0.2, 7), (0.1, 12)\}$ and $b'$ is selected such that the number of gradient approximations per-iteration satisfies $(1-p)b' + pb \leq 2$. Because the Laplace likelihood and Gaussian prior are not a conjugate pair, closed-form posterior statistics are not available. Therefore, when modeling Laplace noise, we estimate the ground truth posterior statistics numerically by running standard Langevin Monte Carlo and get an empirical estimate of mean and variance. In this procedure, we use the likelihood score directly and do not apply any annealing schedule to the sampling parameters. We present the results in Fig.~\ref{fig:zo_sgd_stat}. Under Gaussian measurement noise, the naive ZO-APMC has significantly higher variance than the ground truth variance, which would require increasing the batch size and therefore the number of forward evaluations per iteration. Instead, our ZO-APMC with $p < 1$ reduces this variance by using a larger batch size $b$ together with a small $p$ as explained previously. Fig.~\ref{fig:zo_sgd_stat} shows that choosing $p=0.4$ reduces the sample variance significantly and it looks very similar to the ground truth variance. However, the variance reduction is not monotonic in $p$. For $p=0.2$ and $p=0.1$, we observe a slight increase in variance, although it remains lower than that obtained with $p=1$. This phenomenon follows from the trade-off characterized in Proposition~\ref{prop:error_prop} between variance reduction and propagated estimation error. Specifically, decreasing $p$ increases the influence of accumulated errors from previous iterations, leading to larger zeroth-order estimation error and preventing convergence to the target statistics. We observe a similar trend under Laplace noise modeling. The choice $p=0.4$ yields the most accurate posterior variance estimate, whereas decreasing $p$ further to $p=0.2$ leads to increased variance. Lastly, across all the experiments, we keep the ZO smoothing parameter fixed $\mu=10^{-4}$ to isolate its effect. This results in large batch sizes being used less frequently, thereby increasing the bias-induced error predicted by Proposition~\ref{prop:error_prop}. We observe this effect at $p=0.1$ in the estimated posterior mean under both noise models. The effect is stronger under Laplace noise modeling because the Fisher information upper bound in Theorem~\ref{thm:zo-posterior-sampling-fi} depends on the smoothness properties of the log-likelihood. While $\|\vy-\mA\vx\|_2^2$ is globally smooth with Lipschitz gradient constant $2\|\mA^\top \mA\|_{\mathrm{op}}$, $\|\vy-\mA\vx\|_1$ is non-smooth and only differentiable almost everywhere. Consequently, the zeroth-order smoothing bias is larger under Laplace noise modeling, although this effect can be mitigated by choosing a smaller value of $\mu$.

\subsection{MRI Reconstruction}
We evaluate the reconstruction quality of the samples generated by ZO-APMC and other baselines methods by using peak signal to noise (PSNR) ratio, structural similarity index measure (SSIM), normalized root mean square error (NRMSE), and mean standard deviation (SD). Given an estimate $\hat{\vx}\in\mathbb{R}^d$ and the ground truth $\vx_{\text{GT}}\in\mathbb{R}^d$, we define the error metrics as
\begin{align*}
    \text{MSE}(\hat{\vx},\vx_{\text{GT}}) 
&\coloneqq \frac{1}{d}\|\hat{\vx} - \vx_{\mathrm{GT}}\|_2^2, \quad \text{NRMSE}(\hat{\vx},\vx_{\text{GT}}) 
\coloneqq \frac{\|\hat{\vx}-\vx_{\text{GT}}\|_2}{\|\vx_{\text{GT}}\|_2}, \notag \\
& \text{PSNR}(\hat{\vx},\vx_{\text{GT}}) 
\coloneqq 10\log_{10}\!\Biggl(\frac{\max(\vx_{\mathrm{GT}})^2}{\text{MSE}(\hat{\vx},\vx_{\text{GT}})}\Biggr).
\end{align*}where $d$ is the dimensionality of $\vx$, and $\mathrm{max}$ denotes the maximum possible value of the signal (e.g., $1$ for normalized data or $255$ for 8-bit images). In addition, we compute SD as follows: for each test image, we first calculate the standard deviation across generated samples at each pixel location, and then average these pixel-wise standard deviations to obtain a mean standard deviation for the generated image. We run ZO-APMC and APMC using the annealing and noise schedules defined in~(\ref{def:exp-annealing-noise-schedules}), with step size $\gamma = 5\times10^{-6}$, initial noise and annealing parameters $\sigma_0 = 348$ and $\alpha_0 = 10^{4}$, minimum noise level $\sigma_{\min}=0.01$, decay rate $\rho_2 = 0.99$, and ZO smoothing parameter $\mu=10^{-4}$. Unless otherwise specified, these hyperparameters follow the implementation and settings of~\citet{sun2024provable}. For all other baselines, we utilize the implementations and parameters provided by InverseBench~\citep{zheng2025inversebench}. While our theoretical results do not imply gains in sampling speed, for completeness, we report runtimes of each method to generate a sample. The per-sample runtimes (seconds) are as follows: PnPDM: 68.3, DPS: 25.4, APMC: 23.77, Forward-GSG: 38.12, Central-GSG: 37.83, DPG: 60.54, and ZO-APMC: 50.53, measured on an NVIDIA H100 GPU.

\textbf{Ablation Study. }Among the inverse problems considered in this work, MRI reconstruction involves the largest image size ($256\times256$), which necessitates a larger batch size in our ZO estimator to accurately compute the forward model gradient. To identify the optimal value of $p$, we subsample examples from the validation set of FastMRI~\citep{zbontar1811fastmri} and evaluate reconstruction quality across different values, $p\in\{0.1,0.2,0.4,0.5\}$, as illustrated in Fig.~\ref{fig:effect_of_p_brainmri}. 
\begin{figure}[t]
    \centering
    \includegraphics[width=0.45\linewidth]{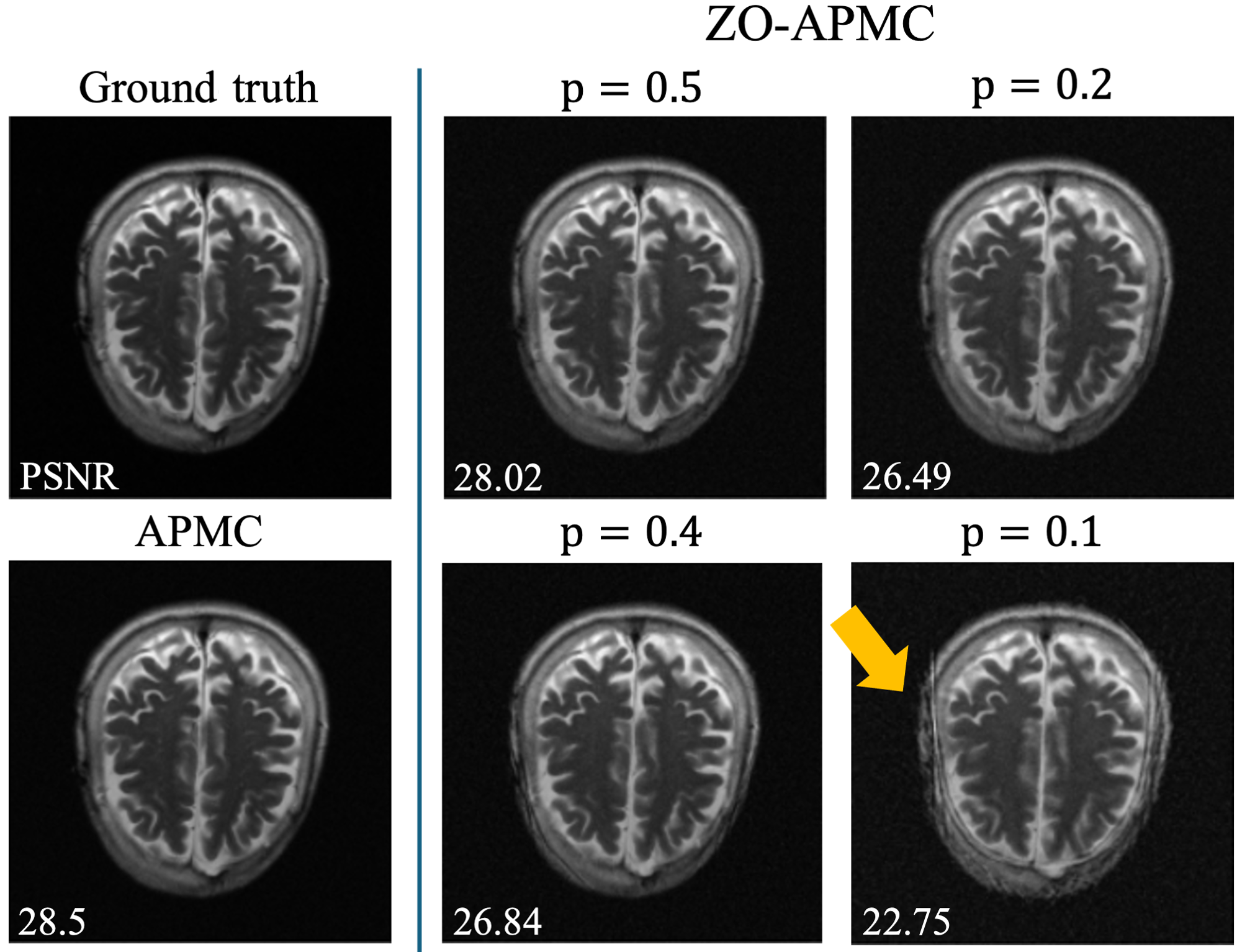}
    \caption{Comparison of the ground-truth brain MRI with APMC and ZO-APMC reconstructions for various probabilities $p\in\{0.1,0.2,0.4,0.5\}$, using a large batch size of $b=10^{4}$ and a small batch size of $b'=10^{3}$. PSNR values for each reconstruction are displayed in the lower-left corner of the corresponding image.}
    \label{fig:effect_of_p_brainmri}
\end{figure} As indicated by the orange arrow, reducing $p$ excessively while keeping $b$ fixed produces visible artifacts in the generated samples. ZO-APMC maintains reasonable reconstruction quality down to $p=0.2$. Even when using a smaller batch size of $b'=10^{3}$, an order of magnitude lower than $b$, in about half of the iterations on average, ZO-APMC maintains high reconstruction quality that is very close both visually and quantitatively to the reconstruction of APMC. As opposed to APMC, ZO-APMC achieves this without any gradient information and uses only forward model function evaluations. Because the performance gain beyond $p=0.2$ is not significant and the gap between $p=0.5$ and $p=0.2$ can be further reduced by averaging multiple parallel outputs, we set $p=0.2$ for our brain MRI inverse problem experiments. 

Moreover, ZO estimators are widely recognized in the literature for exhibiting high variance in high-dimensional settings, as they rely on first-order approximations of the function along random directions. To evaluate our proposed variance-reduction mechanism, we compare the reconstructions of our method with DPS and APMC, which do not assume black-box setting and have access to gradients of the forward model. Results in Fig.~\ref{fig:brain_mri_variance} show that although our proposed method ZO-APMC assumes black-box setting and uses noisy forward model evaluations to approximate the gradient of the forward model, it has similar variance compared to DPS and APMC, which assumes access to the gradients, thanks to our proposed variance-reduction mechanism.
\begin{figure}[t]
    \centering
    \includegraphics[width=0.6\linewidth]{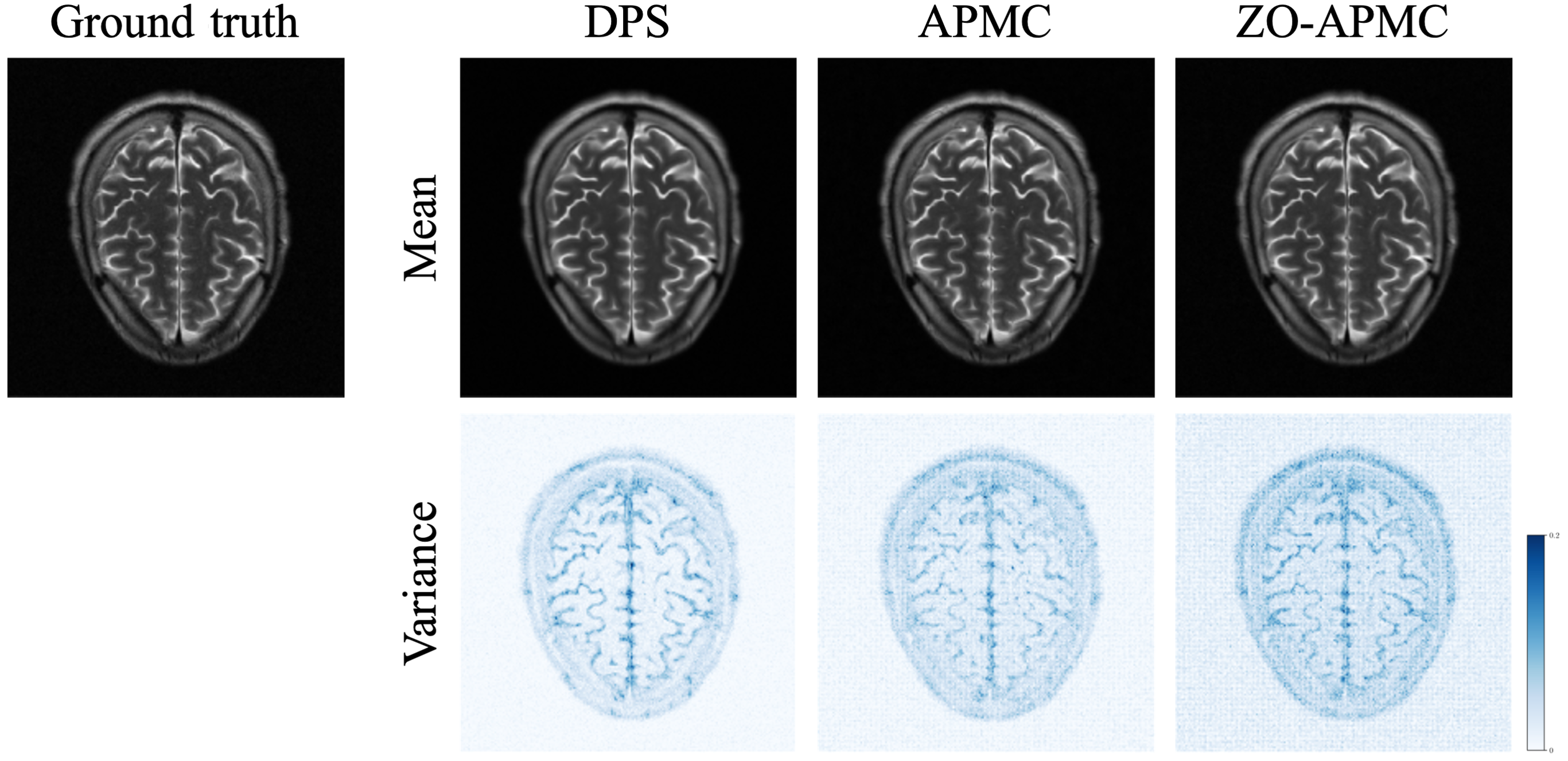}
    \caption{Comparison of the ground-truth brain MRI with reconstructions from ZO-APMC and the gradient-based approaches DPS and APMC. Each method generates 20 samples from the same measurements; the first row shows the mean reconstructions and the second row shows the corresponding variance maps. Owing to its variance-reduction mechanism, ZO-APMC produces variance maps comparable to those of the gradient-based algorithms despite relying on noisy evaluations of the forward model.}
    \label{fig:brain_mri_variance}
\end{figure}
\subsection{Black-Hole Imaging}

In black-hole imaging, very long baseline interferometry (VLBI) uses an array of ground-based telescopes. Each telescope pair $(a,b)$ at time $t$ produces a complex visibility $V^{a,b}_t$. To mitigate atmospheric and thermal phase errors, visibilities are combined into noise-robust \emph{closure} measurements~\citep{chael2018interferometric}: closure phases $\vy^{\mathrm{cph}}_{t,(a,b,c)}$ and log-closure amplitudes $\vy^{\mathrm{camp}}_{t,(a,b,c,d)}$. Following \citet{sun2021deep, zheng2024ensemble}, we use the following likelihood model:
\begin{equation}
\label{eq:bh_likelihood}
\ell(\vy \mid \vx)
= \sum_{t} \frac{\bigl\|\mathcal{A}^{\mathrm{cph}}_{t}(\vx) - \vy^{\mathrm{cph}}_{t}\bigr\|_2^{2}}{2\beta_{\mathrm{cph}}^{2}}
+ \sum_{t} \frac{\bigl\|\mathcal{A}^{\mathrm{camp}}_{t}(\vx) - \vy^{\mathrm{camp}}_{t}\bigr\|_2^{2}}{2\beta_{\mathrm{camp}}^{2}}
+ \frac{\rho}{2}\,\Bigl\|\sum_i x_i - y^{\mathrm{flux}}\Bigr\|_2^{2}.
\end{equation}
Here, $\mathcal{A}^{\mathrm{cph}}_{t}$ and $\mathcal{A}^{\mathrm{camp}}_{t}$ map an image $\vx$ to predicted closure phases and log-closure amplitudes, respectively; $\beta_{\mathrm{cph}}$ and $\beta_{\mathrm{camp}}$ are instrument-specific noise scales. The first two sums act as chi-squared penalties for the closure measurements, while the final term enforces the total-flux constraint with weight $\rho$ and target flux $y^{\mathrm{flux}}$. For our experiments, we use the GRMHD dataset~\citep{wong2022patoka}, the pre-trained SGM prior~\citep{sun2024provable}, and the forward model implementation and baseline methods provided by~\citet{zheng2025inversebench}. For EnKG, we adopt the hyperparameter settings recommended by~\citet{zheng2024ensemble}, while for the baseline methods we use the hyperparameters provided by~\citet{zheng2025inversebench}. For ZO-APMC, we use a batch size of $b=1024$, set $p=1$, and choose a ZO smoothing parameter of $\mu=0.01$. We use $p=1$ because the image size is relatively small ($64\times64$), making the large batch size practical. The per-sample runtimes (seconds) of each method are PnPDM: 25.84, DPS: 15.02, APMC: 14.32, Forward-GSG: 156.72, Central-GSG: 155.66, SCG: 213.78, DPG: 152.86, EnKG: 422.25, and ZO-APMC: 154.22, measured on an NVIDIA H100 GPU.

\subsection{Navier--Stokes Equation}\label{app:Navier--Stokes_exps}
In our experiments, we study the two-dimensional Navier–Stokes equations for a viscous, incompressible fluid in vorticity form on a torus. Let $u \in C([0,T]; H^r_{\mathrm{per}}((0,2\pi)^2,\mathbb{R}^2))$ for any $r>0$ denote the velocity field, and let $w=\nabla\times u$ be the vorticity. The initial vorticity is $w_0 \in L^2_{\mathrm{per}}((0,2\pi)^2;\mathbb{R})$, the viscosity coefficient is $\nu \in \mathbb{R}_{+}$, and the forcing term is $f\in L^2_{\mathrm{per}}((0,2\pi)^2;\mathbb{R})$. The solution operator $\mathcal{G}$ maps the initial vorticity to the vorticity at time $T$, i.e.\ $\mathcal{G}:w_{0}\mapsto w_{T}$. In our experiments, we implement $\mathcal{G}$ using a pseudo-spectral solver following \citet{he2007stability}:
\begin{align}
\partial_{t}w(x,t)+u(x,t)\cdot\nabla w(x,t)&=\nu\Delta w(x,t)+f(x), 
&&x\in(0,2\pi)^{2},\,t\in(0,T], \\
\nabla\cdot u(x,t)&=0, 
&&x\in(0,2\pi)^{2},\,t\in(0,T], \\
w(x,0)&=w_{0}(x), 
&&x\in(0,2\pi)^{2}.
\end{align}

The task is to infer the initial vorticity field from noisy and sparsely observed vorticity data at time $T=1$. Since Eq.~(21) admits no closed-form solution, the corresponding derivative of the solution operator is also unavailable. Furthermore, the computation of accurate numerical derivatives via automatic differentiation through the solve is challenging since the extensive computation graph can span thousands of discrete time steps.

We follow the approach in~\citet{zheng2024ensemble, zheng2025inversebench} and first solve the equation up to time $T=5$ starting from random Gaussian initial conditions, which are highly nontrivial due to the nonlinearity of the Navier–Stokes equations. We use the SGM-prior, which was pre-trained over $20{,}000$ vorticity fields, and use the test set consisting of $10$ samples with size of $128\times 128$ from InverseBench. For EnKG, we use the hyperparameters recommended by \citet{zheng2024ensemble}, while for the baseline methods we adopt the settings provided by \citet{zheng2025inversebench}. For ZO-APMC, we set the ZO smoothing parameter to $\mu=0.01$, batch size to $b=1024$ and use $p=1$. We observed degraded performance for $p<1$, which may arise from the violation of Assumption~\ref{ass:lipschitzness}, as the Navier--Stokes forward operator exhibits strong nonlinearity arising from the underlying PDE dynamics. Quantitative results are presented in Table~\ref{tab:navierstokes}. Our method outperforms Forward-GSG, Central-GSG, and SCG, and achieves performance comparable to DPG, while additionally providing theoretical guarantees of convergence to the target posterior that the baseline methods lack. Table~\ref{tab:navierstokes} also shows that EnKG outperforms ZO-APMC in this setting. One possible explanation is that ZO-APMC estimates gradients using Gaussian perturbations, which may move inputs off the data manifold and adversely affect solver stability in highly nonlinear PDE-based problems such as Navier--Stokes. In contrast, EnKG avoids such perturbations and appears more robust in this regime. We refer the reader to Section 4.2 of~\citet{zheng2024ensemble} for a more detailed discussion. The per-sample runtimes (seconds) of each method are Forward-GSG: 4536.87, Central-GSG: 4534.32, SCG: 2112.72, DPG: 5133.93, EnKG: 3422.25, and ZO-APMC: 4531.61, measured on an NVIDIA H100 GPU. 
\begin{table}[H]
    \centering
    \caption{Quantitative results for the Navier–Stokes inverse problem. For noise level $\sigma_{\text{noise}}$, the best-performing method is shown in \textbf{bold}. Baseline results are taken from~\citet{zheng2024ensemble}.}
    \label{tab:navierstokes}
    {\small
    \begin{tabular}{lccc}\toprule
           & NRMSE ($\sigma_\text{noise}=0$)$\downarrow$ & 
             NRMSE ($\sigma_\text{noise}=1$)$\downarrow$  &
             NRMSE ($\sigma_\text{noise}=2$)$\downarrow$ \\
           \midrule
Forward-GSG & 1.687 & 1.612 & 1.454\\
Central-GSG & 2.203 & 2.117 & 1.746 \\
SCG         & 0.908 & 0.928 & 0.966 \\
DPG         & 0.325 & 0.408 & 0.466 \\
EnKG        & \textbf{0.120} & \textbf{0.191} & \textbf{0.294} \\
\hdashline \addlinespace[2pt]
ZO-APMC (Ours)  & 0.459 & 0.463 & 0.472\\
\bottomrule
\end{tabular}}
\end{table}


\end{document}